\newcommand*{\tran}{^{\mkern-1.5mu\mathsf{T}}}
\newcommand\norm[1]{\left\lVert#1\right\rVert}
\newcommand{\beginsupplement}{%
        \setcounter{table}{0}
        \renewcommand{\thetable}{S\arabic{table}}%
        \setcounter{figure}{0}
        \renewcommand{\thefigure}{S\arabic{figure}}%
     }
\newtheorem{theorem}{Theorem}
\newtheorem{corollary}{Corollary}     
\newtheorem{remark}{Remark}           
\def\1{\bm{1}}
\def\vtheta{{\bm{\theta}}}
\def\vd{{\bm{d}}}
\def\ve{{\bm{e}}}
\def\vh{{\bm{h}}}
\def\vs{{\bm{s}}}
\def\vx{{\bm{x}}}
\def\vz{{\bm{z}}}
\def\mA{{\bm{A}}}
\def\mC{{\bm{C}}}
\def\mD{{\bm{D}}}
\def\mF{{\bm{F}}}
\def\mI{{\bm{I}}}
\def\mJ{{\bm{J}}}
\def\mR{{\bm{R}}}
\def\mW{{\bm{W}}}
\DeclareMathAlphabet{\mathsfit}{\encodingdefault}{\sfdefault}{m}{sl}
\SetMathAlphabet{\mathsfit}{bold}{\encodingdefault}{\sfdefault}{bx}{n}
\def\sR{{\mathbb{R}}}
\renewcommand{\eqref}[1]{~(\ref{#1})}
\title{Bifurcations and loss jumps in RNN training}
\author{%
  Lukas Eisenmann\textsuperscript{1,2,*}, Zahra Monfared\textsuperscript{1,*}, Niclas Göring\textsuperscript{1,2}, and Daniel Durstewitz\textsuperscript{1,2,3} \\ \\
  \{lukas.eisenmann,zahra.monfared,daniel.durstewitz\}@zi-mannheim.de\\
  \textsuperscript{1}Department of Theoretical Neuroscience, Central Institute of Mental Health, \\Medical Faculty Mannheim,
Heidelberg University, Mannheim, Germany \\\textsuperscript{2}Faculty of Physics and Astronomy, Heidelberg University, Heidelberg, Germany\\ 
\textsuperscript{3}Interdisciplinary Center for Scientific Computing, Heidelberg University\\ 
\textsuperscript{*}These authors contributed equally\\}
\begin{document}

\maketitle

\begin{abstract}
Recurrent neural networks (RNNs) are popular machine learning tools for modeling and forecasting sequential data and for inferring dynamical systems (DS) from observed time series. Concepts from DS theory (DST) have variously been used to further our understanding of both, how trained RNNs solve complex tasks, and the training process itself. Bifurcations are particularly important phenomena in DS, including RNNs, that refer to topological (qualitative) changes in a system's dynamical behavior as one or more of its parameters are varied. Knowing the bifurcation structure of an RNN will thus allow to deduce many of its computational and dynamical properties, like its sensitivity to parameter variations or its behavior during training. In particular, bifurcations may account for sudden loss jumps observed in RNN training that could severely impede the training process. Here we first mathematically prove for a particular class of ReLU-based RNNs that certain bifurcations are indeed associated with loss gradients tending toward infinity or zero. We then introduce a novel heuristic algorithm for detecting all fixed points and $k$-cycles in ReLU-based RNNs and their existence and stability regions, hence bifurcation manifolds in parameter space. In contrast to previous numerical algorithms for finding fixed points and common continuation methods, our algorithm provides \textit{exact} results and returns fixed points and cycles up to high orders with surprisingly good scaling behavior. We exemplify the algorithm on the analysis of the training process of RNNs, and find that the recently introduced 
technique of generalized teacher forcing completely avoids certain types of bifurcations in training. Thus, besides facilitating the DST analysis of trained RNNs, our algorithm provides a powerful instrument for analyzing the training process itself.
\end{abstract}
\section{Introduction}\label{intro}
Recurrent neural networks (RNNs) are common and powerful tools for learning sequential tasks or modeling and forecasting time series data \citep{lang2017understanding,Wang,Alahi_2016_CVPR,Park,Bouktif,Hossen}. 
Typically, RNNs are "black boxes," whose inner workings are hard to dissect. Techniques from dynamical system theory (DST) can significantly aid in this effort, as RNNs are formally discrete-time dynamical systems (DS) \citep{mikhaeil2021difficulty}. A better understanding of how RNNs solve their tasks is important for detecting failure modes and designing better architectures and training algorithms. In scientific machine learning, on the other hand, RNNs are often employed for reconstructing unknown DS from sets of time series observations  \citep{brenner_tractable_2022,kop,VLACHAS2020191,vlachas_2018,hernandez_nonlinear_2020,sch}, e.g. in climate or disease modeling. In this case, the RNN is supposed to provide an approximation to the flow of the observed system that reproduces all its dynamical properties, e.g., cyclic behavior in climate or epidemiological systems. In such scientific or medical settings, a detailed understanding of the RNN's dynamics and its sensitivity to parameter variations is in fact often crucial. 

Beyond understanding the dynamical properties of a once trained RNN, it may also be of interest to know how its dynamical repertoire changes with changes in its parameters. The parameter space of any DS is partitioned into regions (or sets) with topologically different dynamical behaviors by bifurcation curves (or, more generally, manifolds). Such bifurcations, qualitative changes in the system dynamics due to parameter variations, may not only be crucial in scientific applications where we use RNNs, for instance, to predict tipping points in climate systems or medical scenarios like sepsis detection \citep{patel}. They also pose severe challenges for the training process itself as qualitative changes in RNN dynamics may go in hand with sudden jumps in the loss landscape \citep{doya_bifurcations_1992,pascanu}. Although methods from DST have significantly advanced the field in recent years, especially with regards to algorithms for reconstructing nonlinear DS from data \citep{sch, pathak,zheng_2017}, progress is still hampered by the lack of efficient tools for analysing the dynamics of higher-dimensional RNNs and their bifurcations. In particular, methods are needed for exactly locating geometrical objects like fixed points or cycles in an RNN's state space, but current numerical techniques do not scale well to higher-dimensional scenarios and provide only approximate solutions \citep{katz_using_2018,golub_fixedpointfinder_2018,sussillo_opening_2013}. 

The contributions of this work are threefold: After first providing an introduction into bifurcations of piecewise-linear (ReLU-based) RNNs (PLRNNs), which have been extensively used for reconstructing DS from empirical data \citep{brenner_tractable_2022,kop}, we mathematically prove that certain bifurcations during the training process will indeed cause loss gradients to diverge to infinity, resulting in abrupt jumps in the loss, while others will cause them to vanish. 
As RNNs are likely to undergo several bifurcations during training on their way from some initial parameter configuration toward a dynamics that successfully implements any given task, this poses severe challenges for RNN training and may be one of the reasons for exploding and vanishing gradients \citep{doya_bifurcations_1992,pascanu,bengio,hoch97}. We then create a novel, efficient heuristic algorithm for \textit{exactly} locating all fixed points and $k$-cycles in PLRNNs, which can be used to delineate bifurcation manifolds in higher-dimensional systems. 
Our algorithm finds these dynamical objects in many orders of magnitude less time than an exhaustive search would take. 
Using this algorithm, we demonstrate empirically that steep cliffs in loss landscapes and bifurcation curves indeed tightly overlap, and that bifurcations in the system dynamics are accompanied by sudden loss jumps. 
Finally, we prove and demonstrate that the recently introduced technique of generalized teacher forcing (GTF) \citep{hess} completely eliminates certain types of bifurcation in training, providing an explanation for its efficiency.

\section{Related Work}\label{sec:related}
 \paragraph{DS analysis of trained RNNs}
 In many areas of science one is interested in identifying the nonlinear DS that explains a set of observed time series \citep{champ,hernandez_nonlinear_2020}. A variety of purely data-driven machine learning approaches have been developed for this purpose \cite{brunton_discovering_2016,rud,deSilva}, but mostly RNNs are used for the goal of reconstructing DS from measured time series \citep{brenner_tractable_2022,sch,kop, VLACHAS2020191,vlachas_2018,pathak, patel}. In scientific settings in particular, but also often in engineering applications, we seek a detailed understanding of the system dynamics captured -- or the dynamical repertoire produced -- by a trained RNN \citep{turner,maheshwaranathan20a,sussillo_opening_2013,chatziafratis2022,pmlr-v119-chatziafratis20a}. To analyze an RNN's dynamics, typically its fixed points are determined numerically, for instance by optimizing some cost function \citep{golub_fixedpointfinder_2018, mah,sussillo_opening_2013}, by numerically traversing directional fibers \citep{katz_using_2018}, or by co-training a switching linear dynamical system \citep{smith_reverse_2021}. 
 These techniques, however,  
 often scale poorly with dimensionality, provide only approximate solutions, and are not designed for detecting other dynamical objects like $k$-cycles. This is, however, crucial for understanding the complete dynamical repertoire of a trained RNN. 
PLRNNs are of particular interest in this context \citep{brenner_tractable_2022,sch,kop,mon1}, because their piecewise linear nature makes some of their properties analytically accessible \citep{sch,brenner_tractable_2022}, which is a tremendous advantage from the perspective of DST. 
In this work, we develop an efficient heuristic algorithm for locating a PLRNN's fixed points exactly, as well as its $k$-cycles.
\vspace{-.2cm}
\paragraph{Bifurcations and loss jumps in RNN training}
The idea that bifurcations in RNN dynamics could impede the training process is not new \citep{doya_bifurcations_1992,pascanu}. \citet{doya_bifurcations_1992}, to our knowledge, was the first to point out that even in simple single-unit RNNs with sigmoid activation function (saddle-node) bifurcations may occur as an RNN parameter is adapted during training. This may not only cause an abrupt jump in training loss, but could lead to situations where it is impossible, even in principle, to reach the training objective (the desired target output), as across the bifurcation point there is a \textit{discrete} change in network behavior \citep{doya_bifurcations_1992,zhu2023understanding, arora2022understanding}. \citet{pascanu} discussed similar associations between steep cliffs in RNN loss functions and bifurcations. Although profound for training success, this topic received surprisingly little attention over the years. \citet{haschke_input_2005} extended previous work by a more formal treatment of bifurcation boundaries in RNNs, and \citet{marichal_study_2009} examined fold bifurcations in RNNs. The effects of bifurcations and their relation to exploding gradients in gated recurrent units (GRUs) was investigated in \citet{kanai_preventing_2017}. \citet{ribeiro_beyond_2020} looked at the connection between dynamics and smoothness of the cost function, but failed to find a link between bifurcations and jumps in performance. In contrast, \citet{rehmer_effect_2022} observed large gradients at bifurcation boundaries and concluded that bifurcations can indeed cause problems in gradient-based optimization. To the best of our knowledge, we are, however, the first to \textit{formally prove} a direct link between bifurcations and the behavior of loss gradients, and to derive a systematic and efficient algorithmic procedure for identifying bifurcation manifolds for a class of ReLU-based RNNs.
\vspace{-.2cm}
\section{Theoretical analysis}\label{sec:bif}
In this paper we will focus on PLRNNs as one representative of the wider class of ReLU-based RNNs, but similar derivations and algorithmic procedures could be devised for any type of ReLU-based RNN (in fact, many other types of ReLU-based RNNs could be brought into the same functional form as PLRNNs; e.g. \cite{brenner_tractable_2022}). We will first, in sect. \ref{sec:theoretical-bifs}, provide some theoretical background on bifurcations in PLRNNs, and illustrate how existence and stability regions of fixed points and cycles could be analytically computed for low dimensional ($2d$) PLRNNs. In sect. \ref{sec:theorems-bif} we will then state two theorems regarding the association of bifurcations and loss 
gradients in training. It turns out that for certain types of bifurcations exploding or vanishing gradients are inevitable in gradient-based training procedures like Back-Propagation Through Time (BPTT). 
\subsection{Bifurcation curves in PLRNN parameter space}\label{sec:theoretical-bifs}
The PLRNN, originally introduced as a kind of discrete time neural population model \citep{dur}, has the general form
\vspace{-.2cm}
\begin{align}\label{eq-rnn-kol}
\vz_t = F_{\boldsymbol\theta}(\vz_{t-1},\vs_{t}) =  \mA \, \vz_{t-1} +  \mW \phi(\vz_{t-1})  + \mC \vs_t +  \vh, 
\end{align}
where $\vz_{t} \in \mathbb{R}^{M}$ is the latent state vector and $\boldsymbol\theta$ are system parameters consisting of diagonal matrix $\mA \in \mathbb{R} ^{M\times M}$ (auto-regression weights), off-diagonal matrix $\mW \in \mathbb{R} ^{M\times M}$ (coupling weights), $\phi(\vz_{t-1})=\max(\vz_{t-1}, 0)$ is the element-wise rectified linear unit ($ReLU$) function, $\vh\in\sR^{M}$ a constant bias term, and $\vs_t \in \sR^{K}$ represents external inputs weighted by $\mC\in\sR^{M\times K}$. The original formulation of the PLRNN is stochastic \citep{dur,kop}, with a Gaussian noise term added to eq. \eqref{eq-rnn-kol}, but here we will consider the deterministic variant.

Formally, like other ReLU-based RNNs, PLRNNs constitute piecewise linear (PWL) maps, a sub-class of piecewise smooth (PWS) discrete-time DS. 
Define $\mD_{\Omega(t)} \coloneqq \mathrm{diag}(\vd_{\Omega(t)})$ as a diagonal matrix with indicator vector $\vd_{\Omega(t)} \coloneqq \left(d_1, d_2, \cdots, d_M \right)$ such that $d_m(z_{m,t})=: d_m = 1$ whenever $z_{m,t}>0$, and zero otherwise. Then \eqref{eq-rnn-kol} can be rewritten as 
\begin{align}\label{eq-cx-plrnn2}
 \vz_t \, = \, F_{\boldsymbol\theta}(\vz_{t-1}) & \, =\, (\mA + \mW \mD_{\Omega(t-1)}) \vz_{t-1} \,+ \, \vh 
=:\, \mW_{\Omega(t-1)} \, \vz_{t-1} \,+ \, \vh,
\end{align}
where we have ignored external inputs $\vs_t$ for simplicity. There are in general $2^M$ different configurations for matrix $\mD_{\Omega(t-1)}$ and hence for matrix $\,\mW_{\Omega({t-1})}$, dividing the phase space into $2^M$ sub-regions separated by switching manifolds (see Appx. \ref{ap-2dPLRNN} for more details). 

Recall that fixed points of a map $\vz_t = F_{\boldsymbol\theta}(\vz_{t-1})$ are defined as the set of points for which we have $\vz^* = F_{\boldsymbol\theta}(\vz^*)$, and that the type (node, saddle, spiral) and stability of a fixed point can be read off from the eigenvalues of the Jacobian $\mJ_t :=  \frac{\partial F_{\boldsymbol\theta}(\vz_{t-1})}{\partial \vz_{t-1}} =  \frac{\partial \vz_t}{\partial \vz_{t-1}}$ evaluated at $\vz^*$ \cite{alligood1996, per}. Similarly, a $k$-cycle of map $F_{\boldsymbol\theta}$ is a periodic orbit $\{\vz^*_1, \vz^*_2, \dots ,\vz^*_k\}$ such that each of the periodic points $\vz^*_i, i=1 \dots k$, is distinct, and is a solution to the equation $\vz^*_i = F^k_{\boldsymbol\theta}(\vz^*_i)$, i.e. the $k$ times iterated map $F_{\boldsymbol\theta}$. Type and stability of a $k$-cycle are then determined via the Jacobian $\, \prod_{r=1}^{k} \mJ_{t+k-r} = \prod_{r=1}^{k} \frac{\partial \vz_{t+k-r}}{\partial \vz_{t+k-r-1}} = \frac{\partial \vz_{t+k-1}}{\partial \vz_{t-1}}$. 
Solving these equations and computing the corresponding Jacobians thus allows to determine all existence and stability regions of fixed points and cycles, where the latter are a subset of the former, bounded by bifurcation curves (see Appx. \ref{app-bifcurves} for more formal details).

To provide a specific example, assume $M=2$ and fix -- for the purpose of this exposition -- parameters $w_{12} = w_{22}=0$, such that we have 
     \begingroup
    \allowdisplaybreaks
     \begin{align}\label{params-M-2}
    & \mW_{\Omega^1}= \mW_{\Omega^3} = \begin{pmatrix}
    a_{11} &  0\\[1ex]
    0 & a_{22}
    \end{pmatrix} \,, 
     \hspace{1cm}
    \mW_{\Omega^2}=\mW_{\Omega^4}= \begin{pmatrix}
    a_{11} + w_{11} &  0\\[1ex]
    w_{21} & a_{22}
    \end{pmatrix}\,, 
     \end{align}
     \endgroup
i.e. only one border which divides the phase space into two distinct sub-regions (see Appx. \ref{ap-2dPLRNN}). 
For this setup, 
Fig. \ref{fig:MAB_1}A provides examples of analytically determined stability regions for two low order cycles in the 
$(a_{11}, \, a_{11}+w_{11})$-parameter plane (see Appx. \ref{app-bifcurves}).  
Note that there are regions in parameter space where two or more stability regions overlap: In these regions we have \textit{multi-stability}, the co-existence of different attractor states in the PLRNN's state space. 

As noted above, bifurcation curves delimit the different stability regions in parameter space and are hence associated with abrupt changes in the topological structure of a system's state space. In general, there are many different types of bifurcations through which dynamical objects can come into existence, disappear, or change stability (see, e.g., \cite{alligood1996,mon,per}), the most common ones being saddle node, transcritical, pitchfork, homoclinic, and Hopf bifurcations. In comparison with smooth systems, bifurcation theory of PWS (or PWL) maps includes additional dynamical phenomena related to the existence of borders in the phase space \citep{avr}. 
\textit{Border-collision bifurcations (BCBs)} arise when for a PWS map a specific point of an invariant set  
collides with a border and this collision leads to a qualitative change of dynamics \citep{avr2,avr,nus95}. 
More specifically, a BCB occurs, if for a PWS map $\vz_t \, = \, F_{\boldsymbol\theta}(\vz_{t-1})$ a fixed point or $k$-cycle either \textit{crosses} the switching manifold $\sum_{i} := \{ \vz \in \mathbb{R}^n : \ve_i\tran \vz  = 0 \}$ transversely at $ \theta = \theta^*$ and its qualitative behavior changes in the event, or if it \textit{collides} on the border with another fixed point or $k$-cycle and both objects disappear \citep{banerjee2000bifurcations,nusse1992border}. 
\textit{Degenerate transcritical bifurcations (DTBs)} occur when a fixed point or a periodic point of a cycle tends to infinity and one of its eigenvalues tends to $1$ by variation of a parameter. Specifically, let $\Gamma_k, k \geq 1,$ be a fixed point or a $k$-cycle with the periodic points $\{\vz^*_1, \vz^*_2, \dots ,\vz^*_k\}$, and assume $\, \lambda^i$ denotes an eigenvalue of the Jacobian matrix at the periodic point $\vz^*_i, i \in \{ 1, 2, \cdots, k \}$. Then $\Gamma_k$ undergoes a DTB at $ \theta = \theta^*$, if $\lambda^i (\theta^*) \to +1$ and $\norm{\vz^*_i} \to \infty$. 
$\Gamma_k$ undergoes a \textit{degenerate flip bifurcation (DFB)}, iff $\lambda^i (\theta^*) = - 1$ and the map $F^k$ has locally, in some neighborhood of $\vz^*_i$, infinitely many $2$-cycles at $ \theta = \theta^*$. 
A \textit{center bifurcation (CB)} occurs, if $\Gamma_k$ has a pair of complex conjugate eigenvalues $\lambda_{1,2}$ and locally becomes a center at the bifurcation value $\theta = \theta^*$, i.e. if its eigenvalues are complex and lie on the unit circle ($|\lambda_{1,2} (\theta^*)| = 1$).
Another important class of bifurcations are 
\textit{multiple attractor bifurcations (MABs)}, discussed in more detail in Appx. \ref{app-MAB} (see Fig.~\ref{fig:overlap:MAB}). 
In addition to existence and stability regions of fixed points and cycles, in Appx. \ref{app-fp}-\ref{app-3cycle} we also illustrate how to analytically determine the types of bifurcation curves bounding the existence and stability regions of $k$-cycles 
(DTB, DFB, CB and BCB curves; see also Fig. \ref{fig: simple} for a $1d$ example).
\begin{figure}
    \centering
    \includegraphics[scale=0.45]{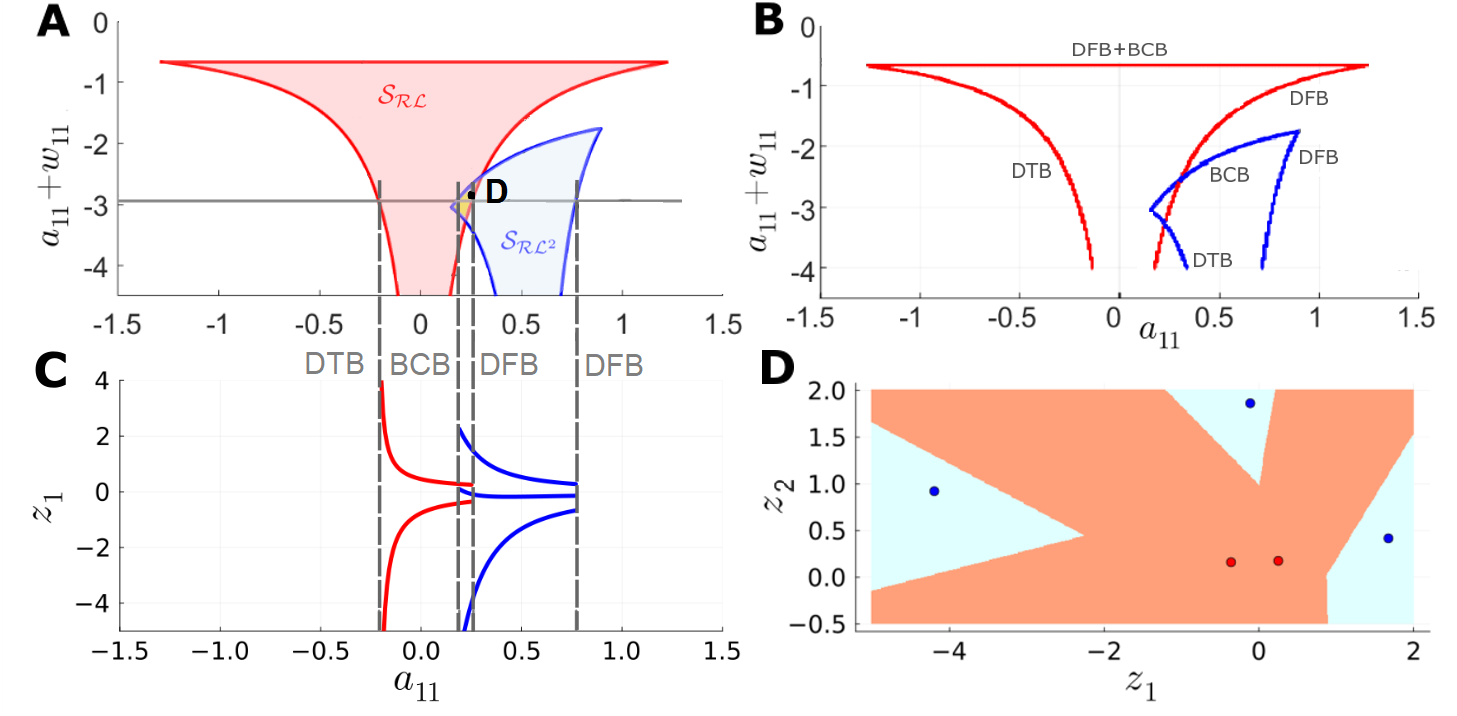}
     \vspace{-.4cm}
     \small \caption{A)
     Analytically calculated stability regions for a $2$-cycle ($\mathcal{S}_{\mathcal{R L}}$, red), a $3$-cycle ($\mathcal{S}_{\mathcal{R} \mathcal{L}^2}$, blue), and their intersection (yellow) in the $(a_{11}, \, a_{11}+w_{11})$-parameter plane for the system eq.\eqref{params-M-2} with $a_{22}=0.2, \,w_{21}=0.5$. B) Same as determined by SCYFI, with bifurcation curves bordering the stability regions labeled by the type of bifurcation (DTB = Degenerate Transcritical Bifurcation, BCB= Border Collision Bifurcation, DFB = Degenerate Flip Bifurcation). C) Bifurcation graph (showing 
     the stable cyclic points in the $z_1$ coordinate) along the cross-section in A indicated by the gray line, illustrating the different types of bifurcation encountered when moving in and out of the various stability regions in A. D) State space at the point denoted 'D' in A (for $a_{11}=0.253$, $a_{11}+w_{11}=-2.83$), where the $2$-cycle (red) and $3$-cycle (blue) co-exist for the same parameter settings, with their corresponding basins of attraction indicated by lighter colors.
     }\label{fig:MAB_1}
     \normalsize
\end{figure}
\subsection{Bifurcations and loss jumps in training}\label{sec:theorems-bif}
Here we will prove that 
major types of bifurcations discussed above 
are always associated with exploding or vanishing gradients in PLRNNs during training, and hence often with abrupt jumps in the loss. For this we may assume any generic loss function $\mathcal{L}(\boldsymbol{\theta})$, like a negative log-likelihood or a mean-squared-error (MSE) loss, and a gradient-based training technique like BPTT \citep{Rumelhart_1986_learning} or Real-Time-Recurrent-Learning (RTRL) that involves a recursion (via chain rule) through loss terms across time. 
The first theorem establishes that a DTB inevitably causes exploding gradients. 
\begin{theorem}\label{thm-1}
Consider a PLRNN of the form \eqref{eq-cx-plrnn2} with parameters $\boldsymbol{\theta}=\{\mA, \mW, \vh\}$. 
Assume that it has a stable fixed point or $k$-cycle $\Gamma_k$ $(k \geq 1)$ with $\mathcal{B}_{\Gamma_k}$ as its basin of attraction. If $\Gamma_k$ undergoes a degenerate transcritical bifurcation (DTB) for some parameter value $\theta=\theta_0 \in \boldsymbol{\theta}$, then the norm  
of the PLRNN loss gradient, $\norm{\frac{\partial \mathcal{L}_t}{\partial \theta}}$, tends to infinity at $\theta=\theta_0\,$ for every $\vz_1 \in \mathcal{B}_{\Gamma_k}$, i.e. $\lim_{\theta\to\theta_0} \norm{\frac{\partial \mathcal{L}_t}{\partial \theta}} = \infty$. 
\end{theorem}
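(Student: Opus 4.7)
The plan is to show that the divergence of the gradient is driven entirely by the steady-state behavior of BPTT near the attractor $\Gamma_k$, and that a DTB produces two compounding sources of blow-up: the attractor itself escapes to infinity, and the linearization around it becomes non-invertible in the sense that $(\mI-\mJ^{(k)})^{-1}$ loses boundedness.

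First, I would unfold the gradient using BPTT:
\[
\frac{\partial \mathcal{L}_t}{\partial \theta}\;=\;\frac{\partial \mathcal{L}_t}{\partial \vz_t}\sum_{s=1}^{t}\Bigl(\prod_{r=s+1}^{t}\mJ_r\Bigr)\,\frac{\partial F_{\boldsymbol\theta}(\vz_{s-1})}{\partial \theta}\,,
\]
where the explicit parameter derivative $\partial F/\partial \theta$ evaluates to $\ve_j$ (if $\theta=h_j$), to $z_{j,s-1}\ve_j$ (if $\theta=a_{jj}$), or to $\phi(z_{k,s-1})\ve_j$ (if $\theta$ is an entry of $\mW$). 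Since $\Gamma_k$ is stable and $\vz_1\in\mathcal{B}_{\Gamma_k}$, for large enough $t$ the iterate $\vz_t$ lies in an arbitrarily small tubular neighborhood of $\Gamma_k$, so the sequence of active linear regions $\Omega(s)$ eventually cycles through the fixed pattern $\Omega^*_1,\dots,\Omega^*_k$ containing the periodic points.

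Next, I would reduce the tail of the BPTT sum over full periods to multiplication by the cycle Jacobian $\mJ^{(k)}:=\prod_{r=1}^{k}\mW_{\Omega^*_r}$. Summing the resulting geometric tail yields exactly the implicit derivative of the periodic-point equation $\vz^*_i=F^k_{\boldsymbol\theta}(\vz^*_i)$, namely
\[
\frac{\partial \vz^*_i}{\partial \theta}\;=\;(\mI-\mJ^{(k)})^{-1}\Bigl[\tfrac{\partial \mJ^{(k)}}{\partial \theta}\vz^*_i \;+\;\text{terms bounded in }\theta\Bigr]\,.
\]
By the DTB hypothesis $\lambda^i(\theta)\to +1$, so $\mI-\mJ^{(k)}$ becomes singular in the direction of the associated eigenvector, while at the same time $\|\vz^*_i\|\to\infty$ by the second half of the DTB definition. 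The two effects compound multiplicatively, forcing $\|\partial \vz^*_i/\partial \theta\|\to\infty$. Combined with a loss sensitivity $\partial \mathcal{L}_t/\partial \vz_t$ that is generically non-vanishing along the diverging trajectory (and for MSE-type losses actually grows with $\|\vz_t\|$), this gives $\|\partial \mathcal{L}_t/\partial \theta\|\to\infty$, which is what we need.

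The hardest step is the uniform control needed to interchange the limits $t\to\infty$ and $\theta\to\theta_0$, so that transient terms from the initial segment of the BPTT sum cannot cancel the diverging steady-state contribution. This is handled by noting that, on the stable side of $\theta_0$, $\mJ^{(k)}$ is strictly contractive with spectral radius bounded away from $1$ for each fixed $\theta\ne\theta_0$, so the transient piece decays exponentially in $t$ and is dominated by the steady-state piece for $t$ large; the limit can then be taken first in $t$ and afterwards in $\theta$. A secondary subtlety, which is exactly the point that distinguishes DTB from BCB, is verifying that the activation-region sequence $(\Omega^*_1,\dots,\Omega^*_k)$ does not change as $\theta\to\theta_0$; this is true by assumption, because at a DTB the cycle escapes along its own linear region rather than colliding with a switching manifold, so BPTT operates with a single fixed linear map $\mJ^{(k)}$ per period block and the implicit-function derivation above is justified.
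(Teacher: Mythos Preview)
Your approach is essentially the paper's: both reduce the problem to the implicit-function derivative of the periodic point, $\frac{\partial \vz^*_i}{\partial \theta}=(\mI-\mJ^{(k)})^{-1}[\cdots]$, and then argue that the DTB condition $\lambda\to 1$ forces $\det(\mI-\mJ^{(k)})\to 0$ and hence blow-up, extending to the whole basin via the eventual periodicity of the linear-region sequence along converging orbits. Your explicit BPTT unfolding followed by a geometric-tail summation is just an alternative derivation of the same formula that the paper obtains by direct implicit differentiation of $\vz^*=F^k_{\boldsymbol\theta}(\vz^*)$; your additional remarks on the second divergence source $\|\vz^*_i\|\to\infty$ and on why the activation-region pattern is frozen at a DTB (in contrast to a BCB) are not made explicit in the paper's proof but are consistent with it and, if anything, make the argument slightly more robust.
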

\begin{proof}
\vspace{-.1cm}
See Appx. \ref{app-proof-thm1}
\end{proof}
\vspace{-.2cm}
However, bifurcations may also cause gradients to suddenly vanish, as it is the case for a BCB as established by our second theorem:
\begin{theorem}\label{thm-2}
Consider a PLRNN of the form \eqref{eq-cx-plrnn2} with parameters $\boldsymbol{\theta}=\{\mA, \mW, \vh\}$. Assume that it has a stable fixed point or $k$-cycle $\Gamma_k$ $(k \geq 1)$ with $\mathcal{B}_{\Gamma_k}$ as its basin of attraction. If $\Gamma_k$ undergoes a border collision bifurcation (BCB) for some parameter value $\theta=\theta_0 \in \boldsymbol{\theta}$, then the gradient of the loss function, $\frac{\partial \mathcal{L}_t}{\partial \theta}$, 
vanishes at $\theta=\theta_0 \,$ for every $\vz_1 \in \mathcal{B}_{\Gamma_k}$, i.e. $\lim_{\theta\to\theta_0} \norm{\frac{\partial \mathcal{L}_t}{\partial \theta}} = 0$. 
\end{theorem}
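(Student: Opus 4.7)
My plan is to mirror the BPTT unrolling used for Theorem~1, but instead of tracking a resolvent blow-up I would track the vanishing factors produced by the ReLU convention $d_m(0)=0$ when the cycle touches the switching manifold. The starting point is the chain-rule expansion
\begin{equation*}
\frac{\partial \mathcal{L}_t}{\partial \theta} \;=\; \frac{\partial \mathcal{L}_t}{\partial \vz_t}\,\sum_{s=1}^{t}\Bigl(\prod_{r=s+1}^{t}\mW_{\Omega(r-1)}\Bigr)\,\frac{\partial F_{\boldsymbol\theta}(\vz_{s-1})}{\partial \theta}.
\end{equation*}
Since $\vz_1\in\mathcal{B}_{\Gamma_k}$, the trajectory converges to $\Gamma_k$, so that both the one-step Jacobians $\mW_{\Omega(r-1)}=\mA+\mW\mD_{\Omega(r-1)}$ and the per-step direct terms $\partial F/\partial\theta$ become $k$-periodic up to an exponentially decaying transient. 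Stability of $\Gamma_k$ additionally makes the partial-product norms $\|\prod_{r=s+1}^{t}\mW_{\Omega(r-1)}\|$ decay geometrically in $t-s$, which keeps the whole sum uniformly bounded.

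The heart of the proof is the vanishing of the periodic direct contribution at the BCB. By definition, at $\theta=\theta_0$ some periodic point $\vz^*_i$ of $\Gamma_k$ satisfies $(\vz^*_i)_m=0$, so the convention $d_m(0)=0$ places a zero in the $m$-th diagonal slot of $\mD_{\Omega(i-1)}$. Expanding
\begin{equation*}
\frac{\partial F_{\boldsymbol\theta}(\vz^*_i)}{\partial \theta} \;=\; \frac{\partial \mA}{\partial \theta}\vz^*_i+\frac{\partial \mW}{\partial \theta}\,\mD_{\Omega(i-1)}\vz^*_i+\frac{\partial \vh}{\partial \theta},
\end{equation*}
one sees that every contribution that couples to the $m$-th coordinate is killed either by the factor $(\vz^*_i)_m=0$ or by the zero diagonal of $\mD_{\Omega(i-1)}$. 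Simultaneously, with $d_m=0$ the $m$-th row and column of $\mW_{\Omega(i-1)}$ collapse to the diagonal scalar $a_{mm}$, so the Floquet product $\mathbf{M}=\prod_r\mW_{\Omega(r-1)}$ around the cycle decouples along $\ve_m$; any residual $\ve_m$-component of the direct contribution is then mapped by the resolvent $(I-\mathbf{M})^{-1}$ into an $m$-subspace that no longer feeds back into the rest of the cycle.

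Combined with the exponentially decaying transient from the approach to $\Gamma_k$, these vanishings drive the full BPTT sum to zero, giving $\|\partial \mathcal{L}_t/\partial \theta\|\to 0$ as $\theta\to\theta_0$ for every $\vz_1\in\mathcal{B}_{\Gamma_k}$. The collision-and-disappearance variant of a BCB is handled identically, since the stable branch still has a periodic point approaching the switching manifold as $\theta\to\theta_0$; the only change is that in the limit the basin itself degenerates, but the asymptotic trajectory is still forced onto the collision point where the structural zeros apply.

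The main obstacle I anticipate is the third step, specifically the case $\theta_0=h_m$, where the raw partial $\partial \vh/\partial \theta=\ve_m$ is not killed directly by $\mD_{\Omega(i-1)}$. Handling this will require a careful one-sided limit analysis on each side of $\Sigma_m$ and an explicit computation showing that the resolvent's action on $\ve_m$, combined with the column structure imposed by $d_m=0$, cancels against the remaining periodic terms. A classification argument in the spirit of the transversality conditions that isolate DTBs in Theorem~1 — ruling out that the same boundary could simultaneously be a DTB or a CB — will be needed to confirm that the BCB hypothesis alone is enough to trigger this cancellation for all admissible $\theta_0\in\boldsymbol\theta$.
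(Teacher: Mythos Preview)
Your core mechanism — the zero coordinate $z^*_{i,m}=0$ at the BCB killing the immediate partial — matches the paper, but your route is considerably more elaborate than what the paper actually does, and your main anticipated obstacle is simply out of scope there. The paper does not unroll the full BPTT sum or invoke any Floquet-decoupling structure. It reuses the resolvent identity from Theorem~\ref{thm-1}, applied now at the periodic point $\vz_{t^{*k}-1}=F(\vz_{t^{*k}})$ (the image of the boundary point), and computes
\[
\frac{\partial^{+}\vz_{t^{*k}-1}}{\partial w_{nm}}=\mathbf{1}_{(n,m)}\,\mD_{\Omega(t^{*k})}\,\vz_{t^{*k}},\qquad
\frac{\partial^{+}\vz_{t^{*k}-1}}{\partial a_{mm}}=\mathbf{1}_{(m,m)}\,\vz_{t^{*k}},
\]
both of which vanish because $z_{m,t^{*k}}=0$. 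Crucially, the paper \emph{explicitly restricts} the argument to $\theta\in\{\mA,\mW\}$ (it writes ``for $\boldsymbol\theta=\mA,\mW$ we have\dots''), and in fact only to those entries $a_{mm}$ and $w_{nm}$ whose column index $m$ is the collapsing coordinate. The bias $\vh$ is never treated, so the one-sided limit analysis and transversality classification you anticipate are unnecessary for reproducing the paper's proof. The extension to arbitrary $\vz_1\in\mathcal{B}_{\Gamma_k}$ is then simply referred back to the basin-convergence argument already given for Theorem~\ref{thm-1}.

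Your decoupling step is therefore not needed for what the paper actually establishes, and it is also the weakest link in your sketch: for $k>1$ the other periodic points generically satisfy $z^*_{j,m}\neq 0$, so their per-step direct contributions do \emph{not} vanish, and your outline does not make precise how the resolvent action on $\ve_m$ would force a genuine cancellation of those terms. If your goal is to go beyond the paper and cover $\theta\in\vh$ or off-index entries of $\mA,\mW$, be aware that this would require a substantively new argument not contained in the paper's proof.
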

\begin{proof}
\vspace{-.3cm}
See Appx. \ref{app-proof-thm2}.
\end{proof}
\vspace{-.3cm}
\begin{corollary}\label{cor-1}
 Assume that the PLRNN \eqref{eq-cx-plrnn2} has a stable fixed point $\Gamma_1$ with $\mathcal{B}_{\Gamma_1}$ as its basin of attraction. If $\Gamma_1$ undergoes a degenerate flip bifurcation (DFB) for some parameter value $\theta=\theta_0 \in \boldsymbol{\theta}$, then this will always coincide with a BCB of a $2$-cycle, and as a result $\lim_{\theta\to\theta_0} \norm{\frac{\partial \mathcal{L}_t}{\partial \theta}}= 0$ for every $\vz_1 \in \mathcal{B}_{\Gamma_1}$. 
\end{corollary}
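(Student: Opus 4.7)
The plan is to reduce the corollary to Theorem \ref{thm-2} by showing that a DFB of the fixed point $\Gamma_1$ in a PLRNN is necessarily accompanied by a BCB of a 2-cycle at the very same parameter value $\theta_0$. The overall argument naturally splits into two parts: first, establish the coincidence claim \emph{DFB of $\Gamma_1$ implies BCB of a 2-cycle}; second, transfer the vanishing-gradient conclusion of Theorem \ref{thm-2} to initial conditions $\vz_1 \in \mathcal{B}_{\Gamma_1}$.

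For the first part, I would unpack the DFB condition using the piecewise-linear structure \eqref{eq-cx-plrnn2}. Let $\Omega$ be the linear region containing the fixed point $\vz^*$, so that locally $\vz_t = \mW_\Omega \vz_{t-1} + \vh$ with $\vz^* = \mW_\Omega \vz^* + \vh$. The eigenvalue condition $\lambda(\theta_0) = -1$ in the definition of DFB forces $\mW_\Omega$ to admit an eigenvector $\ve$ with $\mW_\Omega \ve = -\ve$. Then for every scalar $c$ with $\vz^* + c\ve \in \Omega$, linearity gives $F_{\boldsymbol\theta}(\vz^* + c\ve) = \vz^* - c\ve$ and hence $F_{\boldsymbol\theta}^2(\vz^* + c\ve) = \vz^* + c\ve$, producing the continuous family of 2-cycles that defines the DFB. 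Since $\Omega$ is an open polyhedral region bounded by switching manifolds $\Sigma_i = \{\vz : z_i = 0\}$ and the line $\{\vz^* + c\ve : c \in \mathbb{R}\}$ is generically not parallel to any $\Sigma_i$, there exists a finite $c^* \neq 0$ with $\vz_{\star} := \vz^* + c^*\ve \in \Sigma_i$. The 2-cycle $\{\vz_{\star}, \vz^* - c^*\ve\}$ therefore has one of its periodic points lying exactly on the switching manifold at $\theta = \theta_0$, which is precisely the defining configuration of a BCB for that 2-cycle. This proves the coincidence claim.

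For the second part, Theorem \ref{thm-2} directly yields $\|\partial \mathcal{L}_t/\partial \theta\| \to 0$ as $\theta \to \theta_0$ for any $\vz_1$ in the basin of the BCB 2-cycle; the subtlety is that the hypothesis of the corollary is $\vz_1 \in \mathcal{B}_{\Gamma_1}$. I would resolve this by observing that at $\theta = \theta_0$ the fixed point is only marginally stable along $\ve$, so a trajectory launched from any $\vz_1 \in \mathcal{B}_{\Gamma_1}$ no longer converges to $\vz^*$ but instead asymptotically lands on a 2-cycle of the degenerate family, determined by the projection of $\vz_1 - \vz^*$ onto $\ve$. The Jacobian chain-rule product that powers the vanishing in the proof of Theorem \ref{thm-2}, namely a product $\prod_s (\mA + \mW \mD_{\Omega(s)})$ along a trajectory that eventually approaches the switching-manifold point of the 2-cycle, arises for these trajectories as well, so the $\theta \to \theta_0$ limit of the gradient vanishes uniformly over $\vz_1 \in \mathcal{B}_{\Gamma_1}$.

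The main obstacle I expect is the last step: rigorously transferring the basin hypothesis of Theorem \ref{thm-2} from the BCB 2-cycle's basin to $\mathcal{B}_{\Gamma_1}$. At exactly $\theta = \theta_0$ the notion of basin is delicate because $\Gamma_1$ and a continuum of 2-cycles coexist. The cleanest workaround is not to reason about basins at $\theta_0$ itself but rather to take the limit $\theta \to \theta_0$ of gradients computed for $\theta$ in a one-sided neighborhood where either $\Gamma_1$ is the unique attractor, so that $\vz_1 \in \mathcal{B}_{\Gamma_1}$ is unambiguous, or the 2-cycle on the switching manifold is the attractor and inherits $\mathcal{B}_{\Gamma_1}$ in the limit. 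In both cases the Jacobian-product argument of Theorem \ref{thm-2} applies directly, and the limit is zero.
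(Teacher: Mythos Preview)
Your approach is genuinely different from the paper's. The paper proves the coincidence claim by an explicit algebraic comparison: it writes down the DFB curve of the fixed point (the locus where the characteristic polynomial of $\mW_\Omega$ vanishes at $-1$) and the BCB curve of the cross-region 2-cycle $\mathcal{O}_{\mathcal{RL}}$ (the locus where the first coordinate of one of its periodic points, computed from the closed-form expression \eqref{solution2cycle1}--\eqref{solution2cycle2}, vanishes), and simply observes that these two parameter conditions are identical equations. This is done in full for $M=2$ and then asserted ``analogously'' for general $M$. Your argument is instead geometric: use the $-1$-eigenvector $\ve$ to generate the degenerate family $\{\vz^*\pm c\ve\}$ of 2-cycles and follow the line until it meets a switching hyperplane.

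There is, however, a real gap in your first part. The 2-cycle $\{\vz_\star,\vz^*-c^*\ve\}$ you construct exists only \emph{at} $\theta_0$, as part of the DFB continuum; for $\theta\neq\theta_0$ the eigenvalue is no longer $-1$ and this object does not persist, so it cannot ``undergo'' a BCB in the sense of the paper (a cycle that crosses $\Sigma_i$ transversally as $\theta$ varies). What actually undergoes the BCB is the cross-region 2-cycle, the analogue of $\mathcal{O}_{\mathcal{RL}}$, which has one periodic point in each of two adjacent orthants and which is a stable cycle for $\theta$ on one side of $\theta_0$. The missing step is to identify your boundary-touching 2-cycle with the $\theta\to\theta_0$ limit of that cross-region cycle. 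This identification is easy once stated: by continuity of $F_{\boldsymbol\theta}$ on $\Sigma_i$, any 2-cycle of the single linear branch $\mW_\Omega$ with one point on $\Sigma_i$ is simultaneously a 2-cycle of the full piecewise map, and conversely any cross-region 2-cycle whose periodic point lands on $\Sigma_i$ becomes a 2-cycle of the single branch, forcing $\mW_\Omega$ to have eigenvalue $-1$. Adding this sentence closes the gap and in fact gives a cleaner, dimension-independent argument than the paper's formula comparison.

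On your second part: you are right that the paper simply invokes Theorem \ref{thm-2} without addressing why the vanishing-gradient conclusion, stated there for $\vz_1$ in the basin of the \emph{2-cycle}, transfers to $\vz_1\in\mathcal{B}_{\Gamma_1}$. Your one-sided-limit workaround is the natural resolution, and you identify the subtlety more carefully than the paper does.
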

\begin{proof}
\vspace{-.3cm}
See Appx. \ref{app-proof-cor1}.
\end{proof}
\vspace{-.2cm}
\vspace{-.2cm}
Hence, certain bifurcations will inevitably cause gradients to suddenly explode or vanish, and 
often induce abrupt jumps in the loss (see Appx. \ref{app-BNlossjumps} for when this will happen for a BCB). 
We emphasize that these results are general and hold \textit{for systems of any dimension}, as well as \textit{in the presence of inputs}. Since inputs do not affect the Jacobians in eqn.\eqref{eq-derivative}, \eqref{eq71} and \eqref{eq-posha}, they do not change the theorems (even if they would affect the Jacobians, Theorem \ref{thm-2} would be unaltered, and Theorem \ref{thm-1} could be amended in a straightforward way). Furthermore, since we are addressing bifurcations that occur during model training, from this angle inputs may simply be treated as either additional parameters (if piecewise constant) or states of the system (without changing any of the mathematical derivations). In fact, mathematically, any non-autonomous dynamical system (RNN with inputs) can always and strictly be reformulated as an autonomous system (RNN without inputs), see \citep{alligood1996, per,zhang2009controlling}. 

\section{Heuristic algorithm for finding PLRNN bifurcation manifolds}
\vspace{-.2cm}
\subsection{Searcher for fixed points and cycles (SCYFI): motivation and validation}\label{sec:SCYFI}
\begin{algorithm}[hbt]
				\caption{SCYFI} 
                \label{alg: cycle finder}
                The algorithm is iteratively run with $k=1 \cdots K_{max}$, with $K_{max}$ the max. order of cycles tested \\
                \textbf{Input:} PLRNN parameters $\mA$, $\mW$, $\vh$; \\
                \hspace*{3em} $\mathcal{L}=\{\mathcal{L}_n\}_{n=1}^{k-1}$: collection of all sets $\mathcal{L}_n=\{\{\vz^{(m)}_{l}\}_{l=1}^{n}\}^{M_n}_{m=1}$ of all lower order $n$-cycles \hspace*{3em} discovered so far, where 
                 $M_n$ is the number of found  cycles $\{\vz^{(m)}_{l}\}_{l=1}^{n}$ of order $n$, with \hspace*{3em} corresponding ReLU derivative matrices $\{\mD^{(m)}_{l}\}_{l=1}^{n}$. \\
                 \textbf{Parameters:} \newline\hspace*{3em} $N_{out}$: max. number of random               initialisations;\newline\hspace*{3em}  $N_{in}$: max. number of iterations \\
                \textbf{Output:} $\mathcal{L} \cup \mathcal{L}_k$; $\mathcal{L}_k$: set of all discovered $k$-cycles \\
                \begin{algorithmic}[1]
                    \State $\mathcal{L}_k=\{\}$
					\State $i \rightarrow 0$
					\While {$i<N_{out}\ldots$}
					  \State Select $k$ subregions $\mD_{init}=\{\mD_1, \mD_2,..., \mD_{k}\}$ at random with replacement
					\State $c \rightarrow 0$
					\While {$c<N_{in}\ldots$}
					\State Solve 
                         eq. \eqref{eq: cycle solution} 
                         for a cycle candidate $\{\vz^*_{l} \}_{l=1}^{k}$ with $\mW_{\Omega(k-r)}$ as defined in eq. \eqref{eq-cx-plrnn2} based \hspace*{3em}  on $\mD_{init}$
                    \State Determine $\{\mD^*_{l}\}_{l=1}^{k}$  based on the signs of the corresponding components of  $\{\vz^*_{l} \}_{l=1}^{k}$
					\If{$ \mD_{init} = \{\mD^*_{l}\}_{l=1}^{k}$ \textit{(self-consistency)} \textbf{and} $\forall 1 \leq s \leq k, \forall \{\vz^{(m)}_{l}\}_{l=1}^{k-s+1} \in \mathcal{L}_{k-s+1}: \hspace*{3em}  \{\vz^{(m)}_{l}\}_{l=1}^{k-s+1} \not\subseteq \{\vz^*_{l}\}_{l=1}^{k}$}
					  \State $\mathcal{L}_k \rightarrow \mathcal{L}_k \cup \{\{\vz^*_{l}\}_{l=1}^{k}\}$
					\State $i\rightarrow c \rightarrow0$
					\Else
					\State $\mD_{init} \rightarrow \{\mD^*_{l}\}_{l=1}^{k}$
					\EndIf 
					\State  $c \rightarrow c+1$
					\EndWhile
					\State $i \rightarrow i+1$
					\EndWhile

				\end{algorithmic} 
\end{algorithm} 
In sect. \ref{sec:theoretical-bifs} and Appx. \ref{app-fp}-\ref{app-3cycle} we derived existence and stability regions for fixed points and low order ($k \leq 3$) cycles in $2d$ PLRNNs with specific parameter constraints analytically. For higher-order cycles and higher-dimensional PLRNNs (or any other ReLU-type RNN) this is no longer feasible due to the combinatorial explosion in the number of subregions that need to be considered as $M$ and $k$ increase. Here we therefore introduce an efficient search algorithm for finding all $k$-cycles of a given PLRNN, which we call \textit{\textbf{S}earcher for \textbf{Cy}cles and \textbf{Fi}xed points}: \textbf{SCYFI} (Algorithm \ref{alg: cycle finder}). Once all $k$-cycles ($k \geq 1$) have been detected on some parameter grid, the stability-/existence regions of these objects and thereby the bifurcation manifolds can be determined.
$k$-cycles were defined in sect. \ref{sec:theoretical-bifs}, and for the PLRNN, eq. \eqref{eq-cx-plrnn2}, are given by the set of $k$-periodic points $\{\vz^*_1, \dots, \vz^*_l, \dots ,\vz^*_k\}$, where
\small
\begingroup
\allowdisplaybreaks
\begin{align}\label{eq: cycle solution}
		\vz^{*}_k = \hspace{-.08cm}\bigg(\mathds{1}- \prod_{r=0}^{k-1} \mW_{\Omega(k-r)} \bigg)^{-1} \hspace{-.02cm}
 \bigg[\sum_{j=2}^{k-1} \prod_{r=0}^{k-j} \mW_{\Omega(k-r)} + \mathds{1}\bigg] \vh,
\end{align}
 \endgroup
 \normalsize
if $\,\big(
\mathds{1}- \prod_{r=0}^{k-1} \mW_{\Omega(k-r)}\big)\,$ is invertible (if not, we are dealing with  
a bifurcation or a continuous set of fixed points). The other periodic points 
are $\vz_l = F^l(\vz^*_k), l=1, \cdots, k-1$, 
with corresponding matrices $ \,\mW_{\Omega(l)} \, = \, \mA + \mW \mD_{l} \,$. Now, if the diagonal entries in $\mD_l$ are consistent with the signs of the corresponding states 
$z^*_{ml}$, i.e. if $d_{mm}^{(l)}=1$ if $z^*_{ml}>0$ and $d_{mm}^{(l)}=0$ otherwise 
for all $l$, $\{\vz^*_1, \dots ,\vz^*_k\}$ is a true cycle of eq. \eqref{eq-cx-plrnn2}, otherwise we call it \textit{virtual}.	
To find a $k$-cycle, since an $M$-dimensional PLRNN harbors $2^M$ different linear sub-regions, there are approximately 
$2^{M k}$ different combinations of configurations of the matrices $\mD_l, l=1 \dots k$, to consider (strictly, mathematical constraints rule out some of these possibilities, e.g. not all periodic points can lie within the same sub-region/orthant). 

Clearly, for higher-dimensional PLRNNs and higher cycle orders exhaustively searching this space becomes unfeasible. 
Instead, we found that the following heuristic works surprisingly well: First, for some order $k$ and a random initialization of the matrices $\mD_l, l=1 \dots k$, generate a cycle candidate by solving eq. \eqref{eq: cycle solution}. If each of the points $\vz^*_l, l=1 \dots k$, is consistent with the diagonal entries in the corresponding matrix $\mD_l, l=1 \dots k$, and none of them is already in the current library of cyclic points, then a true $k$-cycle has been identified, otherwise the cycle is virtual (or a super-set of lower-order cycles). We discovered that the search becomes extremely efficient, without the need to exhaustively consider all configurations, if a new search loop is re-initialized at the last visited virtual cyclic point (i.e., all \textit{inconsistent} entries $d_{mm}^{(l)}$ in the matrices $\mD_l, l=1 \dots k$, are flipped, $d_{mm}^{(l)} \rightarrow 1-d_{mm}^{(l)}$, to bring them into agreement with the signs of the solution points, $\vz^*_l=[z^*_{ml}]$, of eq. \eqref{eq: cycle solution}, thus yielding the next initial configuration). 
It is straightforward to see that this procedure almost surely 
converges if $N_{out}$ is chosen large enough, see Appx. \ref{SCYFI-convergence-proof}. The whole procedure is formalized in Algorithm \ref{alg: cycle finder}, and the code is available at \url{https://github.com/DurstewitzLab/SCYFI}. 

To validate the algorithm, we can compare analytical solutions as derived in sect. \ref{sec:theoretical-bifs} to the output of the algorithm. To delineate all existence and stability regions, the algorithm searches for all $k$-cycles up to some maximum order $K$ along a fine grid across the ($a_{11}$, \, $a_{11}+w_{11}$)-parameter plane. A bifurcation happens whenever between two grid points a cycle appears, disappears, or changes stability (as determined from the eigenvalue spectrum of the respective $k^{th}$-order Jacobian). The results of this procedure are shown in Fig.~\ref{fig:MAB_1}B, illustrating that the analytical solutions for existence and stability regions precisely overlap with those identified by Algorithm \ref{alg: cycle finder} (see also Fig.~\ref{fig: exact_agreements}). 
\subsection{Numerical and theoretical results on SCYFI's scaling behavior}
Because of the combinatorial nature of the problem, it is generally not feasible to obtain ground truth settings in higher dimensions for SCYFI to compare to. To nevertheless assess its scaling behavior, we therefore studied two specific scenarios. For an exhaustive search, the expected and median numbers of linear subregions $n$ until an object of interest (fixed point or cycle) is found, i.e. the number of $\{\mD_{1:k}\}$ constellations that need to be inspected until the first hit, are given by  
\begingroup
\allowdisplaybreaks
\begin{align}\label{eq-8}
    E[n] = \frac{N+1}{m+1}
 =\frac{2^{Mk}+1}{m+1}, \quad \overline{n}=\text{min} \Biggl\{
 n \in \mathbb{N} \Biggl\vert \binom{2^{Mk}-n}{m} \leq \frac{1}{2} \binom{2^{Mk}}{m}\Biggr\}
\end{align}
\normalsize
 \endgroup
with $m$ being the number of existing $k$-cycles and $N$ the total number of combinations, as shown in \citet{ahlgren2014probability} (assuming no prior knowledge about the mathematical limitations when drawing regions). The median $\overline{n}$ as well as the actual median number of to-be-searched combinations required by SCYFI to find at least one $k$-cycle is given for low-dimensional systems in Fig.~\ref{fig: 3}A as a function of cycle order $k$, and can be seen to be surprisingly linear as confirmed by linear regression fits to the data (see Fig. \ref{fig: 3} legend for details). To assess scaling as a function of dimensionality $M$, we explicitly constructed systems with one known fixed point (see Appx. \ref{app-scaling} for details) and determined the number $n$ of subregions required by SCYFI to detect this embedded fixed point (Fig. \ref{fig: 3}B). In general, the scaling depended on the system's eigenspectrum, but for reasonable scenarios was polynomial or even sublinear (Fig. \ref{fig: 3}B, see also Fig. \ref{fig: scaling linear}). In either case, the number of required SCYFI iterations scaled much more favorably than would be expected from an exhaustive search.

\begin{figure}[h]
\centering
\includegraphics[scale=0.22]{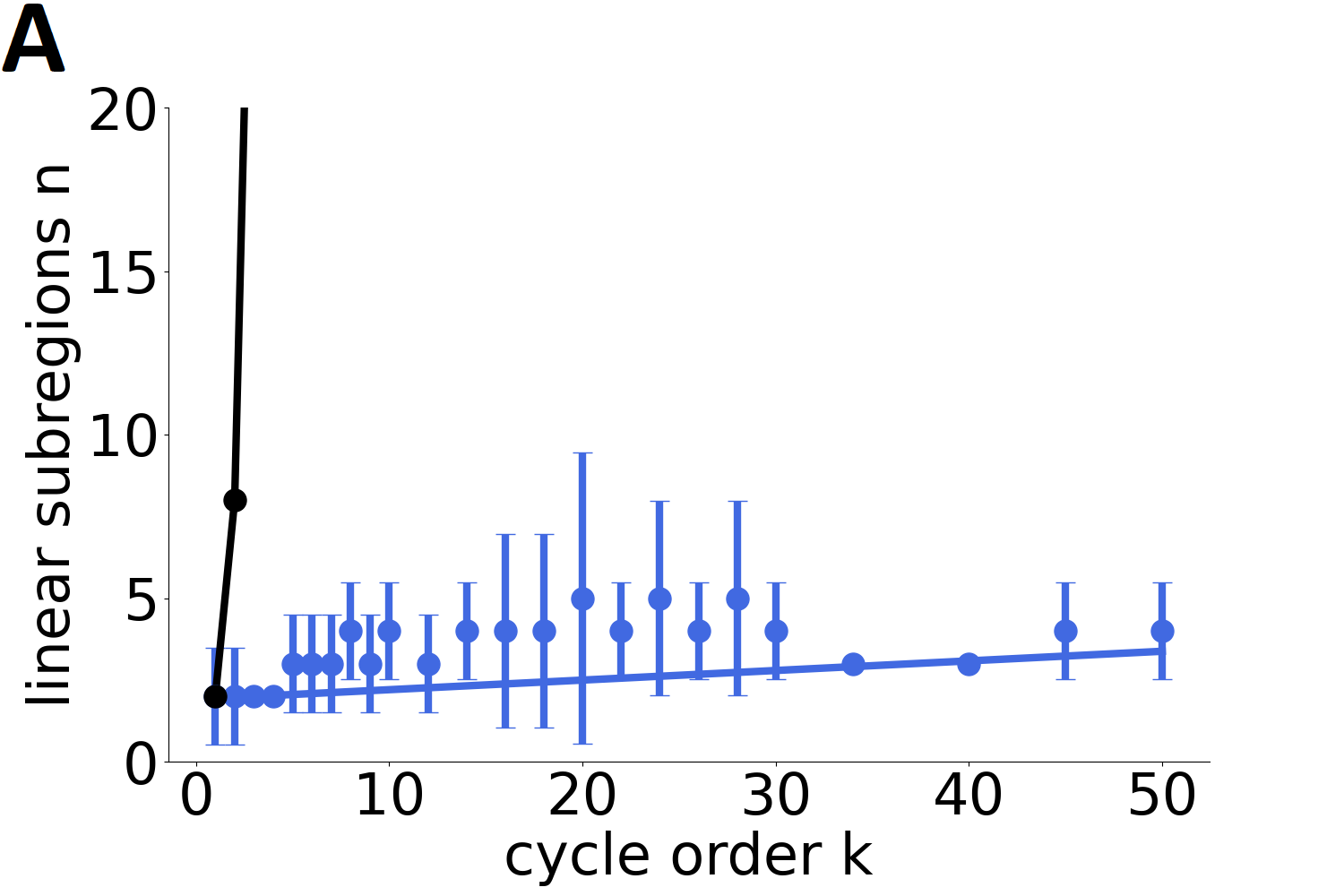}\includegraphics[scale=0.22]{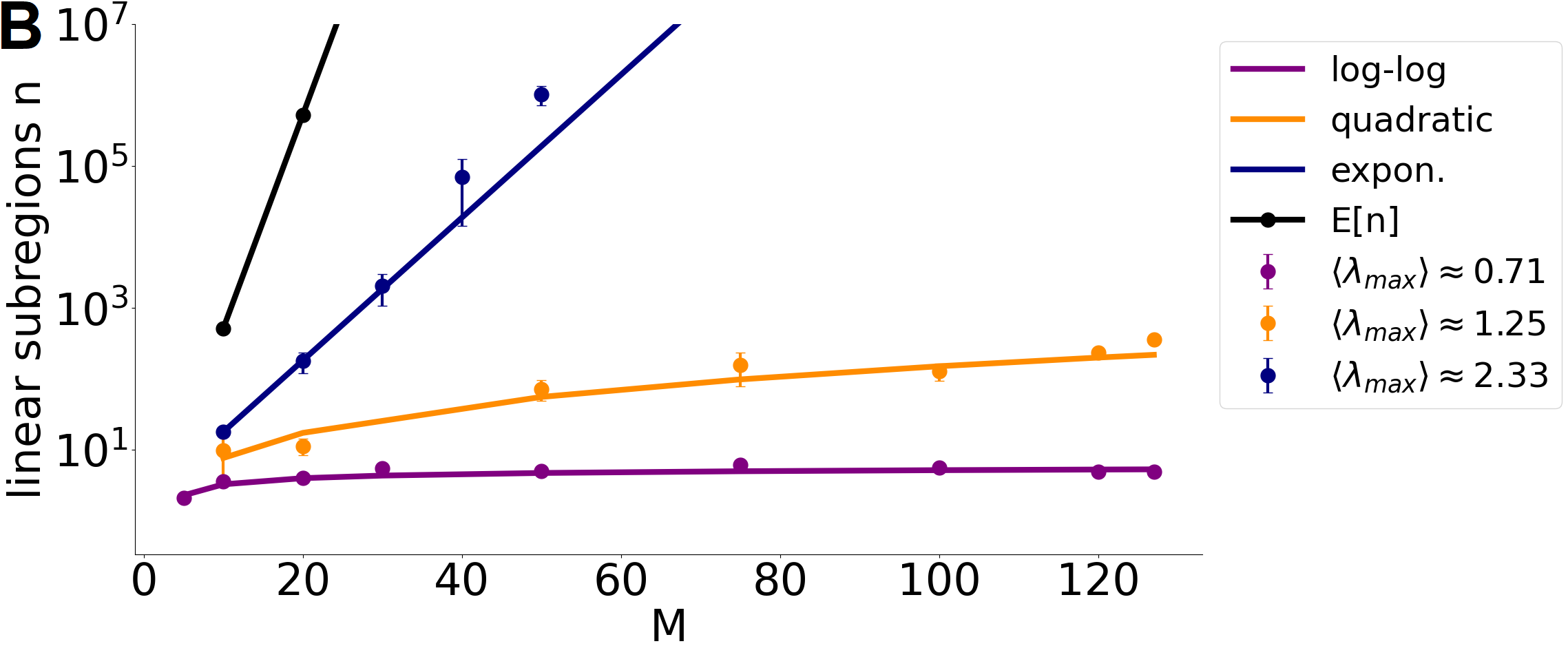}
\vspace{-.25cm}
 \caption{A) Number of linear subregions $n$ searched until at least one cycle of order $k$ was found by SCYFI (blue) vs. the median number $\overline{n}$ an exhaustive search would take by randomly drawing combinations without replacement (black) as a function of cycle order ($M=2$ fixed). Each data point represents the median of 50 different initializations across $5$ different PLRNN models. Error bars = median absolute deviation. Linear regression fit using weighted least-squares ($R^2 \approx 0.998, p<10^{-30}$). B) Number of linear subregions $n$ searched until a specific fixed point was found as function of dimensionality $M$ for different eigenvalue spectra (see Appx. \ref{app-scaling} for details).}\label{fig: 3}
 \end{figure}
How could this surprisingly good scaling behavior be explained? As shown numerically in Fig.~\ref{fig: rebuttal1}, when we initiate SCYFI in different randomly selected linear subregions, it converges to the subregions including the dynamical objects of interest exponentially fast, offsetting the combinatorial explosion. 
A more specific and stronger theoretical result about SCYFI’s convergence speed can be obtained under certain conditions on the parameters (which agrees nicely with the numerical results in Fig.~\ref{fig: 3}). It rests on the observation that SCYFI is designed to move \textit{only among subregions containing virtual or actual fixed points or cycles}, based on the fact that it is always reinitialized with the next virtual fixed (cyclic) point in case the consistency check fails. The result can be stated as follows:
\begin{theorem}\label{thm-scyfi}
Consider a PLRNN of the form \eqref{eq-cx-plrnn2} with parameters $\boldsymbol{\theta}=\{\mA, \mW, \vh\}$. Under certain conditions on $\boldsymbol{\theta}$ (for which $\norm{\mA} + \norm{\mW} <1$), SCYFI will converge in at most linear time.  
\end{theorem}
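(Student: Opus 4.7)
The plan is to exploit the contraction condition $\|\mA\|+\|\mW\|<1$ to reduce the problem to a single target, and then to analyze SCYFI's inner loop as a discrete dynamical system on sign patterns. Since $\phi$ is $1$-Lipschitz, the assumption forces $F_{\boldsymbol{\theta}}$ to be a strict contraction (in the chosen operator norm), so Banach's fixed point theorem gives a unique, globally attracting fixed point $\vz^*$, and rules out any $k$-cycle with $k\geq 2$. Consequently SCYFI only has to succeed at $k=1$, and it suffices to bound the length of one inner loop.

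Next, I would view each inner iteration as a map on $0/1$ diagonal matrices. For any such $\mD$, the virtual fixed point $\tilde\vz(\mD)=(\mI-\mA-\mW\mD)^{-1}\vh$ is well defined because $\|\mA+\mW\mD\|\leq\|\mA\|+\|\mW\|<1$, so $\mI-\mA-\mW\mD$ is invertible with $\|(\mI-\mA-\mW\mD)^{-1}\|\leq 1/(1-\|\mA\|-\|\mW\|)$. A single SCYFI inner step is exactly $\mD^{(t+1)}=T(\mD^{(t)}):=\mathrm{diag}\bigl(\mathbb{1}[\tilde\vz(\mD^{(t)})>0]\bigr)$, whose unique fixed point is $\mD^*:=\mathrm{diag}(\mathbb{1}[\vz^*>0])$. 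Using the identity
\[
\vz^*-\tilde\vz(\mD)=(\mI-\mA-\mW\mD)^{-1}\mW(\mD^*-\mD)\vz^*,
\]
together with the norm bound above, one obtains that the virtual fixed points $\tilde\vz(\mD^{(t)})$ approach $\vz^*$ geometrically as the sign pattern is updated.

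The main step, and the expected main obstacle, is converting this norm-level contraction into an $O(M)$ bound on the number of sign updates. My plan is to show that each application of $T$ permanently corrects at least one coordinate towards $\mD^*$: once a coordinate of $\tilde\vz(\mD^{(t)})$ has the correct sign, the contraction estimate forbids it from flipping back. The cleanest formalisation is probably to strengthen the hypothesis to an entrywise / $\ell^{\infty}$-type contraction, or to introduce a Lyapunov-style potential such as a weighted Hamming distance between $\mD^{(t)}$ and $\mD^*$, and prove a strict decrease per step. Either route yields at most $M$ inner iterations before the self-consistency check in Algorithm~\ref{alg: cycle finder} passes.

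Assembling the pieces, the outer loop essentially requires a single random restart (because $\mD^*$ is the only attracting configuration of $T$), while the inner loop terminates in at most $M$ steps, each amounting to one linear solve of eq.\eqref{eq: cycle solution}. Hence the total number of SCYFI iterations is linear in $M$, matching the empirical scaling in Fig.~\ref{fig: 3}B. The hard part will be turning the global contraction bound into the entrywise sign-stability needed to forbid any coordinate from oscillating across the switching manifold, where I expect most of the technical work to lie.
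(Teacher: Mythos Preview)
Your proposal attempts a stronger statement than the paper actually proves. The theorem says ``under certain conditions on $\boldsymbol\theta$ (for which $\|\mA\|+\|\mW\|<1$)'', and the paper's proof makes clear that these conditions are \emph{specific parameter constructions}, not the bare inequality $\|\mA\|+\|\mW\|<1$. You are trying to show that norm contraction alone forces the inner loop to terminate in $O(M)$ steps via a monotone Hamming-distance argument; the paper does nothing of the sort.

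The paper's route is as follows. It exhibits two explicit families of parameters. In Case~(I), $\mA,\mW$ are built from a nonnegative random matrix and $\vh\geq 0$, so every $\mW_{\Omega}$ has nonnegative entries; the Neumann series then forces $(\mI-\mW_{\Omega})^{-1}$ to be entrywise positive, hence every virtual fixed point lies in the all-positive orthant. SCYFI therefore lands in the correct region after at most one flip: constant time. In Case~(II), a more elaborate sign/magnitude construction guarantees that a fixed subset $S\subset\{1,\dots,M\}$ of coordinates is positive for \emph{every} virtual fixed point, so all virtual fixed points are confined to $K=2^{M-|S|}$ orthants. Since SCYFI only moves among orthants that contain virtual fixed points, a coupon-collector bound gives $N\leq cK$ iterations, which is linear when $K$ is chosen small. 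In short, the paper bounds the \emph{number of orthants reachable by SCYFI a priori}, rather than proving any dynamical sign-stability of the iteration $T$.

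Your identity $\vz^*-\tilde\vz(\mD)=(\mI-\mA-\mW\mD)^{-1}\mW(\mD^*-\mD)\vz^*$ is correct and gives $\|\vz^*-\tilde\vz(\mD)\|\leq \tfrac{\|\mW\|}{1-\|\mA\|-\|\mW\|}\|(\mD^*-\mD)\vz^*\|$, but this only bounds the error \emph{in terms of} the current Hamming mismatch; it does not show the mismatch decreases under $T$, so the claimed geometric convergence of $\tilde\vz(\mD^{(t)})$ is circular. The step ``once a coordinate has the correct sign it cannot flip back'' is exactly what fails under a pure operator-norm assumption: a small-norm perturbation in $\mW(\mD^*-\mD)\vz^*$ can still change the sign of a coordinate of $\tilde\vz$ that happens to be close to zero. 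You correctly flag this as the hard part and propose strengthening to an entrywise condition, but that is precisely where the paper's construction enters --- and its entrywise control comes from sign constraints on the matrix entries (nonnegativity in Case~(I), controlled negativity in Case~(II)), not from a Lyapunov argument on $T$. If you want to salvage your approach, you would need an explicit entrywise hypothesis strong enough to guarantee $\min_m|z^*_m|$ dominates the perturbation bound, which essentially reinvents the paper's Case~(I).
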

\begin{proof}
\vspace{-.3cm}
See Appx. \ref{app-proof-thm3}
\end{proof}
\section{Loss landscapes and bifurcation curves}
\vspace{-.2cm}
\subsection{Bifurcations and loss jumps in training}\label{sec:LossJumps}
 \begin{figure}[hbt]
\centering
\hspace{-0.5cm}\includegraphics[scale=0.44]{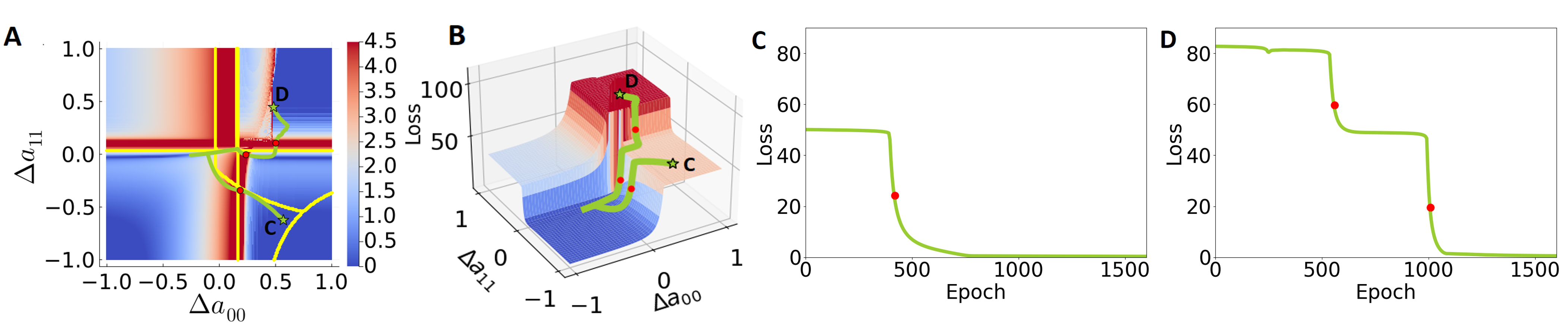}
\vspace{-.35cm}
\caption{A) Logarithm of gradient norm of the loss in PLRNN parameter space, with ground truth parameters centered at $(0,0)$. Superimposed in yellow are bifurcation curves computed by SCYFI, and in green two examples of training trajectories from different parameter initial conditions (indicated by the stars). Red dots indicate the bifurcation crossing time points shown in C \& D. B) Relief plot of the loss landscape from A to highlight the differences in loss altitude associated with the bifurcations. C) Loss during the training run represented by the green trajectory labeled C in A and B. Red dot indicates the time point of bifurcation crossing corresponding to the red dot in A and B. D) Same for trajectory labeled D in A and B.}\label{fig:loss GT}
\end{figure}
Fig. \ref{fig:loss GT} provides a $2d$ toy example illustrating the tight association between the loss landscape and bifurcation curves, as determined through SCYFI, for a PLRNN trained by BPTT on reproducing a specific $16$-cycle. Fig. \ref{fig:loss GT}A depicts a contour plot of the gradient norms with overlaid bifurcation curves in yellow, while Fig. \ref{fig:loss GT}B shows the MSE loss landscape as a relief for better appreciation of the sharp changes in loss height associated with the bifurcation curves.  
Shown in green are two trajectories from two different parameter initializations traced out during PLRNN training in parameter space, where training was confined to only those two parameters given in the graphs (i.e., all other PLRNN parameters were kept fixed during training for the purpose of this illustration). As confirmed in Fig. \ref{fig:loss GT}C \& D, as soon as the training trajectory crosses the bifurcation curves in parameter space, a huge jump in the loss associated with a sudden increase in the gradient norm occurs. This illustrates empirically and graphically the theoretical results derived in sect. \ref{sec:bif}.

Next we illustrate the application of SCYFI on a real-world example, learning the behavior of a rodent spiking cortical neuron observed through time series measurements of its membrane potential (note that spiking is a highly nonlinear behavior involving fast within-spike and much slower between-spike time scales). For this, we constructed a $6$-dimensional delay embedding of the membrane voltage \citep{sauer1991embedology,kantz2004nonlinear}, and trained a PLRNN with one hidden layer (cf. eq. \ref{eq-shplrnn}) 
using BPTT with sparse teacher forcing (STF) \citep{mikhaeil2021difficulty} to approximate the dynamics of the spiking neuron (see Appx. \ref{app-BNlossjumps} for a similar analysis on a biophysical neuron model). With $M=6$ latent states and $H=20$ hidden dimensions, the trained PLRNN comprises $2^{20}$ different linear subregions and $|\boldsymbol\theta|=272$ parameters, much higher-dimensional than the toy example considered above. Fig.~\ref{shPLRNNplot}A gives the MAE loss as a function of training epoch (i.e., single SGD updates), while Figs.~\ref{shPLRNNplot}B \& C illustrate the well-trained behavior in time (Fig.~\ref{shPLRNNplot}B) and in a $2$-dimensional projection of the model's state space obtained by PCA (Fig.~\ref{shPLRNNplot}C). The loss curve exhibits several steep jumps. Zooming into one of these regions (Fig.~\ref{shPLRNNplot}A; indicated by the red box) and examining the transitions in parameter space using SCYFI, we find they are indeed produced by bifurcations, with an example given in Fig.~\ref{shPLRNNplot}D. Note that we are now dealing with high-dimensional state and parameter spaces, such that visualization of results becomes tricky. For the bifurcation diagram in Fig.~\ref{shPLRNNplot}D we therefore projected all extracted $k$-cycles ($k \geq 1$) onto a line given by the PCA-derived maximum eigenvalue component, and plotted this as a function of training epoch.\footnote{Of course, very many of the PLRNN parameters may change from one epoch to the next.} Since SCYFI extracts all $k$-cycles and their eigenvalue spectrum, we can also determine the type of bifurcation that caused the jump. While before the loss jump the PLRNN already produced time series quite similar to those of the physiologically recorded cell (Fig. ~\ref{shPLRNNplot}E), a DTB (cf. Theorem \ref{thm-1}) produced catastrophic forgetting of the learned behavior with the PLRNN's states suddenly diverging to minus infinity (Fig.~\ref{shPLRNNplot}F; Fig.~\ref{dendPLRNNplot} also provides an example of a BCB during PLRNN training, and Fig.~\ref{fig: DFB} an example of a DFB). This illustrates how SCYFI can be used to analyze the training process with respect to bifurcation events also for high-dimensional real-world examples, as well as the behavior of the trained model (Fig.~\ref{shPLRNNplot}C).

\begin{figure}[htb]
\centering
\includegraphics[scale=0.18]{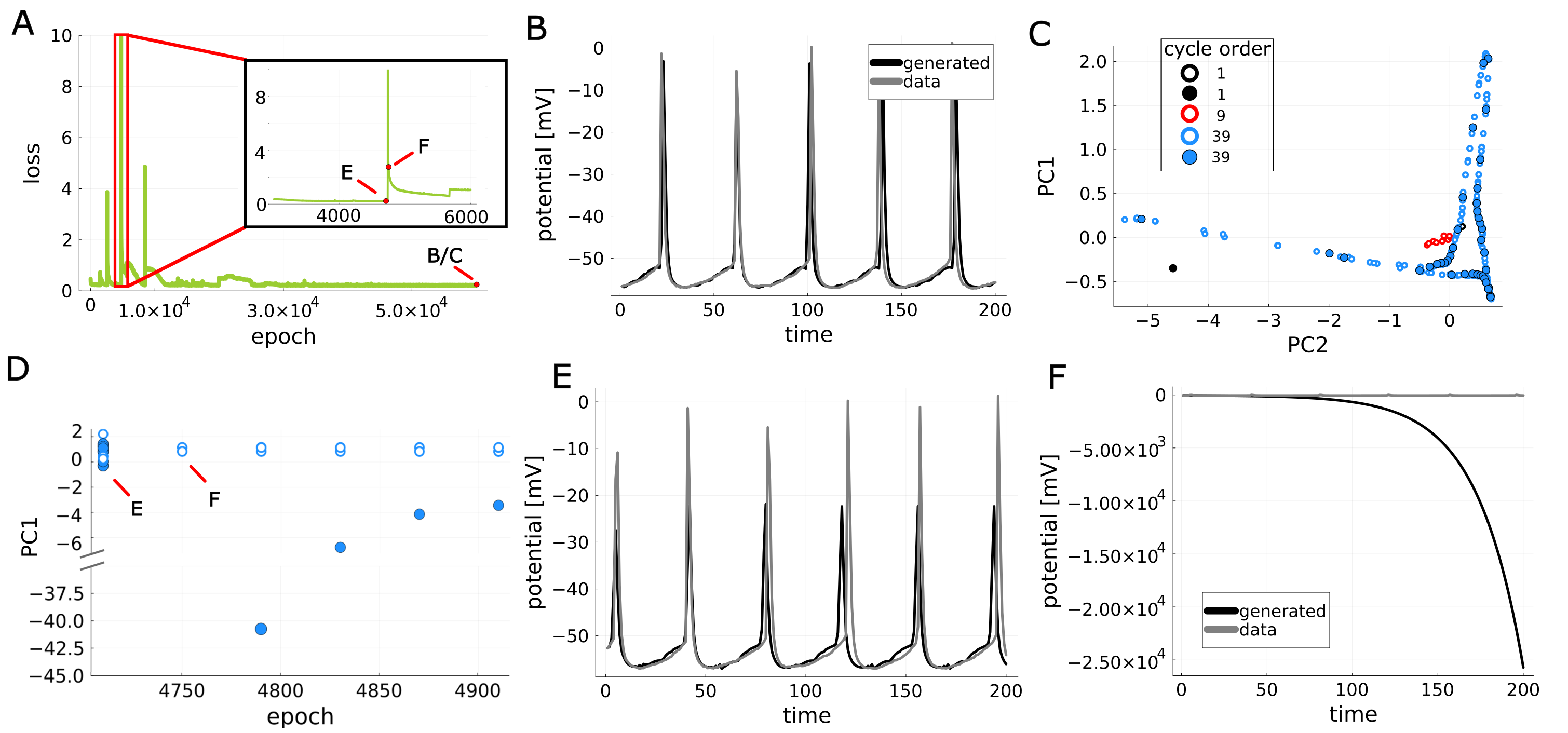} 
\vspace{-.65cm}
\small
\caption{A) Loss across training epochs for a PLRNN with one hidden layer trained on electrophysiological recordings from a cortical neuron. 
Red box zooms in on one of the training phases with a huge loss jump, caused by a DTB. Letters refer to selected training epochs in other subpanels. 
B) Time series of true (gray) and PLRNN-simulated (black) membrane potential in the well trained regime (see A). 
C) All fixed points and cycles discovered by SCYFI for the well-trained model in state space projected onto the first two principle components using PCA. Filled circles represent stable and open circles unstable objects. The stable 39-cycle corresponds to the spiking behavior. 
D) Bifurcation diagram of the PLRNN as a function of training epoch around the loss peak in A. Locations of stable (filled circles) and unstable (open circles) objects projected onto the first principle component. 
E) Model behavior as in B shortly before the DTB and associated loss jump (from the epoch indicated in A, D). 
F) Model behavior as in B right around the DTB (diverging to $-\infty$).
}\label{shPLRNNplot}
\normalsize
\end{figure}
\vspace{-.2cm}
\subsection{Implications for designing training algorithms}
What are potential take-homes of the results in sects. \ref{sec:theorems-bif} \& \ref{sec:LossJumps} for designing RNN training algorithms? One possibility is to design smart initialization or training procedures that aim to place or push an RNN into the right topological regime by taking big leaps in parameter space whenever the current regime is not fit for the data, rather than dwelling within a wrong regime for too long. These ideas are discussed in a bit more depth in Appx.\ref{app-look_ahead}, with a proof of concept in Fig. \ref{fig: ex}. 

 \begin{figure}[!ht]
    \centering\includegraphics[scale=0.24]{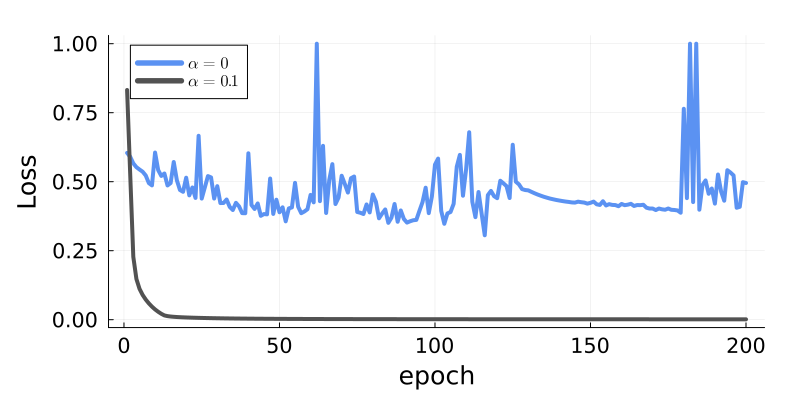}
  \vspace{-.3cm}
  \small \caption{Example loss curves during training a PLRNN ($M=10$) on a $2d$ cycle using gradient descent, once without GTF ($\alpha=0$, blue curve) but gradient clipping, and once with GTF ($\alpha=0.1$). Note that without GTF there are several sharp loss jumps associated with bifurcations in the PLRNN parameters, while activating GTF leads to a smooth loss curve avoiding bifurcations. 
    Note: For direct comparability both loss curves were cut off at $4$ and then scaled to $[0,1]$.
    The absolute loss is much lower for GTF.
     }\label{fig: GTF}
     \normalsize
 \end{figure}
More importantly, however, we discovered that the recently proposed technique of ‘generalized teacher forcing (GTF)’ \cite{hess} tends to circumvent bifurcations in RNN training altogether, leading to much faster convergence as illustrated in Fig.~\ref{fig: GTF}. The way this works is that GTF, by trading off forward-iterated RNN latent states with data-inferred states according to a specific annealing schedule during training (see Appx. \ref{app-proof-propGTF}), tends to pull the RNN directly into the right dynamical regime. In fact, for DTBs we can strictly prove these will never occur in PLRNN training with the right adjustment of the GTF parameter: 
\begin{theorem}\label{prop-GTF}
Consider a PLRNN of the form \eqref{eq-cx-plrnn2}
with parameters $\boldsymbol{\theta}=\{\mA, \mW, \vh\}$. Assume that it has a stable fixed point or $k$-cycle $\Gamma_k$ ($k\geq1$) that undergoes a degenerate transcritical bifurcation (DTB) for some parameter value $\theta=\theta_0 \in \boldsymbol{\theta}$. 
\begin{enumerate}[(i)]
    \item If $\norm{\mA} + \norm{\mW} \leq 1$, then for any GTF parameter $0<\alpha<1$, GTF controls the system, avoiding a DTB and, hence, gradient divergence at $\theta_0$.
    \item If $\norm{\mA} + \norm{\mW} = r>1$, then for any $1-\frac{1}{r} < \alpha <1$, GTF prevents a DTB and, hence, gradient divergence at $\theta_0$.
\end{enumerate}
\end{theorem}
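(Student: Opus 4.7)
The plan is to exploit the contractive effect of GTF's convex combination of model and data states. Recall that GTF \cite{hess} replaces the PLRNN recursion's previous state by $(1-\alpha)\vz_{t-1} + \alpha\tilde{\vz}_{t-1}$, where $\tilde{\vz}_{t-1}$ is a data-inferred teacher state treated, during the forward pass, as an exogenous input. Within any linear subregion $\Omega$ the effective recursion then reads
\[
\vz_t = (1-\alpha)\mW_{\tilde\Omega(t-1)}\vz_{t-1} + \alpha\mW_{\tilde\Omega(t-1)}\tilde{\vz}_{t-1} + \vh,
\]
so that the one-step Jacobian with respect to $\vz_{t-1}$ is $(1-\alpha)\mW_{\tilde\Omega(t-1)}$, and any $k$-cycle Jacobian becomes $(1-\alpha)^k\prod_{r=1}^k\mW_{\tilde\Omega(t+k-r)}$. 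This uniform scaling by $(1-\alpha)$ is the key structural fact driving the proof.

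First I would establish a uniform spectral bound across subregions. Since $\mD_\Omega$ is diagonal with $0/1$ entries, $\norm{\mW_\Omega}=\norm{\mA+\mW\mD_\Omega}\leq\norm{\mA}+\norm{\mW}=r$ uniformly over all $2^M$ orthants. Hence
\[
\norm{(1-\alpha)\mW_\Omega}\leq (1-\alpha)r \quad\text{and}\quad \norm{(1-\alpha)^k\textstyle\prod_r\mW_{\Omega}}\leq [(1-\alpha)r]^k.
\]
Whenever $(1-\alpha)r<1$, no eigenvalue of the effective (product) Jacobian can equal $+1$, which excludes the first defining condition $\lambda^i(\theta^*)\to+1$ of a DTB.

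Next I would control the locations of the $k$-periodic points of the GTF map. Substituting the scaled operator into the analogue of eq.\eqref{eq: cycle solution} yields
\[
\vz^*_k = \bigl[\mathds{1} - (1-\alpha)^k\textstyle\prod_{r=0}^{k-1}\mW_{\tilde\Omega(k-r)}\bigr]^{-1}\vb,
\]
where $\vb$ collects $\vh$ together with the bounded teacher contributions. Under $(1-\alpha)r<1$ the bracketed operator is uniformly invertible via a Neumann series, with inverse norm at most $1/(1-[(1-\alpha)r]^k)$, so $\norm{\vz^*_k}$ stays finite, excluding the second defining condition $\norm{\vz^*}\to\infty$ of a DTB. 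Since both defining conditions of a DTB fail at $\theta_0$, no DTB can occur under the GTF dynamics, and the DTB-induced gradient blow-up of Theorem \ref{thm-1} is thereby avoided.

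The two stated regimes then follow from when $(1-\alpha)r<1$ holds. In (i), $r\leq 1$ gives $(1-\alpha)r\leq 1-\alpha<1$ for every $\alpha\in(0,1)$; in (ii), the condition $\alpha>1-1/r$ is algebraically equivalent to $(1-\alpha)r<1$. The main obstacle I anticipate is careful piecewise-linear bookkeeping: a $k$-cycle may traverse several orthants, so the uniform bound $\norm{\mW_\Omega}\leq r$ must be invoked subregion-wise, and one must verify that $\mathds{1}-(1-\alpha)^k\prod_r\mW_\Omega$ stays invertible uniformly as $\theta\to\theta_0$. Both, however, follow immediately from the operator-norm bound, so no further structural difficulty arises.
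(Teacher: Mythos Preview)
Your proposal is correct and follows essentially the same route as the paper: both hinge on the observation that GTF rescales each one-step Jacobian to $(1-\alpha)\mW_{\Omega}$, then use the uniform bound $\norm{\mW_{\Omega}}\leq\norm{\mA}+\norm{\mW}=r$ to force the spectral radius of any $k$-fold product below $1$ whenever $(1-\alpha)r<1$, which rules out an eigenvalue at $+1$ and hence a DTB. Your additional step of bounding the periodic points via a Neumann-series estimate on $\bigl[\mathds{1}-(1-\alpha)^k\prod_r\mW_{\Omega}\bigr]^{-1}$ is not in the paper's proof (which stops after the eigenvalue argument), but it is a harmless and arguably cleaner completion, since it explicitly dispatches the second defining condition $\norm{\vz^*}\to\infty$ of a DTB rather than relying implicitly on the first alone.
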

\begin{proof}
\vspace{-.3cm}
See Appx. \ref{app-proof-propGTF}.
\end{proof}
\vspace{-.2cm}
As this example illustrates, we may be able to amend training procedures such as to avoid specific types of bifurcations.
\section{Discussion}
DS theory \citep{alligood1996,per,strogatz} is increasingly appreciated in the ML/AI community as a powerful mathematical framework for understanding both the training process of ML models \citep{rehmer_effect_2022,saxe,pascanu,doya_bifurcations_1992} as well as the behavior of trained models \citep{turner,maheshwaranathan20a,sussillo_opening_2013}. 
While the latter is generally useful for understanding how a trained RNN performs a given ML task, with prospects of improving found solutions, it is in fact imperative in areas like science or medicine where excavating the dynamical behavior and repertoire of trained models yields direct insight into the underlying physical, biological, or medical processes the model is supposed to capture. However, application of DS theory is often not straightforward, especially when 
dealing with higher-dimensional systems, and commonly requires numerical routines that may only find some of the dynamical objects of interest, and also only approximate solutions. One central contribution of the present work therefore was the design of a novel algorithm, SCYFI, that can exactly locate fixed points and cycles of a wide class of ReLU-based RNNs. This provides an efficient instrument for the DS analysis of trained models, supporting their interpretability and explainability. 

A surprising observation was that 
SCYFI often finds cycles in only linear 
time, despite the combinatorial nature of the problem, a feature shared with the famous Simplex algorithm for solving linear programming tasks \citep{megiddo_1987,Luenberger2016,spielman}.
While we discovered numerically that SCYFI for empirically relevant scenarios converges surprisingly fast, deriving strict theoretical guarantees is hard, and so far we could establish stronger theoretical results on its convergence properties only under specific assumptions on the RNN parameters. Further theoretical work is therefore necessary to precisely understand why the algorithm works so effectively. 

In this work we applied SCYFI to illuminate the training process itself. Since RNNs are themselves DS, they are subject to different forms of bifurcations during training as their parameters are varied under the action of a training algorithm (similar considerations may apply to very deep NNs). It has been recognized for some time that bifurcations in RNN training may give rise to sudden jumps in the loss \citep{pascanu,doya_bifurcations_1992}, but the phenomenon has rarely been treated more systematically and mathematically. Another major contribution of this work thus was to formally prove a strict connection between three types of bifurcations and abrupt changes in the gradient norms, and to use SCYFI to further reveal such events during PLRNN training on various example systems. 
There are numerous other types of bifurcations (e.g., center bifurcations, Hopf bifurcations etc.) that are likely to impact gradients during training, only for a subset of which we could provide formal proofs here. As we have demonstrated, understanding the topological and bifurcation landscape of RNNs could help improve training algorithms and provide insights into their working. Hence, a more general understanding of how various types of bifurcation affect the training process in a diverse range of RNN architectures is a promising future avenue not only for our theoretical understandings of RNNs, but also for guiding future algorithm design.
\paragraph{Acknowledgements} 
This work was supported by the German Research Foundation (DFG) through individual grants Du 354/10-1 \& Du 354/15-1 to DD, within research cluster FOR-5159 (“Resolving prefrontal flexibility”; Du 354/14-1), and through the Excellence Strategy EXC 2181/1 – 390900948 (STRUCTURES). We also thank Mahasheta Patra for lending us code for graphing analytically derived bifurcation diagrams and state spaces.

\bibliography{neurips2023_conference}

\begin{thebibliography}{68}
\providecommand{\natexlab}[1]{#1}
\providecommand{\url}[1]{\texttt{#1}}
\expandafter\ifx\csname urlstyle\endcsname\relax
  \providecommand{\doi}[1]{doi: #1}\else
  \providecommand{\doi}{doi: \begingroup \urlstyle{rm}\Url}\fi

\bibitem[Ahlgren(2014)]{ahlgren2014probability}
J.~Ahlgren.
\newblock The probability distribution for draws until first success without replacement.
\newblock \emph{arXiv preprint arXiv:1404.1161}, 2014.

\bibitem[Alahi et~al.(2016)Alahi, Goel, Ramanathan, Robicquet, Fei-Fei, and Savarese]{Alahi_2016_CVPR}
A.~Alahi, K.~Goel, V.~Ramanathan, A.~Robicquet, L.~Fei-Fei, and S.~Savarese.
\newblock Social lstm: Human trajectory prediction in crowded spaces.
\newblock In \emph{Proceedings of the IEEE Conference on Computer Vision and Pattern Recognition (CVPR)}, June 2016.

\bibitem[Alligood et~al.(1996)Alligood, Sauer, and Yorke]{alligood1996}
Kathleen~T. Alligood, Tim~D. Sauer, and James~A. Yorke.
\newblock \emph{Chaos: An Introduction to Dynamical Systems}.
\newblock Springer, New York, NY, 1996.

\bibitem[Arora et~al.(2022)Arora, Li, and Panigrahi]{arora2022understanding}
S.~Arora, Z.~Li, and A.~Panigrahi.
\newblock Understanding gradient descent on the edge of stability in deep learning.
\newblock In \emph{International Conference on Machine Learning}, pages 948--1024. PMLR, 2022.

\bibitem[Avrutin et~al.(2012)Avrutin, Schanz, and Banerjee]{avr2}
V.~Avrutin, M.~Schanz, and S.~Banerjee.
\newblock Occurrence of multiple attractor bifurcations in the two-dimensional piecewise linear normal form map.
\newblock \emph{Nonlinear Dynamics}, 67:\penalty0 293--307, 2012.

\bibitem[Avrutin et~al.(2019)Avrutin, Gardini, Sushko, and Tramontana]{avr}
V.~Avrutin, L.~Gardini, I.~Sushko, and F.~Tramontana.
\newblock \emph{Continuous and Discontinuous Piecewise Smooth One-Dimensional Maps: Invariant Sets and Bifurcation structures}, volume~1.
\newblock World Scientific, Singapore, 2019.

\bibitem[Banerjee et~al.(2000)Banerjee, Karthik, Yuan, and Yorke]{banerjee2000bifurcations}
S.~Banerjee, MS. Karthik, G.~Yuan, and J.~A. Yorke.
\newblock Bifurcations in one-dimensional piecewise smooth maps-theory and applications in switching circuits.
\newblock \emph{IEEE Transactions on Circuits and Systems I: Fundamental Theory and Applications}, 47\penalty0 (3):\penalty0 389--394, 2000.

\bibitem[Bengio et~al.(1994)Bengio, Simard, and Frasconi]{bengio}
Y.~Bengio, P.~Simard, and P.~Frasconi.
\newblock Learning long-term dependencies with gradient descent is difficult.
\newblock \emph{IEEE Transactions on Neural Networks}, 5\penalty0 (2):\penalty0 157--166, 1994.
\newblock \doi{10.1109/72.279181}.

\bibitem[Bouktif et~al.(2020)Bouktif, Fiaz, Ouni, and Serhani]{Bouktif}
S.h Bouktif, A.~Fiaz, A.~Ouni, and M.~A. Serhani.
\newblock Multi-sequence lstm-rnn deep learning and metaheuristics for electric load forecasting.
\newblock \emph{Energies}, 13\penalty0 (2), 2020.
\newblock ISSN 1996-1073.
\newblock \doi{10.3390/en13020391}.

\bibitem[Brenner et~al.(2022)Brenner, Hess, Mikhaeil, Bereska, Monfared, Kuo, and Durstewitz]{brenner_tractable_2022}
M.~Brenner, F.~Hess, J.~M. Mikhaeil, L.~F. Bereska, Z.~Monfared, P.~Kuo, and D.~Durstewitz.
\newblock Tractable {Dendritic} {RNNs} for {Reconstructing} {Nonlinear} {Dynamical} {Systems}.
\newblock In \emph{Proceedings of the 39th {International} {Conference} on {Machine} {Learning}}, pages 2292--2320. PMLR, June 2022.
\newblock ISSN: 2640-3498.

\bibitem[Brunton et~al.(2016)Brunton, Proctor, and Kutz]{brunton_discovering_2016}
S.~L. Brunton, J.~L. Proctor, and J.~N. Kutz.
\newblock Discovering governing equations from data by sparse identification of nonlinear dynamical systems.
\newblock \emph{Proceedings of the National Academy of Sciences}, 113\penalty0 (15):\penalty0 3932--3937, 2016.

\bibitem[Champion et~al.(2019)Champion, Lusch, Kutz, and Brunton]{champ}
K.~Champion, B.~Lusch, J.~N. Kutz, and S.~L. Brunton.
\newblock Data-driven discovery of coordinates and governing equations.
\newblock \emph{Proceedings of the National Academy of Sciences}, 116\penalty0 (45):\penalty0 22445--22451, 2019.
\newblock \doi{10.1073/pnas.1906995116}.

\bibitem[Chatziafratis et~al.(2020)Chatziafratis, Nagarajan, and Panageas]{pmlr-v119-chatziafratis20a}
V.~Chatziafratis, S.~G. Nagarajan, and I.~Panageas.
\newblock Better depth-width trade-offs for neural networks through the lens of dynamical systems.
\newblock In Hal~Daumé III and Aarti Singh, editors, \emph{Proceedings of the 37th International Conference on Machine Learning}, volume 119 of \emph{Proceedings of Machine Learning Research}, pages 1469--1478. PMLR, 13--18 Jul 2020.
\newblock URL \url{https://proceedings.mlr.press/v119/chatziafratis20a.html}.

\bibitem[Chatziafratis et~al.(2022)Chatziafratis, Panageas, and Sanford]{chatziafratis2022}
V.~Chatziafratis, I.~Panageas, and S.~Sanford, C.and~Stavroulakis.
\newblock On scrambling phenomena for randomly initialized recurrent networks.
\newblock \emph{Advances in Neural Information Processing Systems}, 35:\penalty0 18501--18513, 2022.

\bibitem[de~Silva et~al.(2020)de~Silva, Champion, Quade, Loiseau, Kutz, and Brunton]{deSilva}
B.~M. de~Silva, K.~Champion, M.~Quade, J.~Loiseau, J.~N. Kutz, and .~L. Brunton.
\newblock Pysindy: A python package for the sparse identification of nonlinear dynamics from data, 2020.
\newblock URL \url{https://arxiv.org/abs/2004.08424}.

\bibitem[Doya(1992)]{doya_bifurcations_1992}
K.~Doya.
\newblock Bifurcations in the learning of recurrent neural networks.
\newblock \emph{Proceedings 1992 IEEE International Symposium on Circuits and Systems}, 6:\penalty0 2777--2780, 1992.

\bibitem[Durstewitz(2009)]{durs09}
D.~Durstewitz.
\newblock Implications of synaptic biophysics for recurrent network dynamics and active memory.
\newblock \emph{Neural Networks}, 22\penalty0 (8):\penalty0 1189--1200, 2009.
\newblock ISSN 0893-6080.
\newblock \doi{https://doi.org/10.1016/j.neunet.2009.07.016}.
\newblock URL \url{https://www.sciencedirect.com/science/article/pii/S0893608009001622}.
\newblock Cortical Microcircuits.

\bibitem[Durstewitz(2017)]{dur}
D.~Durstewitz.
\newblock A state space approach for piecewiselinear recurrent neural networks for reconstructing nonlinear dynamics from neural measurements.
\newblock \emph{PLoS Computational Biology}, 13(6), 2017.

\bibitem[Ganguli and Banerjee(2005)]{gan}
A.~Ganguli and S.~Banerjee.
\newblock Dangerous bifurcation at border collision: When does it occur?
\newblock \emph{Physical Review E}, 71, 2005.

\bibitem[Gardini et~al.(2011)Gardini, Fournier-Prunaret, and Chargé]{gar}
L.~Gardini, D.~Fournier-Prunaret, and P.~Chargé.
\newblock Border collision bifurcations in a two-dimensional piecewise smooth map from a simple switching circuit.
\newblock \emph{Chaos: An Interdisciplinary Journal of Nonlinear Science}, 21, 2011.

\bibitem[Golub and Sussillo(2018)]{golub_fixedpointfinder_2018}
M.~Golub and D.~Sussillo.
\newblock {FixedPointFinder}: {A} {Tensorflow} toolbox for identifying and characterizing fixed points in recurrent neural networks.
\newblock \emph{Journal of Open Source Software}, 3\penalty0 (31):\penalty0 1003, November 2018.
\newblock ISSN 2475-9066.

\bibitem[Haschke and Steil(2005)]{haschke_input_2005}
R.~Haschke and J.~J. Steil.
\newblock Input space bifurcation manifolds of recurrent neural networks.
\newblock \emph{Neurocomputing}, 64:\penalty0 25--38, March 2005.
\newblock ISSN 0925-2312.
\newblock \doi{10.1016/j.neucom.2004.11.030}.

\bibitem[Hernandez et~al.(2020)Hernandez, Moretti, Wei, Saxena, Cunningham, and Paninski]{hernandez_nonlinear_2020}
D.l Hernandez, A.~K. Moretti, Z.g Wei, S.~Saxena, J.~Cunningham, and L.~Paninski.
\newblock Nonlinear {Evolution} via {Spatially}-{Dependent} {Linear} {Dynamics} for {Electrophysiology} and {Calcium} {Data}, June 2020.
\newblock arXiv:1811.02459 [cs, q-bio, stat].

\bibitem[Hess et~al.(2023)Hess, Monfared, Brenner, and Durstewitz]{hess}
F.~Hess, Z.~Monfared, M.~Brenner, and D.~Durstewitz.
\newblock Generalized teacher forcing for learning chaotic dynamics.
\newblock In \emph{Proceedings of the 40th International Conference on Machine Learning}, volume 202 of \emph{Proceedings of Machine Learning Research}, pages 13017--13049. PMLR, 23--29 Jul 2023.
\newblock URL \url{https://proceedings.mlr.press/v202/hess23a.html}.

\bibitem[Hochreiter and Schmidhuber(1997)]{hoch97}
S.~Hochreiter and J.~Schmidhuber.
\newblock Long short-term memory.
\newblock \emph{Neural Computation}, 9(8):\penalty0 1735--1780, 1997.

\bibitem[Hossen et~al.(2018)Hossen, Nair, Chinnathambi, and Ranganathan]{Hossen}
T.~Hossen, A.~S. Nair, R.~A. Chinnathambi, and P.~Ranganathan.
\newblock Residential load forecasting using deep neural networks (dnn).
\newblock In \emph{2018 North American Power Symposium (NAPS)}, pages 1--5, 2018.
\newblock \doi{10.1109/NAPS.2018.8600549}.

\bibitem[Kanai et~al.(2017)Kanai, Fujiwara, and Iwamura]{kanai_preventing_2017}
S.~Kanai, Y.~Fujiwara, and S.~Iwamura.
\newblock Preventing {Gradient} {Explosions} in {Gated} {Recurrent} {Units}.
\newblock In \emph{Advances in {Neural} {Information} {Processing} {Systems}}, volume~30. Curran Associates, Inc., 2017.

\bibitem[Kantz and Schreiber(2004)]{kantz2004nonlinear}
H.~Kantz and T.~Schreiber.
\newblock \emph{Nonlinear time series analysis}, volume~7.
\newblock Cambridge university press, 2004.

\bibitem[Katz and Reggia(2018)]{katz_using_2018}
G.~E. Katz and J.~A. Reggia.
\newblock Using {Directional} {Fibers} to {Locate} {Fixed} {Points} of {Recurrent} {Neural} {Networks}.
\newblock \emph{IEEE Transactions on Neural Networks and Learning Systems}, 29\penalty0 (8):\penalty0 3636--3646, August 2018.
\newblock ISSN 2162-2388.
\newblock \doi{10.1109/TNNLS.2017.2733544}.
\newblock Conference Name: IEEE Transactions on Neural Networks and Learning Systems.

\bibitem[Koppe et~al.(2019)Koppe, Toutounji, Kirsch, Lis, and Durstewitz]{kop}
G.~Koppe, H~. Toutounji, P.~Kirsch, S.~Lis, and D.~Durstewitz.
\newblock Identifying nonlinear dynamical systems via generative recurrent neural networks with applications to fmri.
\newblock \emph{PLOS Computational Biology}, 15(8):\penalty0 1--35, 2019.

\bibitem[Lang and Rettenmeier(2017)]{lang2017understanding}
T.~Lang and M.~Rettenmeier.
\newblock Understanding consumer behavior with recurrent neural networks.
\newblock 2017.

\bibitem[Luenberger and Ye(2016)]{Luenberger2016}
D.~G. Luenberger and Y.~Ye.
\newblock \emph{Linear and Nonlinear Programming}.
\newblock Springer International Publishing, 2016.
\newblock \doi{doi:10.1007/978-3-319-18842-3}.

\bibitem[Maheswaranathan and Sussillo(2020)]{maheshwaranathan20a}
N.~Maheswaranathan and D.~Sussillo.
\newblock How recurrent networks implement contextual processing in sentiment analysis.
\newblock In \emph{Proceedings of the 37th International Conference on Machine Learning}, volume 119 of \emph{Proceedings of Machine Learning Research}, pages 6608--6619. PMLR, 13--18 Jul 2020.

\bibitem[Maheswaranathan et~al.(2019)Maheswaranathan, Williams, Golub, Ganguli, and Sussillo]{mah}
N.~Maheswaranathan, AH. Williams, MD. Golub, S.~Ganguli, and D.~Sussillo.
\newblock Reverse engineering recurrent networks for sentiment classification reveals line attractor dynamics.
\newblock \emph{Advances in neural information processing systemsv}, 32:\penalty0 15696--15705, 2019.

\bibitem[Marichal et~al.(2009)Marichal, Piñeiro, and González]{marichal_study_2009}
R~Marichal, JD~Piñeiro, and E~González.
\newblock Study of fold bifurcation in a discrete recurrent neural network.
\newblock In \emph{Proceedings of the {World} {Congress} on {Engineering} and {Computer} {Science}}, volume~2, 2009.

\bibitem[Megiddo(1987)]{megiddo_1987}
N.~Megiddo.
\newblock \emph{On the complexity of linear programming}, page 225–268.
\newblock Econometric Society Monographs. Cambridge University Press, 1987.
\newblock \doi{10.1017/CCOL0521340446.006}.

\bibitem[Mikhaeil et~al.(2022)Mikhaeil, Monfared, and Durstewitz]{mikhaeil2021difficulty}
J.~M. Mikhaeil, Z.~Monfared, and D.~Durstewitz.
\newblock On the difficulty of learning chaotic dynamics with {RNN}s.
\newblock In \emph{Advances in Neural Information Processing Systems}, 2022.

\bibitem[Monfared and Durstewitz(2020{\natexlab{a}})]{mon1}
Z.~Monfared and D.~Durstewitz.
\newblock Existence of n-cycles and border-collision bifurcations in piecewise-linear continuous maps with applications to recurrent neural networks.
\newblock \emph{Nonlinear Dynamics}, 101\penalty0 (2):\penalty0 1037--1052, 2020{\natexlab{a}}.

\bibitem[Monfared and Durstewitz(2020{\natexlab{b}})]{mon2}
Z.~Monfared and D.~Durstewitz.
\newblock Transformation of {ReLU}-based recurrent neural networks from discrete-time to continuous-time.
\newblock In \emph{International {Conference} on {Machine} {Learning}}, pages 6999--7009. PMLR, 2020{\natexlab{b}}.
\newblock ISSN: 2640-3498.

\bibitem[Monfared et~al.(2016)Monfared, Dadi, Miladi~Lari, and Afsharnezhad]{mon}
Z.~Monfared, Z.~Dadi, N.~Miladi~Lari, and Z.~Afsharnezhad.
\newblock Existence and nonexistence of periodic solution and hopf bifurcation of a tourism-based social-ecological system.
\newblock \emph{Optik}, 127\penalty0 (22):\penalty0 10908--10918, 2016.

\bibitem[Nusse and Yorke(1992)]{nusse1992border}
H.~E. Nusse and J.~A. Yorke.
\newblock Border-collision bifurcations including “period two to period three” for piecewise smooth systems.
\newblock \emph{Physica D: Nonlinear Phenomena}, 57\penalty0 (1-2):\penalty0 39--57, 1992.

\bibitem[Nusse and Yorke(1995)]{nus95}
H.E. Nusse and J.A. Yorke.
\newblock Border-collision bifurcations for piecewise smooth one dimensional maps.
\newblock \emph{Int. J. Bifurc. Chaos}, 5(1):\penalty0 189--207, 1995.

\bibitem[Park et~al.(2018)Park, Kim, Kang, Chung, and Choi]{Park}
S.~H. Park, B.~Kim, C.~M. Kang, C.~C. Chung, and J.~W. Choi.
\newblock Sequence-to-sequence prediction of vehicle trajectory via lstm encoder-decoder architecture.
\newblock In \emph{2018 IEEE Intelligent Vehicles Symposium (IV)}, pages 1672--1678, 2018.
\newblock \doi{10.1109/IVS.2018.8500658}.

\bibitem[Pascanu et~al.(2013)Pascanu, Mikolov, and Bengio]{pascanu}
R.~Pascanu, T.~Mikolov, and Y.~Bengio.
\newblock On the difficulty of training recurrent neural networks.
\newblock In \emph{Proceedings of the 30th International Conference on Machine Learning}, volume~28 of \emph{Proceedings of Machine Learning Research}, pages 1310--1318, Atlanta, Georgia, USA, 17--19 Jun 2013. PMLR.

\bibitem[Patel and Ott(2022)]{patel}
D.~Patel and E.~Ott.
\newblock Using machine learning to anticipate tipping points and extrapolate to post-tipping dynamics of non-stationary dynamical systems, 2022.
\newblock URL \url{https://arxiv.org/abs/2207.00521}.

\bibitem[Pathak et~al.(2018)Pathak, Hunt, Girvan, Lu, and Ott]{pathak}
J.~Pathak, B.~Hunt, M.~Girvan, Z.n Lu, and E.~Ott.
\newblock Model-free prediction of large spatiotemporally chaotic systems from data: A reservoir computing approach.
\newblock \emph{Phys. Rev. Lett.}, 120:\penalty0 024102, Jan 2018.
\newblock \doi{10.1103/PhysRevLett.120.024102}.

\bibitem[Perko(2001)]{per}
L.~Perko.
\newblock \emph{Differential Equations and Dynamical Systems}.
\newblock Springer New York, NY, 2001.

\bibitem[Razvan(2014)]{razvan_2014}
P.~Razvan.
\newblock \emph{On Recurrent and Deep Neural Networks}.
\newblock PhD thesis, 2014.
\newblock Université de Montréal.

\bibitem[Rehmer and Kroll(2022)]{rehmer_effect_2022}
A.~Rehmer and A.~Kroll.
\newblock The effect of the forget gate on bifurcation boundaries and dynamics in {Recurrent} {Neural} {Networks} and its implications for gradient-based optimization.
\newblock In \emph{2022 {International} {Joint} {Conference} on {Neural} {Networks} ({IJCNN})}, pages 01--08, July 2022.
\newblock \doi{10.1109/IJCNN55064.2022.9892458}.
\newblock ISSN: 2161-4407.

\bibitem[Ribeiro et~al.(2020)Ribeiro, Tiels, Aguirre, and Schön]{ribeiro_beyond_2020}
A.~H. Ribeiro, K.~Tiels, Lu.~A. Aguirre, and T.~Schön.
\newblock Beyond exploding and vanishing gradients: analysing {RNN} training using attractors and smoothness.
\newblock In \emph{Proceedings of the {Twenty} {Third} {International} {Conference} on {Artificial} {Intelligence} and {Statistics}}, pages 2370--2380. PMLR, June 2020.
\newblock URL \url{https://proceedings.mlr.press/v108/ribeiro20a.html}.
\newblock ISSN: 2640-3498.

\bibitem[Rudy et~al.(2017)Rudy, Brunton, Proctor, and Kutz]{rud}
S.~H. Rudy, S.~L. Brunton, J.~L. Proctor, and J.~N. Kutz.
\newblock Data-driven discovery of partial differential equations.
\newblock \emph{Science Advances}, 3\penalty0 (4):\penalty0 e1602614, 2017.
\newblock \doi{10.1126/sciadv.1602614}.

\bibitem[Rumelhart et~al.(1986)Rumelhart, Hinton, and Williams]{Rumelhart_1986_learning}
D.~E. Rumelhart, G.~E. Hinton, and R.~J. Williams.
\newblock Learning representations by back-propagating errors.
\newblock \emph{nature}, 323\penalty0 (6088):\penalty0 533--536, 1986.

\bibitem[Sauer et~al.(1991)Sauer, Yorke, and Casdagli]{sauer1991embedology}
T.~Sauer, J.~A. Yorke, and M.~Casdagli.
\newblock Embedology.
\newblock \emph{Journal of statistical Physics}, 65:\penalty0 579--616, 1991.

\bibitem[Saxe et~al.(2013)Saxe, McClelland, and Ganguli]{saxe}
A.~M. Saxe, J.~L. McClelland, and S.~Ganguli.
\newblock Exact solutions to the nonlinear dynamics of learning in deep linear neural networks, 2013.
\newblock URL \url{https://arxiv.org/abs/1312.6120}.

\bibitem[Schmidt et~al.(2021)Schmidt, Koppe, Monfared, Beutelspacher, and Durstewitz]{sch}
D.~Schmidt, G.~Koppe, Z.~Monfared, M.~Beutelspacher, and D.~Durstewitz.
\newblock Identifying nonlinear dynamical systems with multiple time scales and long-range dependencies.
\newblock \emph{In proceedings of the Ninth International Conference on Learning Representations, ICLR}, 2021.

\bibitem[Smith et~al.(2021)Smith, Linderman, and Sussillo]{smith_reverse_2021}
J.~Smith, S.~Linderman, and D.~Sussillo.
\newblock Reverse engineering recurrent neural networks with {Jacobian} switching linear dynamical systems.
\newblock In \emph{Advances in {Neural} {Information} {Processing} {Systems}}, volume~34, pages 16700--16713. Curran Associates, Inc., 2021.

\bibitem[Spielman and Teng(2001)]{spielman}
D.~A. Spielman and S.~Teng.
\newblock Smoothed analysis of algorithms: Why the simplex algorithm usually takes polynomial time, 2001.
\newblock URL \url{https://arxiv.org/abs/cs/0111050}.

\bibitem[Strogatz(2015)]{strogatz}
S.~H. Strogatz.
\newblock \emph{Nonlinear Dynamics and Chaos: Applications to Physics, Biology, Chemistry, and Engineering: With Applications to Physics, Biology, Chemistry and Engineering}.
\newblock CRC Press, 2015.

\bibitem[Sushko and Gardini(2006)]{sus3}
I.~Sushko and L.~Gardini.
\newblock Center bifurcation for a two-dimensional piecewise linear map, in business cycles dynamics.
\newblock \emph{Models and Tools, eds. Puu, T. \& Sushko, I. (Springer-Verlag, NY)}, pages 49--–78, 2006.

\bibitem[Sushko and Gardini(2008)]{sus2}
I.~Sushko and L.~Gardini.
\newblock Center bifurcation for two-dimensional border-collision normal form.
\newblock \emph{International Journal of Bifurcation and Chaos}, 18:\penalty0 1029--1050, 2008.

\bibitem[Sussillo and Barak(2013)]{sussillo_opening_2013}
D.~Sussillo and O.~Barak.
\newblock Opening the {Black} {Box}: {Low}-{Dimensional} {Dynamics} in {High}-{Dimensional} {Recurrent} {Neural} {Networks}.
\newblock \emph{Neural Computation}, 25\penalty0 (3):\penalty0 626--649, March 2013.
\newblock ISSN 0899-7667, 1530-888X.
\newblock \doi{10.1162/NECO_a_00409}.

\bibitem[Turner et~al.(2021)Turner, Dabholkar, and Barak]{turner}
E.~Turner, K.~V. Dabholkar, and O.~Barak.
\newblock Charting and navigating the space of solutions for recurrent neural networks.
\newblock In \emph{Advances in Neural Information Processing Systems}, volume~34, pages 25320--25333. Curran Associates, Inc., 2021.

\bibitem[Vlachas et~al.(2018)Vlachas, Byeon, Wan, Sapsis, and Koumoutsakos]{vlachas_2018}
P.~R. Vlachas, W.~Byeon, Z.~Y. Wan, T.~P. Sapsis, and P.~Koumoutsakos.
\newblock Data-driven forecasting of high-dimensional chaotic systems with long short-term memory networks.
\newblock \emph{Proceedings of the Royal Society A: Mathematical, Physical and Engineering Sciences}, 474\penalty0 (2213):\penalty0 20170844, 2018.
\newblock \doi{10.1098/rspa.2017.0844}.

\bibitem[Vlachas et~al.(2020)Vlachas, Pathak, Hunt, Sapsis, Girvan, Ott, and Koumoutsakos]{VLACHAS2020191}
P.R. Vlachas, J.~Pathak, B.R. Hunt, T.P. Sapsis, M.~Girvan, E.~Ott, and P.~Koumoutsakos.
\newblock Backpropagation algorithms and reservoir computing in recurrent neural networks for the forecasting of complex spatiotemporal dynamics.
\newblock \emph{Neural Networks}, 126:\penalty0 191--217, 2020.

\bibitem[Wang et~al.(2019)Wang, Han, Liu, and Luo]{Wang}
C.~Wang, D.~Han, Q.~Liu, and S.~Luo.
\newblock A deep learning approach for credit scoring of peer-to-peer lending using attention mechanism lstm.
\newblock \emph{IEEE Access}, 7:\penalty0 2161--2168, 2019.
\newblock \doi{10.1109/ACCESS.2018.2887138}.

\bibitem[Zhang and Wang(2009)]{zhang2009controlling}
D.~Zhang, H.and~Liu and Z.~Wang.
\newblock \emph{Controlling chaos: suppression, synchronization and chaotification}.
\newblock Springer Science \& Business Media, 2009.

\bibitem[Zheng et~al.(2017)Zheng, Zaheer, Ahmed, Wang, Xing, and Smola]{zheng_2017}
Xun Zheng, Manzil Zaheer, Amr Ahmed, Yuan Wang, Eric~P. Xing, and Alexander~J. Smola.
\newblock State space {LSTM} models with particle {MCMC} inference.
\newblock \emph{CoRR}, abs/1711.11179, 2017.
\newblock URL \url{http://arxiv.org/abs/1711.11179}.

\bibitem[Zhu et~al.(2023)Zhu, Wang, Wang, Zhou, and Ge]{zhu2023understanding}
X.~Zhu, Z.~Wang, X.~Wang, M.~Zhou, and R.~Ge.
\newblock Understanding edge-of-stability training dynamics with a minimalist example.
\newblock In \emph{The Eleventh International Conference on Learning Representations}, 2023.
\newblock URL \url{https://openreview.net/forum?id=p7EagBsMAEO}.

\end{thebibliography}
\bibliographystyle{plainnat}

   \beginsupplement 
    \appendix
    \onecolumn
    \section{Appendix}\label{app}
     \subsection{Analysis of bifurcation curves}\label{app-bifcurves}
      \subsubsection{PLRNNs}\label{ap-2dPLRNN}
       The standard PLRNN \citep{dur}, given in eq. \eqref{eq-rnn-kol} in sect. \ref{sec:theoretical-bifs}, was defined by 
\begin{align}\nonumber
\vz_t = F_{\boldsymbol\theta}(\vz_{t-1},\vs_{t})\, = \,  \mA \, \vz_{t-1}\, + \, \mW \phi(\vz_{t-1}) \, +\, \mC \vs_t \,+ \, \vh,
\end{align}
where $\phi(\vz_{t-1})=\max(\vz_{t-1}, 0)$. 
There are various extensions of this basic architecture like the dendPLRNN \citep{brenner_tractable_2022} or the `shallow PLRNN' (shPLRNN) \citep{hess}, as used in sect. \ref{sec:LossJumps} for training on single cell membrane potentials. The latter is essentially a 1-hidden-layer version of the form 
\begin{align}\label{eq-shplrnn}
\vz_t = F_{\boldsymbol\theta}(\vz_{t-1},\vs_{t}) =  \mA \, \vz_{t-1} +  \mW_1 \phi(\mW_2\vz_{t-1}+\vh_2) + \mC \vs_t +  \vh_1, 
\end{align}
with $\mW_1 \in \mathbb{R} ^{M\times L}$ and $\mW_2 \in \mathbb{R} ^{L\times M}$, $L \geq M$, connectivity matrices, $\vh_1\in\sR^{M}$, $\vh_2\in\sR^{L}$ bias terms, and all other parameters and variables as in eq. \eqref{eq-rnn-kol}. While this formulation is beneficial for training, the shPLRNN can essentially be rewritten in standard PLRNN form (see \citep{hess}). 

Assume that $\mD_{\Omega(t)} \coloneqq \mathrm{diag}(\vd_{\Omega(t)})$ is a diagonal matrix with an indicator vector $\vd_{\Omega(t)} \coloneqq \left(d_1, d_2, \cdots, d_M \right)$ such that $d_m(z_{m,t})=: d_m = 1$ whenever $z_{m,t}>0$, and zero otherwise. Then eq. \eqref{eq-rnn-kol} can be rewritten as 
\begin{align}\nonumber
 \vz_t\, =\, (\mA + \mW \mD_{\Omega(t-1)}) \vz_{t-1} \, +\, \mC \vs_t \,+ \, \vh 
 \,=:\, \mW_{\Omega(t-1)} \, \vz_{t-1} \, +\, \mC \vs_t \,+ \, \vh.
\end{align}
Let us ignore the inputs for simplicity. There are $2^M$ different configurations for matrix $\mD_{\Omega(t-1)}$ and so $2^M$ different forms for matrix $\,\mW_{\Omega({t-1})}\,
$ in the system
\begin{align}\label{eq-th-1}
\vz_t \,= \, F_{\boldsymbol\theta}(\vz_{t-1}) \, =\, \mW_{\Omega(t-1)} \, \vz_{t-1}  \,+ \, \vh.
\end{align}
Thus, the phase space of the system is divided into $2^M$ sub-regions corresponding to the indexed matrices
    \begin{align}\label{w_omega}
    \mW_{\Omega^k}:= \mA +\mW \mD_{\Omega^k}, \hspace{.5cm} k = 1,2, \cdots, 2^M,  %
    \end{align}
    see \citep{mon1,mon2} for more details. 
       For $M=2$, assuming 
     \begin{align}\label{w_matrix}
    \mW = \begin{pmatrix}
    w_{11} &  0\\[1ex]
    w_{21} &  0
    \end{pmatrix},
     \end{align}
    in \eqref{w_omega}, we have
     \begingroup
    \allowdisplaybreaks
     \begin{align}\nonumber
    & \mW_{\Omega^1}= \mW_{\Omega^3} = \begin{pmatrix}
    a_{11} &  0\\[1ex]
    0 & a_{22}
    \end{pmatrix} = \mA,
    \\[1ex]
    & \mW_{\Omega^2}=\mW_{\Omega^4}= \begin{pmatrix}
    a_{11} + w_{11} &  0\\[1ex]
    w_{21} & a_{22}
    \end{pmatrix}.
     \end{align}
     \endgroup
    Hence, for this parameter constellation, the map simplifies as there exists only one border which divides the phase space into two distinct sub-regions, such that \eqref{eq-th-1} 
    can be rewritten as a map of the form 
\begingroup
\allowdisplaybreaks
\begin{align}\nonumber
\begin{pmatrix}
z_{1, t} \\[1ex]
z_{2, t}
\end{pmatrix}
& 
\, = \,T (z_{1, t-1}, z_{2,t-1})
\\[1ex]\label{eq-PWL2D}
& 
\, = \
\begin{cases}
T_{\mathcal{L}}(z_{1, t-1}, z_{2, t-1}) \, = \, 
\underbrace{\begin{pmatrix}
a_l & & c \\[1ex]
b_l & & d
\end{pmatrix}}_{\mA_{\mathcal{L}}}
\begin{pmatrix}
z_{1, t-1} \\[1ex]
z_{2, t-1}
\end{pmatrix}
+ 
\begin{pmatrix}
h_1 \\[1ex]
h_2
\end{pmatrix} 
; \hspace{.3cm} z_{1, t-1} \leq 0
\\[7ex]
T_{\mathcal{R}}(z_{1, t-1}, z_{2, t-1}) \, = \, 
\underbrace{\begin{pmatrix}
a_r & & c \\[1ex]
b_r & & d
\end{pmatrix}}_{\mA_{\mathcal{R}}}
\begin{pmatrix}
z_{1, t-1} \\[1ex]
z_{2, t-1}
\end{pmatrix}
+ 
\begin{pmatrix}
h_1 \\[1ex]
h_2
\end{pmatrix}
; \hspace{.3cm} z_{1, t-1} \geq 0
\end{cases},
\end{align}
\endgroup

with $\, a_l=a_{11}, \, a_r=a_{11}+w_{11}, \, b_r=w_{21}, \, d=a_{22}, \, b_l=c=0$. The map \eqref{eq-PWL2D} is a PWL dynamical system whose phase space is split into left and right half‐planes (sub-regions) by the borderline $\Sigma$ ($z_2$-axis). Note that bifurcation curves of the $2d$ PLRNN \eqref{eq-th-1} in the $(a_{11}, \, a_{11}+w_{11})$-parameter space can be determined 
analogous to those of the PWL map \eqref{eq-PWL2D} in the $(a_{l}, a_{r})$-parameter space.

Another way to simplify the PLRNN to a $2d$ ($M=2$) PWL map with just a single border is to remove one of the ReLU nonlinearities and define $\, \phi(\vz_{t-1})= (\phi_1(z_{1,t-1}), \beta \, z_{2,t-1})\tran$, where $\beta \in \mathbf{R}$ and $\,\phi_1$ is some variant of the $ReLU$ function such as the leaky or parametric $ReLU$ given by  
\begin{align}
\phi_1(z)= \begin{cases}
z; \hspace{1cm} z > 0
\\
\alpha \, z; \hspace{.9cm} z \leq 0
\end{cases} \hspace{.3cm} (\alpha \in \mathbf{R}).    
\end{align}
Then $\, \mD_{\Omega(t)} \coloneqq \mathrm{diag}(d_1, \beta ) \, $ such that 
\begin{align}
d_1(z_{1,t})=: d_1 = \begin{cases}
1;\hspace{1cm} z_{1,t}>0 \\
\alpha ;\hspace{.95cm} z_{1,t} \leq 0 
\end{cases},
\end{align}
and so
\begin{align}\nonumber
& \mW_{\Omega^1}= \mW_{\Omega^3} \, = \,\begin{pmatrix}
a_{11}+\alpha w_{11} & & \beta w_{12} \\[1ex]
\alpha w_{21} & & a_{22}+\beta w_{22}
\end{pmatrix} \, =: \,\begin{pmatrix}
a_l & & c \\[1ex]
b_l & & d
\end{pmatrix},
\\[1ex]\label{}
& \mW_{\Omega^2}= \mW_{\Omega^4}\, = \,\begin{pmatrix}
a_{11}+ w_{11} & & \beta w_{12} \\[1ex]
 w_{21} & & a_{22}+\beta w_{22}
\end{pmatrix} \, =: \,\begin{pmatrix}
a_r & & c \\[1ex]
b_r & & d
\end{pmatrix}.   
\end{align}
This gives another example of a representative of $2d$ PWL maps with only one border defined in eq. \eqref{eq-PWL2D}. We are pointing this out because eq. \eqref{eq-PWL2D} is a generic system considered more widely in the discrete dynamical systems literature \citep{avr2,avr}, and also was the basis for the analyses below and in Fig.~\ref{fig:MAB_1}. 
    \subsubsection{Fixed points of the map \eqref{eq-PWL2D} and their bifurcations
    }\label{app-fp}
    For $\, a_{l}, a_{r}, b_{l}, b_{r}, c, d, h_1, h_2 \in \mathbb{R}$, the map \eqref{eq-PWL2D} has the following two fixed points 
    \begingroup
    \allowdisplaybreaks
    \begin{align}\label{}
\mathcal{O}_{\mathcal{L}/\mathcal{R}} 
    \, = \, \big(z_1^{\mathcal{L}/\mathcal{R}}, z_2^{\mathcal{L}/\mathcal{R}} \big)\tran
    \, = \,  \bigg(\frac{(1-d)\, h_1 + c \, h_2}{(1-d)(1-a_{l/r})-b_{l/r}\, c}, \, \frac{b_{l/r}\, h_1 +(1-a_{l/r}) \, h_2}{(1-d)(1-a_{l/r})-b_{l/r}\, c} \bigg)\tran.
        \end{align}
    \endgroup
    The fixed points $\mathcal{O}_{\mathcal{L}}$ and $\mathcal{O}_{\mathcal{R}}$ exist iff $z_1^{\mathcal{L}}<0$ and $z_1^{\mathcal{R}}>0$ respectively; otherwise they are virtual. Hence, the existence regions of admissible fixed points are
    \begingroup
    \allowdisplaybreaks
    \begin{align}\nonumber
    & E_{\mathcal{O}_{\mathcal{L}}}\, = \, 
    \bigg\{ (h_1, h_2, a_{l}, b_{l}, c, d)  \, \big| \,  \, \, \,
    \frac{(1-d)\, h_1 + c \, h_2}{(1-d)(1-a_{l})-b_{l} \, c} < 0 \bigg\},
    \\[1ex]\label{}
    & E_{\mathcal{O}_{\mathcal{R}}} \, = \, 
    \bigg\{ (h_1, h_2, a_{r}, b_{r}, c, d)  \, \big| \,  \, \, \,
    \frac{(1-d)\, h_1 + c \, h_2}{(1-d)(1-a_{r})-b_{r} \, c} >0 \bigg\}.
    \end{align}
    \endgroup
    Let $\mathcal{D}_{{\mathcal{L}/\mathcal{R}}}$ be the determinant and $\mathcal{T}_{{\mathcal{L}/\mathcal{R}}}$ the trace of 
    $\mA_{\mathcal{L}/\mathcal{R}}$, and
    \begin{align}\label{polyFP}
    \mathcal{P}_{\mathcal{L}/\mathcal{R}}(\lambda) \, = \,  \lambda^2 - ( a_{l/r} + d)\lambda + a_{l/r}\,d - b_{l/r}\,c \,= \, \lambda^2 - \mathcal{T}_{{\mathcal{L}/\mathcal{R}}}\, \lambda + \mathcal{D}_{{\mathcal{L}/\mathcal{R}}}, 
    \end{align}
    its characteristic polynomial.
    The corresponding eigenvalues are given by 
    \begingroup
    \allowdisplaybreaks
    \begin{align}\label{}
    \lambda_{1,2}(\mathcal{O}_{\mathcal{L}/\mathcal{R}}) 
    \, = \, \frac{a_{l/r}+d}{2} \pm \frac{\sqrt{(a_{l/r}-d)^2+4\, b_{l/r} \, c}}{2} 
    %
    %
    \, = \, \frac{\mathcal{T}_{{\mathcal{L}/\mathcal{R}}}}{2} \pm \frac{\sqrt{\mathcal{T}_{{\mathcal{L}/\mathcal{R}}}^2-4 \mathcal{D}_{{\mathcal{L}/\mathcal{R}}}}}{2},   
    \end{align}
    \endgroup
    which are always real for $b_{l/r}\, c \geq 0$, while for $b_{l/r}\, c<0$ they are real provided that $| a_{l/r}-d | > 2 \sqrt{ - b_{l/r}\, c}$. For complex conjugate eigenvalues of $\mA_{\mathcal{L}/\mathcal{R}}$ obviously $|\lambda|^2 \,= \, \mathcal{D}_{{\mathcal{L}/\mathcal{R}}}$. Thus computing the real eigenvalues, the stability condition for the fixed points is determined as
    \begin{align}\label{eq-h2}
    -(1+ \mathcal{D}_{{\mathcal{L}/\mathcal{R}}}) \,<\, \mathcal{T}_{{\mathcal{L}/\mathcal{R}}} \,< \, 1+ \mathcal{D}_{{\mathcal{L}/\mathcal{R}}}.
    \end{align}
    Accordingly, the stability region of the fixed points $\mathcal{O}_{\mathcal{L}}$ and $\mathcal{O}_{\mathcal{R}}$ can be obtained by $\mathcal{P}_{\mathcal{L}/\mathcal{R}}(\pm 1) \, = \, 1 \mp (a_{l/r}+d)+ a_{l/r} \,d - b_{l/r}\, c
    >0$ and $\mathcal{D}_{{\mathcal{L}/\mathcal{R}}} <1$ as 
    \begingroup
    \allowdisplaybreaks
    \begin{align}\nonumber
\mathcal{S}_{{\mathcal{L}/\mathcal{R}}} &\, = \,
    %
    %
    \bigg\{ (h_1, h_2, a_{l/r}, b_{l/r}, c, d) \in E_{\mathcal{O}_{\mathcal{L}/\mathcal{R}}}\, \big| \, a_{l/r} \, d - b_{l/r} \, c <1,
    \\[1ex]\label{eq-st1-fp}
    & \hspace{1.3cm}
    \, 1 \pm (a_{l/r}+d) 
    \, + a_{l/r} \,d - b_{l/r}\, c >0 \bigg\}.
    \end{align}
    \endgroup
    Note that when $\mathcal{D}_{{\mathcal{L}/\mathcal{R}}}<0$, all the eigenvalues are real and so there cannot be any spiralling orbit.
\begin{remark}
Consider the PLRNN \eqref{eq-cx-plrnn2} with $M=2$. For the parameter setting \eqref{params-M-2}, i.e.
     \begingroup
    \allowdisplaybreaks
     \begin{align}\label{}
    & \mW_{\Omega^1}= \mW_{\Omega^3} = \begin{pmatrix}
    a_{11} &  0\\[1ex]
    0 & a_{22}
    \end{pmatrix} \, =: \, \mA_{\mathcal{L}},
     \hspace{1cm}
    \mW_{\Omega^2}=\mW_{\Omega^4}= \begin{pmatrix}
    a_{11} + w_{11} &  0\\[1ex]
    w_{21} & a_{22}
    \end{pmatrix}\, =: \, \mA_{\mathcal{R}},
     \end{align}
     \endgroup
the two fixed points $ \mathcal{O}_{\mathcal{L}/\mathcal{R}} =\big(z_1^{\mathcal{L}/\mathcal{R}}, z_2^{\mathcal{L}/\mathcal{R}} \big)\tran$ are given by
    \begingroup
    \allowdisplaybreaks
    \begin{align}\label{}
    \mathcal{O}_{\mathcal{L}}
   &
     =  \bigg(\frac{h_1 }{1-a_{11}}, \, \frac{h_2}{1-a_{22}} \bigg)\tran,
    \hspace{.5cm}
    \mathcal{O}_{\mathcal{R}} 
    =  \bigg(\frac{h_1 }{1-a_{11} - w_{11}}, 
   \frac{w_{21}\, h_1 +(1-a_{11} - w_{11}) \, h_2}{(1-a_{22})(1-a_{11} - w_{11})} \bigg)\tran.
    \end{align}
    \endgroup
    Hence, the existence regions of admissible fixed points are
    \begingroup
    \allowdisplaybreaks
    \begin{align}\nonumber
   %
E_{\mathcal{O}_{\mathcal{L}}} = \bigg\{ (h_1, a_{11}, a_{22}) 
    \big| \,  
    \frac{
    h_1 }{
    1-a_{11}} < 0 \bigg\},
   \hspace{.6cm}
   %
E_{\mathcal{O}_{\mathcal{R}}}  = \bigg\{ (h_1, a_{11}, a_{22}, w_{11}) 
   \big| \,   
    \frac{
     h_1 }{
    1-a_{11} - w_{11}} >0 \bigg\},
    \end{align}
    \endgroup
    and their stability regions can be obtained  as 
    \small
    \begingroup
   \allowdisplaybreaks
   \begin{align}\label{}
     \mathcal{S}_{\mathcal{L}} & =  \bigg\{ (h_1, a_{11}, a_{22}) \in E_{\mathcal{O}_{\mathcal{L}}}\, \big| \, a_{11} \, a_{22}  <1, \, \,
   1 \pm (a_{11}+a_{22}) 
    \, + a_{11} \,a_{22}  > 0 \bigg\},
    \\[1ex]\nonumber
     \mathcal{S}_{\mathcal{R}} &  =  \bigg\{ (h_1, a_{11}, a_{22}, w_{11})\in E_{\mathcal{O}_{\mathcal{R}}} \big| ( a_{11} + w_{11}) a_{22}  <1, \, 
   1 \pm (a_{11} + w_{11}+a_{22})  
    + (a_{11} + w_{11}) a_{22} > 0 \bigg\}.
   \end{align}
   \endgroup
\normalsize
\end{remark}
    \begin{remark}\label{rem-1}
    If $b_{l/r}\, c=0$, then $\lambda_{1,2}(\mathcal{O}_{\mathcal{L}/\mathcal{R}})$ are real and the stability regions $\mathcal{S}_{{\mathcal{L}/\mathcal{R}}}$ become
    \begingroup
    \allowdisplaybreaks
    \begin{align}\label{eq-eq}
\mathcal{S}_{{\mathcal{L}/\mathcal{R}}} \, = \, \bigg\{ (h_1, h_2, a_{l/r}, b_{l/r}, c, d) \in E_{\mathcal{O}_{\mathcal{L}/\mathcal{R}}}\, \big| \,b_{l/r}\, c\,= \, 0, \, -1 \leq a_{l/r} \leq 1, 
    \, -1 \leq d \leq 1 \bigg\}.
    \end{align}
    \endgroup
    \end{remark}
    The fixed points are regular saddles for all parameters that belong to 
    \begingroup
    \allowdisplaybreaks
    \begin{align}\label{}
    \bigg\{ (h_1, h_2, a_{l/r}, b_{l/r}, c, d) \in E_{\mathcal{O}_{\mathcal{L}/\mathcal{R}}}\, \big| \, a_{l/r}+d>1, \, a_{l/r}\,d-a_{l/r}-d+1 
    \,<\,
     b_{l/r}\,c< a_{l/r}\,d\bigg\},
    \end{align}
    \endgroup
    and in this case $\lambda_{1}(\mathcal{O}_{\mathcal{L}/\mathcal{R}})>1, \, 0< \lambda_{2}(\mathcal{O}_{\mathcal{L}/\mathcal{R}})<1$. Furthermore, they are flip saddles (i.e., with one negative eigenvalue) if parameters are in
    \begingroup
    \allowdisplaybreaks
    \begin{align}\nonumber
     &\bigg\{ (h_1, h_2, a_{l/r}, b_{l/r}, c, d) \in E_{\mathcal{O}_{\mathcal{L}/\mathcal{R}}}\, \big| \, a_{l/r}+d>1, \, a_{l/r}\,d < b_{l/r}\, c < a_{l/r}\,d+a_{l/r} +\,d\,+\,1 \bigg\}\, \bigcup
    \\[1ex]\nonumber
    & \bigg\{ (h_1, h_2, a_{l/r}, b_{l/r}, c, d) \in E_{\mathcal{O}_{\mathcal{L}/\mathcal{R}}}\, \big| \, d  - a_{l/r}-d+1 < b_{l/r}\, c < a_{l/r}\,d+a_{l/r}+d+1,
    \\[1ex]\label{}
    & \hspace{1cm}
    0<a_{l/r}+d \leq 1,\,a_{l/r} \bigg\},
    \end{align}
    \endgroup
    for which $\lambda_{1}(\mathcal{O}_{\mathcal{L}/\mathcal{R}})>1, \, -1< \lambda_{2}(\mathcal{O}_{\mathcal{L}/\mathcal{R}})<0$, as well as in 
    \begingroup
    \allowdisplaybreaks
    \begin{align}\nonumber
    & \bigg\{ (h_1, h_2, a_{l/r}, b_{l/r}, c, d) \in E_{\mathcal{O}_{\mathcal{L}/\mathcal{R}}}\, \big| \, a_{l/r}
    + d \leq -1, \, a_{l/r}\,d < b_{l/r}\, c < a_{l/r}\,d-a_{l/r}-d+1 \bigg\}\bigcup 
    \\[1ex]\nonumber
     &\bigg\{(h_1, h_2, a_{l/r}, b_{l/r}, c, d) \in E_{\mathcal{O}_{\mathcal{L}/\mathcal{R}}}\, \big| \,  a_{l/r}\,d+a_{l/r}+d+1 <  b_{l/r}\, c < a_{l/r}\,d-a_{l/r}+d-1, 
    \\[1ex]\label{}
    & \hspace{1cm} 
    -1 < a_{l/r}+d<0 \bigg\}
    \end{align}
    \endgroup
    such that $0<\lambda_{1}(\mathcal{O}_{\mathcal{L}/\mathcal{R}})<1, \, \lambda_{2}(\mathcal{O}_{\mathcal{L}/\mathcal{R}})<-1$.
    \\
    
    When $b_{l/r}\, c<0$ and $| a_{l/r}-d | < 2 \sqrt{- b_{l/r}\, c}$, the eigenvalues are complex conjugates and both $\mathcal{O}_{\mathcal{L}}$ and $\mathcal{O}_{\mathcal{R}}$ are spirally attracting (attracting focus) if $a_{l/r} \, d - b_{l/r} \, c <1$. In this case, if $a_{l/r} +d>0$ then they are clockwise spiral, while for $a_{l/r} +d < 0$ the spiralling motion will be counterclockwise. Moreover, for $a_{l/r} \, d - b_{l/r} \, c >1$ they are repelling foci. Finally, for $a_{l/r} \, d - b_{l/r} \, c =1$, the fixed points are locally centers and they undergo a CB at the following boundaries:
    \begingroup
    \allowdisplaybreaks
    \begin{align}\nonumber
    &\mathcal{C}_{{\mathcal{L}}} \, = \, 
    \bigg\{ (a_{l}, b_{l}, c, d) \big| \,  b_{l}\, c<0,\, | a_{l}-d | < 2 \sqrt{- b_{l}\, c} , \, a_{l} \, d - b_{l} \, c =1  \bigg\},
    \\[1ex]\label{}
    &\mathcal{C}_{{\mathcal{R}}} \, = \, \bigg\{ (a_{r}, b_{r}, c, d) \big| \,  b_{r}\, c<0,\, | a_{r}-d | < 2 \sqrt{ - b_{r}\, c}, \, a_{r} \, d - b_{r} \, c =1 \bigg\}.
    \end{align}
    \endgroup
    At these boundaries, the fixed points lose their stability with a pair of complex conjugate eigenvalues crossing the unit circle. For the parameters belonging to $\mathcal{C}_{{\mathcal{L}/\mathcal{R}}}$, the Jacobian $J_{\mathcal{L}/\mathcal{R}}$ is a rotation matrix whose determinant is equal to $1$. In this case, $J_{\mathcal{L}/\mathcal{R}}$ can be determined by a rotation number which is either rational ($\frac{p}{q}$) or irrational ($\rho$). Therefore, in some neighborhood of $\mathcal{O}_{{\mathcal{L}/\mathcal{R}}}$, there is a region filled with invariant ellipses such that they are periodic with period $p$ (if the rotation number is a rational number $\frac{p}{q}$) or quasiperiodic (if the rotation number is an irrational number $\rho$); for more information see \citep{sus2,sus3}. For $(1-d)\, h_1 + c \, h_2 \neq 0$, at the boundary
    \begin{align}\label{eq-ta1}
     \tau_{\mathcal{L}}\, = \, 
    \bigg\{ (h_1, h_2, a_{l}, b_{l}, c, d) \big| \,  1 - a_{l}-d+ a_{l} \,d - b_{l}\, c = 0 \bigg\},
    \end{align}
    the fixed point $\mathcal{O}_{\mathcal{L}}$ undergoes a DTB, since, if the parameters tend to $\tau_{\mathcal{L}}$, then $\mathcal{O}_{\mathcal{L}} \rightarrow \pm \infty$ and $\lambda(\mathcal{O}_{\mathcal{L}}) \rightarrow 1$. Similarly, for $(1-d)\, h_1 + c \, h_2 \neq 0$, a DTB 
 occurs for the fixed point $\mathcal{O}_{\mathcal{R}}$ at the boundary
    \begin{align}\label{eq-ta2}
     \tau_{\mathcal{R}}\, = \, 
    \bigg\{ (h_1, h_2, a_{r}, b_{r}, c, d) \big| \,  1 - a_{r}-d+ a_{r} \,d - b_{r}\, c = 0 \bigg\}.
    \end{align}
    A DTB of a fixed point results in its disappearance, as in this case the fixed point becomes virtual which may lead to changes in the global dynamics \citep{avr}. Furthermore, the BCB curves are given by
    \begin{align}\label{}
     \xi_{\mathcal{L}} \, = \, 
    \bigg\{ (h_1, h_2, a_{l}, b_{l}, c, d) 
    \, \big| \, (1-d)(1-a_{l})-b_{l}\, c \neq 0, \, (1-d)\, h_1 + c \, h_2 = 0 \bigg\},
    \end{align}
    and
    \begin{align}\label{}
     \xi_{\mathcal{R}} \, = \, 
    \bigg\{ (h_1, h_2, a_{r}, b_{r}, c, d) 
    \, \big| \, (1-d)(1-a_{r})-b_{r}\, c \neq 0, \, (1-d)\, h_1 + c \, h_2 = 0 \bigg\}.
    \end{align}
    In addition, the DFB curves for the fixed points $\mathcal{O}_{\mathcal{L}}$ and $\mathcal{O}_{\mathcal{R}}$ are
    \begingroup
    \allowdisplaybreaks
    \begin{align}\nonumber
    &\mathcal{F}_{{\mathcal{L}}} \, = \, 
    \bigg\{ (h_1, h_2, a_{l}, b_{l}, c, d) \big| \,  1 + a_{l}+d+ a_{l} \,d - b_{l}\, c = 0 \bigg\}, 
    \\[1ex]\label{eq-fi1}
    &\mathcal{F}_{{\mathcal{R}}} \, = \, \bigg\{ (h_1, h_2, a_{r}, b_{r}, c, d) \big| \,  1+ a_{r}+d+ a_{r} \,d - b_{r}\, c = 0 \bigg\}.
    \end{align}
    \endgroup
    \begin{remark}\label{rem-2}
    The existence regions $E_{\mathcal{O}_{\mathcal{L}}}$ and $E_{\mathcal{O}_{\mathcal{R}}}$ are bounded by the BCB curves $\xi_{\mathcal{L}}$ and $\xi_{\mathcal{R}}$.
    \end{remark}
    \begin{remark}\label{rem-3}
    The stability regions $\mathcal{S}_{{\mathcal{L}/\mathcal{R}}}$ of fixed points (eq. \eqref{eq-st1-fp}) are bounded by the DTB curves $\tau_{{\mathcal{L}/\mathcal{R}}}$ (eqn. \eqref{eq-ta1} and \eqref{eq-ta2}), the DFB curves $\mathcal{F}_{{\mathcal{L}/\mathcal{R}}}$ (eq. \eqref{eq-fi1}), and the CB 
    curves $ a_{l/r} \, d - b_{l/r} \, c=1$. For instance, for $d=1$, $\mathcal{S}_{{\mathcal{L}/\mathcal{R}}}$ are illustrated in Fig.~\ref{fig1}$(a)$. In this case, the stability regions only exist for $b_{l/r}\, c<0$ and $ a_{l/r}-d< -2 \sqrt{ - b_{l/r}\, c}$. Moreover, as shown in Fig.~\ref{fig1}$(b)$, for $d=0.01$, these stability regions can exist for both cases $b_{l/r}\, c<0$, $ a_{l/r}-d< -2 \sqrt{ - b_{l/r}\, c\, }$ (in blue), and $b_{l/r}\, c<0$, $ a_{l/r}-d> 2 \sqrt{ - b_{l/r}\, c \,}$ (in green), but not for $b_{l/r}\, c \geq 0$. Furthermore, if $c=1$, there are stability regions $\mathcal{S}_{{\mathcal{L}/\mathcal{R}}}$ for the two cases $b_{l/r}\, c<0$, $ a_{l/r}-d< -2 \sqrt{ - b_{l/r}\, c \,}$ (in blue), and $b_{l/r}\, c>0$ (in purple); see Fig.~\ref{fig2}$(a)$. Finally, when $c=0$, as explained in Remark \ref{rem-1}, the stability regions have the form \eqref{eq-eq}, i.e.
    \begin{align}\label{}
\mathcal{S}_{{\mathcal{L}/\mathcal{R}}} \, = \, \bigg\{ & (h_1, h_2, a_{l/r}, b_{l/r}, c, d) \in E_{\mathcal{O}_{\mathcal{L}/\mathcal{R}}}\, \big| \, c=0, \, b_{l/r} \in \mathbb{R}, \, -1 \leq a_{l/r}, \, d \leq 1 \bigg\}.
    \end{align}
    which are displayed in Fig.~\ref{fig2}$(b)$.
    \begin{figure}[hbt]
    \centering
    \includegraphics[scale=0.47]{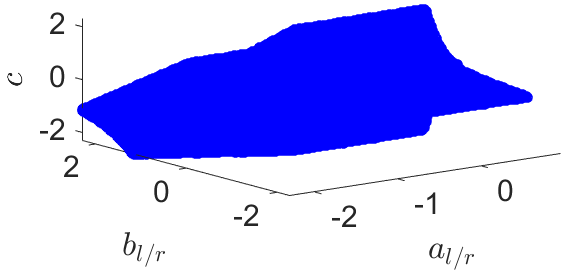}
    \hspace{.9cm}
   \includegraphics[scale=0.47]{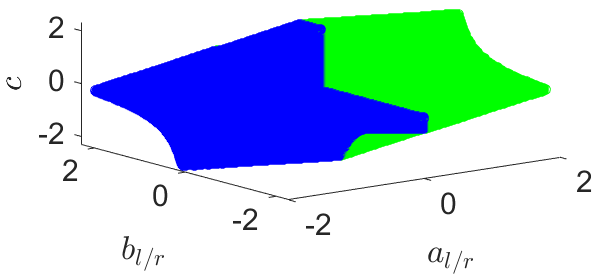}
   \vspace{-.2cm}
    \caption{Stability regions $\mathcal{S}_{{\mathcal{L}/\mathcal{R}}}$. Left: for $d=1$; right: for $d=0.01$. The case $b_{l/r}\, c<0$, $ a_{l/r}-d< -2 \sqrt{ - b_{l/r}\, c}\,$ is plotted in blue, and the case $b_{l/r}\, c<0$, $ a_{l/r}-d> 2 \sqrt{- b_{l/r}\, c}\,$ is drawn in green.}\label{fig1}
    \end{figure}
    \begin{figure}[hbt]
    \centering
    \includegraphics[scale=0.54]{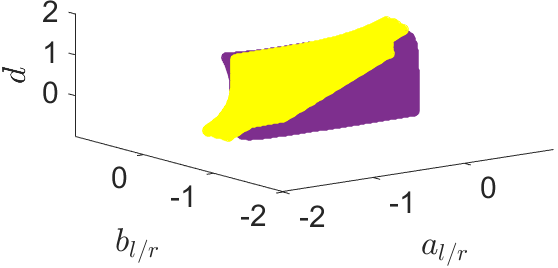}
    \hspace{.9cm}
    \includegraphics[scale=0.54]{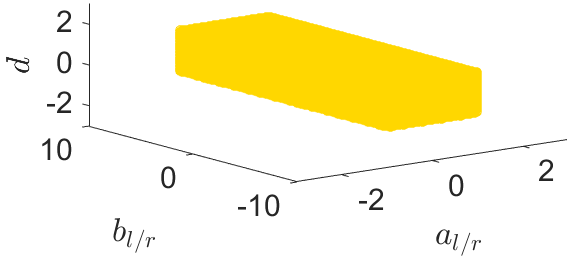}
     \vspace{-.2cm}
    \caption{Stability regions $\mathcal{S}_{{\mathcal{L}/\mathcal{R}}}$. Left: for $c=1$; the case $b_{l/r}\, c<0$, $ a_{l/r}-d< -2 \sqrt{ - b_{l/r}\, c}\,$ is plotted in yellow, and the case $b_{l/r}\, c>0$ is drawn in purple. Right: for $c=0$.}\label{fig2}
    \end{figure}
    \end{remark}
    Since the system \eqref{eq-PWL2D} is a linear map in each sub-region $\mathcal{L}$ and $\mathcal{R}$, there cannot be any $n$-cycle, $n\geq2$, with all periodic points on only one linear side. So, all period-$n$ orbits have both letters $\mathcal{L}$ and $\mathcal{R}$ in their symbolic sequence.
    \subsubsection{2-cycles of the map \eqref{eq-PWL2D} and their bifurcations}\label{app-2cycle}
    The 2-cycle $\mathcal{O}_{\mathcal{R L}}$ of the map \eqref{eq-PWL2D} is determined by solving the equation $ T_{\mathcal{L}} \circ T_{\mathcal{R}} (z_1, z_2) \, = \, (z_1, z_2)\tran$
    where 
    \begingroup
    \allowdisplaybreaks
    \begin{align}\label{}
    T_{\mathcal{L}} \circ & T_{\mathcal{R}} (z_1, z_2) 
    \, = \,  \begin{pmatrix}
    a_l\ a_r + b_r\ c & & a_l\ c + c\ d \\[1ex]
    a_r\ b_l + b_r\ d & &  d^2 + b_l\ c
    \end{pmatrix}
    \begin{pmatrix}
    z_1 \\[1ex]
    z_2
    \end{pmatrix}
    \, + \,
    \begin{pmatrix}
     c\ h_2 + h_1\ \big(a_l + 1\big) \\[1ex]
     b_l\ h_1 + h_2 \big(d + 1\big)
    \end{pmatrix}.
    \end{align}
    \endgroup
    In this case if $\big(I-J_{\mathcal{L}} \, J_{\mathcal{R}} \big)$ is invertible, then the solution 
    $(z_1,z_2)\tran = \big(z_1^{(1)},z_2^{(1)}\big)\tran $ is given by
    \begingroup
    \allowdisplaybreaks
   \begin{align}\nonumber
     & 
     \big(z_1^{(1)},z_2^{(1)}\big)\tran
    %
    \, = \,  \bigg( \frac{\big((1-d)h_1 + c\ h_2 \big) \big(a_l + d + a_l\ d - b_l\ c + 1 \big)}{(a_r\,d - b_r\,c)( a_l\,d - b_l\,c)-c( b_l+ b_r) - d^2 - a_l\,a_r + 1}, 
    \\[1ex]\label{solution2cycle1}
    & \hspace{1cm} \frac{h_2 \big(1 + d - a_l\ a_r- b_r\ c- a_l\ a_r\ d + a_r\ b_l\ c\big)
    + h_1 \big(b_l + a_r\ b_l + b_r\ d + a_l\ b_r\ d - b_l\ b_r\ c \big)}{(a_r\,d - b_r\,c)( a_l\,d - b_l\,c)-c( b_l+ b_r) - d^2 - a_l\,a_r + 1}
    \bigg).
    \end{align}
    \endgroup
    Also $\,T_{\mathcal{R}}
    \big(z_1^{(1)},z_2^{(1)}\big) = 
    \big(z_1^{(2)},z_2^{(2)}\big)\tran$ yields
    \begingroup
    \allowdisplaybreaks
    \begin{align}\nonumber
    & 
    \big(z_1^{(2)},z_2^{(2)}\big)\tran
    %
  \, = \, \bigg( \frac{\big((1-d)h_1 + c\ h_2 \big) \big(a_r + d + a_r\ d - b_r\ c + 1 \big)}{(a_r\,d - b_r\,c)( a_l\,d - b_l\,c)-c( b_l+ b_r) - d^2 - a_l\,a_r + 1}, 
    \\[1ex]\label{solution2cycle2}
    & \hspace{1cm}  \frac{h_2 \big(1 + d - a_l\ a_r- b_l\ c- a_l\ a_r\ d + a_l\ b_r\ c\big)
    + h_1 \big(b_r + a_l\ b_r + b_l\ d + a_r\ b_l\ d - b_l\ b_r\ c \big)}{(a_r\,d - b_r\,c)( a_l\,d - b_l\,c)-c( b_l+ b_r) - d^2 - a_l\,a_r + 1}
    \bigg).
    \end{align}
    \endgroup
    ~\\
    Hence, the existence region of the 2-cycle $\mathcal{O}_{\mathcal{R L}}$ is
    \begingroup
    \allowdisplaybreaks
    \begin{align}\nonumber
    & E_{\mathcal{O}_{\mathcal{R L}}} \, = \, 
    \bigg\{ (h_1, h_2, a_{l}, b_{l}, c, d) 
    \, \big|
    %
    %
    \frac{\big((1-d)h_1 + c\ h_2 \big) \big(a_l + d + a_l\ d - b_l\ c + 1 \big)}{(a_r\,d - b_r\,c)( a_l\,d - b_l\,c)-c( b_l+ b_r) - d^2 - a_l\,a_r + 1}\,> \,0, 
    \\[1ex]\label{}
    & \hspace{2cm} \frac{\big((1-d)h_1 + c\ h_2 \big) \big(a_r + d + a_r\ d - b_r\ c + 1 \big)}{(a_r\,d - b_r\,c)( a_l\,d - b_l\,c)-c( b_l+ b_r) - d^2 - a_l\,a_r + 1} \,< \,0 \bigg\}.
    \end{align}
    \endgroup
    ~\\
    The characteristic polynomial of
    $\, J_{\mathcal{R L}} \, = \, J_{\mathcal{L}} \, J_{\mathcal{R}}\, = \, \begin{pmatrix}
    a_l\,a_r + b_r\,c & & a_l\,c + c\,d \\[1ex]
    a_r\,b_l + b_r\,d & & d^2 + b_l\,c
    \end{pmatrix}$ 
    is given by
    \begin{align}\label{poly-2-cycle}
    %
    \mathcal{P}_{\mathcal{O}_{\mathcal{R L}}}(\lambda) \, = \,  \lambda^2 - (d^2 + a_l\,a_r + b_l\,c + b_r\,c)\lambda + (a_r\,d - b_r\,c)(a_l\,d - b_l\,c),
    \end{align}
    and
    \begingroup
    \allowdisplaybreaks
    \begin{align}\nonumber
    & \mathcal{D}_{\mathcal{R L}} \, = \, (a_r\,d - b_r\,c)(a_l\,d - b_l\,c),
     \\[1ex]\nonumber
    & \mathcal{P}_{\mathcal{O}_{\mathcal{R L}}}(1) \,= \,(a_r\,d - b_r\,c)( a_l\,d - b_l\,c)-c( b_l+ b_r) - d^2 - a_l\,a_r + 1,
    %
     \\[1ex]\nonumber
    & \mathcal{P}_{\mathcal{O}_{\mathcal{R L}}}(-1)  \,= \,
    ( a_r\,d - b_r\,c)( a_l\,d - b_l\,c) + c(b_l + b_r) + d^2 +a_l\,a_r + 1,
    %
     \\[1ex]\nonumber
    & \lambda_{1,2}(\mathcal{O}_{\mathcal{R L}})  \, = \,\frac{a_l\,a_r + c(b_l + b_r) + d^2 }{2}
    \\\label{}
    & \hspace{1.7cm} \, \pm \, \frac{\sqrt{(a_l\, a_r + b_l\,c)^2 + (b_r\,c + d^2 )^2 + 2(a_l\,a_r- b_l\,c)(b_r\,c-d^2)+4\,c\,d(a_l\,b_r+a_r\,b_l)}}{2}.
    \end{align}
    \endgroup
    Thus, the stability region of $\mathcal{O}_{\mathcal{R L}}$ is
    \begingroup
    \allowdisplaybreaks
    \begin{align}\nonumber
    &\mathcal{S}_{\mathcal{R L}} \, = \, \bigg\{ (h_1, h_2, a_{l/r}, b_{l/r}, c, d) \in E_{\mathcal{O}_{\mathcal{R L}}}\, \big| \, -1 < (a_r\,d-b_r\,c)(a_l\,d - b_l\,c)<1,
    \\\label{}
    & \hspace{.6cm}\,- \,(a_r\,d - b_r\,c)(a_l\,d - b_l\,c)-1< c(b_l + b_r) + d^2 +a_l\,a_r < (a_r\,d - b_r\,c)(a_l\,d - b_l\,c)+1 \bigg\}.
    \end{align}
    \endgroup
    In addition, for $\big((1-d)h_1 + c\ h_2 \big) \big(a_l + d + a_l\ d - b_l\ c + 1 \big)\neq 0$, the set 
    \begin{align}\nonumber
     \tau_{\mathcal{R L}}\, = \, 
    & \bigg\{ (h_1, h_2, a_{l}, a_{r}, b_{l}, b_{r}, c, d) 
    \, \big| \, a_l\ a_r + b_l\ c + b_r\ c + d^2 - a_l\ a_r\ d^2 - b_l\ b_r\ c^2 + a_l\ b_r\ c\ d 
    \\[1ex]\label{}
    & \hspace{.7cm}
    + a_r\ b_l\ c\ d -1 =0\bigg\}, 
    \end{align}
    is the DTB 
    curve for the $2$-cycle $\mathcal{O}_{\mathcal{R L}}$. As in this case, for the parameter values belonging to $\tau_{\mathcal{R L}}$, the points of the $2$-cycle $\mathcal{O}_{\mathcal{R L}}$ tend to $\pm \infty$, and the corresponding eigenvalue tends to one. Moreover, for $(1-d)h_1 + c\ h_2\neq 0$, the BCB 
    curves of $\mathcal{O}_{\mathcal{R L}}$ can be computed as 
    \begingroup
    \allowdisplaybreaks
    \begin{align}\nonumber
    & \xi^1_{\mathcal{R L}} \, = \, 
    \bigg\{ (h_1, h_2, a_{l}, b_{l}, c, d) 
    \, \big| \, 
    a_l + d + a_l\ d - b_l\ c + 1 
    = 0 \bigg\},
    \\[1ex]\label{}
    & \xi^2_{\mathcal{R L}} \, = \, 
    \bigg\{ (h_1, h_2, a_{r}, b_{r}, c, d) 
    \, \big| \, 
    a_r + d + a_r\ d - b_r\ c + 1 %
    = 0 \bigg\}.
    \end{align}
    \endgroup
    Note that here the condition $(1-d)h_1 + c\ h_2\neq 0$ guarantees a regular BCB 
    in the sense that only one periodic point of $\mathcal{O}_{\mathcal{R L}}$ collides with the switching boundary; for more details see \citep{avr}. Besides,
    \begin{align}\nonumber
      \mathcal{F}_{\mathcal{R L}}\, =  & \bigg\{ (h_1, h_2, a_{l}, a_{r}, b_{l}, b_{r}, c, d) 
    \, \big| \, a_l  a_r + b_l  c + b_r c + d^2 + a_l\,a_r d^2 + b_l\,b_r c^2 
    \\[1ex]\label{}
    & \hspace{.7cm}
    -  a_l\,b_r\,c\,d - a_r\,b_l c d + 1=0
    \bigg\}, 
    \end{align}
    is the DFB 
    curve of the 2-cycle $\mathcal{O}_{\mathcal{R L}}$. 
    ~\\
    \begin{remark}\label{rem-con}
    One can see that for $(1-d)h_1 + c\ h_2 \neq 0$ the DFB curves of the fixed points $\mathcal{O}_{\mathcal{L}}$ and $\mathcal{O}_{\mathcal{R}}$ ($\mathcal{F}_{\mathcal{L}}$ and $\mathcal{F}_{\mathcal{R}}$) and the BCB boundaries of the 2-cycle $\mathcal{O}_{\mathcal{R L}}$ ($\xi^1_{\mathcal{R L}}$ and $\xi^2_{\mathcal{R L}}$) are the same. In this case, the DFB of the fixed points can lead to the (attracting) 2-cycle $\mathcal{O}_{\mathcal{R L}}$.
    \end{remark}
    \subsubsection{3-cycles of the map \eqref{eq-PWL2D} and their bifurcations}\label{app-3cycle}
    %
    Here, we investigate the existence, stability and bifurcation structure of maximal or basic
    3-cycles. Note that for the continuous map \eqref{eq-PWL2D}, basic $n$-cycles $\mathcal{O}_{\mathcal{R L}^{n-1}}$ ($n\geq 3$) exist in pairs with their complementary cycles ($\mathcal{O}_{\mathcal{R L}^{n-2}\mathcal{R}}$), and they appear via BCBs such that one of them may be attracting and the other repelling \citep{avr,gar,mon1}. 
    In this case, a BCB of basic cycles demonstrates a non-smooth fold bifurcation which includes a stable basic orbit and an unstable nonbasic orbit \citep{avr,avr2,gan}. Furthermore, the complementary orbits can have nonempty stability regions such that, similar to the basic orbits, they are bounded by curves of BCBs, DTBs and DFBs \citep{avr,avr2}.
    \paragraph{Basic 3-cycles $\mathcal{O}_{\mathcal{R L}^2}$ and their complementary cycles $\mathcal{O}_{\mathcal{R}^2\mathcal{L}}$.}
    The basic 3-cycle $\mathcal{O}_{\mathcal{R L}^2}$ can be obtained from the equation $ T_{\mathcal{L}} \circ T_{\mathcal{L}} \circ T_{\mathcal{R}} (z_1, z_2) \, = \, (z_1, z_2)\tran$
    where 
    \begingroup
    \allowdisplaybreaks
    \begin{align}\nonumber
    T_{\mathcal{L}} \circ T_{\mathcal{L}} \circ T_{\mathcal{R}} (z_1, z_2) 
    %
    &\, = \,  \begin{pmatrix}
    a_{r}(a_l^2 + b_l\,c) + b_r(a_l\, c + c\, d) & & c(a_l^2 + b_l \, c) + d(a_l \, c + c\, d) \\[1ex]
     b_r(d^2 + b_l\, c) + a_r (a_l\, b_l + b_l\,d) & & d(d^2 + b_l\, c) + c (a_l\, b_l + b_l \, d)
    \end{pmatrix}
    \begin{pmatrix}
    z_1 \\[1ex]\nonumber
    z_2
    \end{pmatrix}
    \\[1ex]\label{eq-sol3}
    & \hspace{.5cm}\, + \,
    \begin{pmatrix}
    h_1 \big(b_l\, c + a_l (a_l + 1) + 1 \big) + h_2 \big(a_l\, c + c (d + 1) \big) \\[1ex]
     h_1 \big( b_l\, d + b_l (a_l + 1) \big) + h_2 \big( b_l\, c + d (d + 1) + 1 \big)
    \end{pmatrix}.
    \end{align}
    \endgroup
    If $\big(I-J_{\mathcal{L}}^2 \, J_{\mathcal{R}} \big)$ is invertible, then the solution $(z_1,z_2)\tran = \big(z_1^{(1)},z_2^{(1)}\big)\tran $
    is 
    \begin{align}\label{eq-x1-RL2}
    (z_1^{(1)},z_2^{(1)})^T\, = \,
    \Big(\frac{\big((1-d)h_1 + c\,h_2 \big)G_1}{G}, \, \frac{G_2}{G} \Big)\tran,
    \end{align}
    where
    \begingroup
    \allowdisplaybreaks
    \begin{align}\nonumber
    %
    G_1 &\, = \, a_l^2 d^2 + a_l^2 d + a_l^2 - 2 a_l\,b_lcd - a_l b_l c + a_l d^2 + a_ld + a_l + b_l^2 c^2 - b_lc d + b_l c 
    +  d^2 + d + 1,
    \\[1ex]\nonumber
    G & \, = \, -a_l^2\,a_r - d^3 - c\big(a_l\,b_l + a_l\,b_r + a_r\,b_l + d(2\,b_l + b_r) \big) + (a_r\,d - b_r\,c)(a_l\,d - b_l\,c)^2
    %
    \, + \, 1,
\\[1ex]\nonumber
    G_2 & \, = \, h_2 + b_l h_1 + d h_2 + d^2 h_2 + a_l b_l h_1 + b_l c h_2 + b_l d h_1 - a_l^2 a_r h_2 + b_r d^2 h_1 - a_r b_l^2 c h_1
\\[1ex]\nonumber
 & \, - \,  a_l^2 a_r d h_2 + a_l b_r d^2 h_1 + b_l b_r c^2 h_2 - a_r b_l^2 c^2\,h_2 - a_l^2 a_r d^2 h_2 
     +  a_l^2 b_r\,d^2 h_1 + b_l^2\,b_r\,c^2 h_1 
    \\[1ex]\nonumber
    & \, + \,  a_l\,a_r\,b_l\,h_1 - a_l\,b_r\,c\,h_2 - a_r\,b_l\,c\,h_2 + a_r\,b_l\,d\,h_1 +b_l\,b_r\,c h_1 - b_r\,c\,d h_2 + a_l\,a_r\,b_l c h_2 
    \\[1ex]\label{eq-rl2-1}
    & \, + \, a_l\,a_r\,b_l\,d h_1 - a_l\,b_r\,c\,d\,h_2 - b_l\,b_r\,c\,d h_1 + 2a_l\,a_r\,b_l\,c\,d h_2 - 2a_l\,b_l\,b_r\,c\,d h_1. 
    \end{align}
    \endgroup
    Further
    \begingroup
    \allowdisplaybreaks
    \begin{align}\nonumber
    & T_{\mathcal{R}} (z_1^{(1)}, z_2^{(1)})  = \big(z_1^{(2)},z_2^{(2)}\big)\tran\, = \,
    \Big(\frac{\big((1-d)h_1 + c\,h_2 \big)K_1}{G}, \, \frac{K_2}{G} \Big)^T,
    \\[1ex]\label{}
    %
    &  T_{\mathcal{L}} (z_1^{(2)}, z_2^{(2)})  = \big(z_1^{(3)},z_2^{(3)}\big)\tran\, = \,
    \Big(\frac{\big((1-d)h_1 + c\,h_2 \big)H_1}{G}, \, \frac{H_2}{G} \Big)^T,
    \end{align}
    \endgroup
    where
    \begingroup
    \allowdisplaybreaks
    \begin{align}\nonumber
    K_1 & \, =  a_r + d + a_l a_r + b_l c + a_r d + a_r d^2 + d^2 + a_l a_r d - a_l b_r c - b_r c d + a_l a_r d^2
   + b_l b_r c^2
   \\[1ex]\nonumber
 & \, - \,  a_l b_r c d - a_r b_l c d + 1,
    \\[1ex]\nonumber
    K_2 & \, = \, h_2 + b_r h_1 + d h_2 + d^2 h_2 + a_l b_r h_1 + b_r c h_2 + b_l d h_1 - a_l^2 a_r h_2 + a_l^2 b_r h_1 + b_l d^2 h_1 
    \\[1ex]\nonumber
    & \, + \,  a_l^2 b_r c h_2 + a_r b_l d^2 h_1 + b_l b_r c^2 h_2 - a_l^2 a_r d^2 h_2 
   + b_l^2 b_r c^2 h_1 - a_l b_l c h_2 - a_r b_l c h_2 + a_l b_l d h_1 
    \\[1ex]\nonumber
    & \, + \,  b_l\,b_r\,c h_1 - b_l\,c d\,h_2  + a_l a_r b_l d h_1 - a_l\,b_l b_r c h_1 - a_r\,b_l\,c\,d h_2 - b_l b_r c\,d h_1 + a_l\,a_r b_l d^2 h_1 
    \\[1ex]\nonumber
    &  \, - \,  a_l\,b_l b_r c^2 h_2 - a_r b_l^2 c d h_1 + a_l^2 b_r c d h_2  + a_l\,a_r\,b_l\,c\,d\,h_2 - a_l\,b_l\,b_r\,c\,d\,h_1 - a_l^2 a_r d h_2,
    \\[1ex]\nonumber
    H_1 & \, =  a_l + d + a_l a_r + a_l d + b_r c + a_l d^2 + d^2 + a_l a_r d - a_r\,b_lc - b_lc d + a_l a_r d^2 +  b_l b_r c^2 
    \\[1ex]\nonumber 
   & \, - \, a_l b_r c d - a_r b_l c d + 1,
    \\[1ex]\nonumber
    H_2 & \, =   h_2 + b_l h_1 + d h_2 + d^2 h_2 + b_l^2 c^2 h_2 + a_r b_l h_1 + b_l c h_2 + b_r d h_1 - a_l^2 a_r h_2 + b_l^2 c h_1 
    \\[1ex]\nonumber 
   & \, + \,  b_l d^2 h_1  - a_l^2 a_r d h_2 - b_l c d h_2 + a_l b_l d^2 h_1 + a_l^2 b_r d h_1 - b_l^2 c d h_1 - a_l^2 a_r d^2 h_2  + b_l^2 b_r c^2 h_1 
    \\[1ex]\nonumber
    &  \, + \, a_l a_r b_l h_1 - a_l b_l c h_2 - a_l b_r c h_2 + a_l b_r d h_1 + a_l a_r b_l c h_2 \, - a_l b_l b_r c h_1 - a_l b_l c d \,  h_2 
    \\[1ex]\label{eq-rl2-2}
    & \, + \, a_l a_r b_l d^2 h_1 - a_l b_l b_r c^2 h_2 - a_r b_l^2 c d h_1 + a_l^2 b_r c d h_2
  +  a_l a_r b_l c d h_2 - a_l b_l b_r c d h_1.
    \end{align}
      \endgroup
    ~\\
    Therefore, the existence region of the 3-cycle $\mathcal{O}_{\mathcal{R L}^2}$ is given by
    \begingroup
    \allowdisplaybreaks
    \begin{align}\nonumber
    & E_{\mathcal{O}_{\mathcal{R L}^2}} \, = \, 
    \bigg\{ (h_1, h_2, a_{l}, b_{l}, c, d) 
    \, \big|\, \frac{\big((1-d)h_1 + c\,h_2 \big)G_1}{G} \, > \, 0,  \, \, \, \, \,  \frac{\big((1-d)h_1 + c\,h_2 \big)K_1}{G} \, < \, 0,
    \\\label{}
    & \hspace{2.5cm}  \frac{\big((1-d)h_1 + c\,h_2 \big)H_1}{G} \, < \,0 \bigg\},
    \end{align}
    \endgroup
    where $G, G_1, K_1$ and $H_1$ are defined in \eqref{eq-rl2-1} and \eqref{eq-rl2-2}. On the other hand, the characteristic polynomial of
    $$J_{\mathcal{R L}^2} \, = \, J_{\mathcal{L}}^2 \, J_{\mathcal{R}}\, = \, \begin{pmatrix}
    a_r(a_l^2 + b_l\,c) + b_r(a_l\,c + c\,d) & & c(a_l^2 + b_l\,c) + d(a_l\,c + c\,d) \\[1ex]
    b_r(d^2 + b_l\,c) + a_r(a_l\,b_l + b_l\,d) & & d(d^2 + b_l\,c) + c(a_l\,b_l + b_l\,d)
    \end{pmatrix},$$ 
    is 
    \begingroup
    \allowdisplaybreaks
    \small
    \begin{align}\label{poly-3-cycle}
     & \mathcal{P}_{\mathcal{O}_{\mathcal{R L}^2}}(\lambda) \, = \, \lambda^2 - \Big(a_l^2 a_r + d^3 + c \big(a_l b_l + a_l b_r + a_r b_l + d(2 b_l + b_r) \big)\Big)\lambda + (a_r d - b_r c)(a_l d - b_l c)^2.
     \end{align}
    \endgroup
    \normalsize
    According to 
    \begingroup
    \allowdisplaybreaks
    \begin{align}\nonumber
    & \mathcal{D}_{\mathcal{R L}^2} \, = \, (a_l\,d - b_l\,c)^2(a_r\,d - b_r\,c),
     \\[1ex]\nonumber
    & \mathcal{P}_{\mathcal{O}_{\mathcal{R L}^2}}(1)  = -a_l^2 a_r - d^3 - c\big(a_l b_l + a_l b_r + a_r b_l + d(2 b_l + b_r) \big)
         \\[1ex]\label{}
    & \mathcal{P}_{\mathcal{O}_{\mathcal{R L}^2}}(-1)  =  a_l^2 a_r + d^3 + c\big(a_l b_l + a_l b_r + a_r b_l + d(2 b_l + b_r) \big)
    %
    + (a_r d - b_r c)(a_l d - b_l c)^2 + 1,
    \end{align}    
\endgroup
    the stability region of the 3-cycle $\mathcal{O}_{\mathcal{R L}^2}$ is given by
    \begingroup
    \allowdisplaybreaks
    \begin{align}\nonumber
    &\mathcal{S}_{\mathcal{R L}^2} \, = \, \bigg\{ (h_1, h_2, a_{l/r}, b_{l/r}, c, d) \in E_{\mathcal{O}_{\mathcal{R L}^2}}\, \big| \, -1 < (a_l\,d - b_l\,c)^2(a_r\,d - b_r\,c)<1,
    \\\nonumber
    &\hspace{1.8cm} -(a_l d - b_l c)^2(a_r\,d - b_r c)-1  <  a_l^2 a_r + d^3 + c\big(a_l b_l + a_l b_r + a_r b_l + d(2 b_l + b_r) \big)
    \\\label{}
    &\hspace{1.8cm} \, < \, (a_l\,d - b_l\,c)^2(a_r\,d - b_r\,c)+1 \bigg\}.
    \end{align}    
\endgroup
    Furthermore, for $(1-d)h_1 + c\ h_2\neq 0$ and $G_1, \, G_2, \, K_1, \, K_2, \, H_1, \, H_2 \neq 0$, the DTB 
 curve for the 3-cycle $\mathcal{O}_{\mathcal{R L}^2}$ is
    \begingroup
    \allowdisplaybreaks
    \begin{align}\nonumber
     \tau_{\mathcal{R} \mathcal{L}^2}\, = \, 
    & \bigg\{ (h_1, h_2, a_{l}, a_{r}, b_{l}, b_{r}, c, d) 
    \, \big| \, b_r\, a_l^2 c d^2 - a_r\, a_l^2\, d^3 + a_r a_l^2 - 2\, b_r\, a_l b_l c^2 d + 2a_r\, a_l\, b_l\, c\,d^2 
    \\[1ex]\label{}
    & \, \, \, \, \, \, \,  \, + \,  a_l\, b_l\,c + b_r\, a_l\, c + b_r\,b_l^2 c^3 - a_r\,b_l^2 c^2 d + 2\,b_l c  d + a_r b_l\, c 
    %
    + b_r\, c\,d + d^3 - 1 =0\bigg\}. 
    \end{align}
    \endgroup
    For $(1-d)h_1 + c\ h_2\neq 0$
    \begingroup
    \allowdisplaybreaks
    \begin{align}\nonumber
        & \xi^1_{\mathcal{R} \mathcal{L}^2} \, = \, 
    \bigg\{ (h_1, h_2, a_{r}, b_{r}, c, d) 
    \, \big| \, K_1  \, = \, a_r + d + a_l a_r + b_l\,c + a_r d + a_r d^2 + d^2 + a_l a_r d 
    \\\nonumber
    & \hspace{1.9cm}\,- \,a_l\,b_r\,c - b_r\,c\,d + a_l\,a_r\,d^2 + b_l\,b_r\,c^2 - a_l\,b_r\,c\,d - a_r\,b_l\,c\,d + 1 = 0 \bigg\},
    \\[1ex]\nonumber
    & \xi^2_{\mathcal{R} \mathcal{L}^2} \, = \, 
    \bigg\{ (h_1, h_2, a_{r}, b_{r}, c, d) 
    \, \big| \,H_1  \, = \, a_l + d + a_l\,a_r + a_l\,d + b_r\,c + a_l\,d^2 + d^2 + a_l\,a_r\,d
    \\\label{bcb-RL2}
    & \hspace{1.9cm} \, - \, a_r\,b_l\,c - b_l\,c\,d + a_l\,a_r\,d^2 + b_l\,b_r\,c^2 - a_l\,b_r\,c\,d - a_r\,b_l\,c\,d + 1= 0 \bigg\},
    \end{align}
    \endgroup
    are (regular) BCB 
 curves of $\mathcal{O}_{\mathcal{R L}^2}$. Furthermore, the set 
    \begingroup
    \allowdisplaybreaks
    \begin{align}\nonumber
    \mathcal{F}_{\mathcal{R} \mathcal{L}^2}\, = \, 
    & \bigg\{ (h_1, h_2, a_{l}, a_{r}, b_{l}, b_{r}, c, d) 
    \big| \, -b_r\, a_l^2 c d^2 + a_r a_l^2 d^3 + a_r a_l^2 + 2 b_r a_l b_l c^2 d - 2a_r a_l b_l c d^2
    \\\label{}
    & \hspace{.4cm} \, + \, a_l b_l c  + b_r a_l c - b_r b_l^2 c^3 + a_r\,b_l^2\, c^2 d + 2 b_l c d + a_r  b_l  c 
    %
    + b_r\, c\,d + d^3 + 1 =0\bigg\},  
    \end{align}
    \endgroup
    is the DFB 
    curve of $\mathcal{O}_{\mathcal{R L}^2}$. As noted, the basic 3-cycle $\mathcal{O}_{\mathcal{R L}^2}$ exists in a pair with its complementary cycle $\mathcal{O}_{\mathcal{R}^2\mathcal{L}}$. Moreover, the existence region of $\mathcal{O}_{\mathcal{R}^2\mathcal{L}}$ can easily be found by interchanging the letters $\mathcal{L}$ and $\mathcal{R}$ in all notations of the equations \eqref{eq-sol3}-\eqref{eq-rl2-2} and considering
    \begin{align}\label{ex-R2L}
    z_1^{(1)} <0,
    \, \, \, \, \, 
    z_1^{(2)} >0,
    \, \, \, \, \, 
    z_1^{(3)} >0.
    \end{align}
    Further, the stability region of the 3-cycle $\mathcal{O}_{\mathcal{R}^2\mathcal{L}}$ for the parameter values satisfying \eqref{ex-R2L} is given by
    \begingroup
    \allowdisplaybreaks
    \begin{align}\nonumber
    &\mathcal{S}_{\mathcal{R}^2\mathcal{L}} = \bigg\{ (h_1, h_2, a_{l/r}, b_{l/r}, c, d) \, \big| \, -1 < (a_r\,d - b_r\,c)^2(a_l\,d - b_l\,c)<1,
    \\\nonumber
    & \hspace{1.8cm} - (a_r d - b_r c)^2(a_l d - b_l c)-1 <  a_r^2 a_l + d^3 + c \big(a_r b_r + a_r b_l + a_l b_r + d(2 b_r + b_l) \big)
    \\\label{}
    &\hspace{1.9cm}  < \,  (a_r\,d - b_r\,c)^2(a_l\,d - b_l\,c) \, + \, 1 \bigg\}.
    \end{align}
    \endgroup
    Notice that whenever the stable 3-cycle $\mathcal{O}_{\mathcal{R L}^2}$ exists, its complementary orbit $\mathcal{O}_{\mathcal{R}^2\mathcal{L}}$ also exists, but it is unstable. Furthermore, for $(1-d)h_1 + c\ h_2\neq 0$ both the 3-cycles $\mathcal{O}_{\mathcal{R L}^2}$ and $\mathcal{O}_{\mathcal{R}^2\mathcal{L}}$ appear at the same BCB curves \eqref{bcb-RL2}. On the other hand, the DTB and DFB curves of the 3-cycle $\mathcal{O}_{\mathcal{R}^2\mathcal{L}}$ are given by
    \begingroup
    \allowdisplaybreaks
    \begin{align}\nonumber
     \tau_{\mathcal{R}^2\mathcal{L}}\, = \, 
    & \bigg\{ (h_1, h_2, a_{l}, a_{r}, b_{l}, b_{r}, c, d) 
    \, \big| \, b_l a_r^2 c d^2 - a_l  a_r^2 d^3 + a_l a_r^2 - 2 b_l a_r b_r c^2 d + 2a_l a_r b_r c d^2 
    \\\label{}
    & \hspace{.4cm} \, + \, a_r\, b_r\,c + b_l  a_r c + b_l\,b_r^2 c^3 - a_l\,b_r^2\, c^2 d + 2 b_r c d + a_l b_r c 
    %
    + b_l\, c\,d + d^3 - 1 =0\bigg\}, 
    \end{align}
    \endgroup
    and
    \begingroup
    \allowdisplaybreaks
    \begin{align}\nonumber
\mathcal{F}_{\mathcal{R}^2\mathcal{L}}
\, = \, 
    & \bigg\{ (h_1, h_2, a_{l}, a_{r}, b_{l}, b_{r}, c, d) 
    \, \big| \, -b_l a_r^2 c d^2 + a_l a_r^2 d^3 + a_l a_r^2 + 2 b_l a_r b_r c^2 d  - 2a_l a_r b_r c d^2 
    \\\label{}
    & \hspace{.4cm} \, + \,  a_r b_r c +b_l\, a_r c - b_l\,b_r^2 c^3 + a_l\,b_r^2 c^2\,d + 2 b_r c d + a_l\, b_r\, c 
    %
    + b_l\, c\,d + d^3 + 1 =0\bigg\},  
    \end{align}
    \endgroup
    respectively.
\subsubsection{Multiple attractor bifurcations (MABs) of the map \eqref{eq-PWL2D}}\label{app-MAB}
To detect multiple attractor bifurcations for the map \eqref{eq-PWL2D}, a straightforward way is to determine the overlapping stability regions of different periodic orbits. For instance, as shown in Fig.~\ref{fig:MAB_1}A, 
 for $c=0.8, \,d=0.2, \,b_l= -0.4, \,b_r=0.5$, two stability regions $\mathcal{S}_{\mathcal{R L}}$ and $\mathcal{S}_{\mathcal{R} \mathcal{L}^2}$ overlap in the $(a_l, a_r)$-parameter plane (or in the $(a_{11}, \, a_{11}+w_{11})$-parameter space for the $2d$ PLRNN \eqref{eq-th-1}). Their overlapping region, displayed in yellow, reveals the structure of the $(a_l, a_r)$-parameter plane. This helps us to find various MABs. 
Assuming $h_2=0$ and varying $h_1$ from a negative value to a positive one, an MAB 
of the form
\begin{align}\label{maf_1}
\mathcal{O}^{s}_{\mathcal{L}} \, \xleftrightarrow{h_1} \,	 \mathcal{O}^{s}_{\mathcal{R L}} \, + \,  \mathcal{O}^{s}_{\mathcal{R} \mathcal{L}^2},
\end{align}
occurs in the overlapping region. An example of this kind of bifurcation 
is illustrated in Fig.~\ref{fig:overlap:MAB}A. 

Moreover, Fig.~\ref{fig:MAB_2} indicates, for $\,c=0.9, \,d=0.3, \,b_l= -0.6, \,b_r=-1.54\,$, there are nonempty overlapping regions $\mathcal{S}_{\mathcal{R L}}\cap\mathcal{S}_{\mathcal{R} \mathcal{L}^2}$ and $\mathcal{S}_{\mathcal{R} \mathcal{L}^2} \cap \mathcal{S}_{\mathcal{R}}$. This leads to the occurrence of two different MABs 
given by \eqref{maf_1} and 
\begin{align}\label{}
\mathcal{O}^{s}_{\mathcal{L}} \, \xleftrightarrow{h_1} \,	\mathcal{O}^{s}_{\mathcal{R}}\, + \,  \mathcal{O}_{\mathcal{R} \mathcal{L}^2},
\end{align}
for $h_2=0$ and $h_1$ changing from negative to positive values. Both of these bifurcations are shown in Fig.~\ref{fig:overlap:MAB}A 
and Fig.~\ref{fig:overlap:MAB}B,
associated with the points $P_1$
 and $P_2$ 
 in Fig.~\ref{fig:MAB_2}. 
Note that in Fig.~\ref{fig:MAB_2}, 
all the points $P_2$, $P_3$ and $P_4$ belong to the overlapping region $\mathcal{S}_{\mathcal{R} \mathcal{L}^2} \cap \mathcal{S}_{\mathcal{R}}$ (in sky blue). 
These points are related to the parameter values  
$\,c=0.9, \,d=0.3, \,b_l= -0.6, \,b_r=-1.54, \, a_r=-1.8, \, h_2=0$, and they only differ in the parameter $a_l$. In this case, one can see that fixing all parameters and changing only the parameter $a_l$, from $P_2$ to $P_4$, the basins of attraction change. The corresponding basins of attraction for these three points are demonstrated in Fig.~\ref{fig:overlap:MAB}B (right) and Fig.~\ref{f4:jadid} for $h_1=0.5$ (after the bifurcation).
\begin{figure}[hbt]
\centering
\includegraphics[scale=0.39]{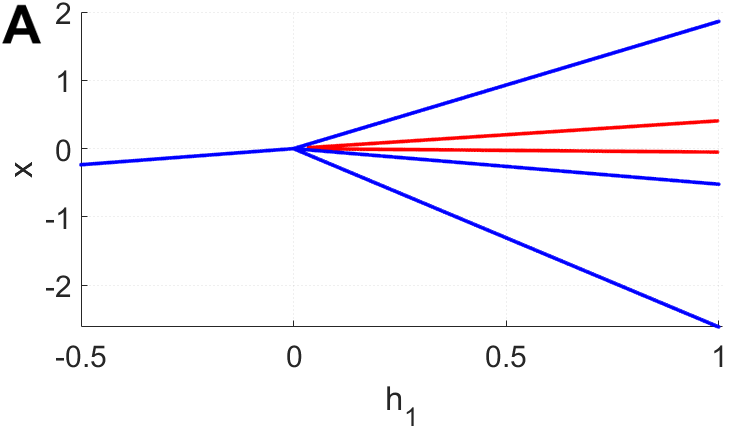}
\hspace{.9cm}
\includegraphics[scale=0.39]{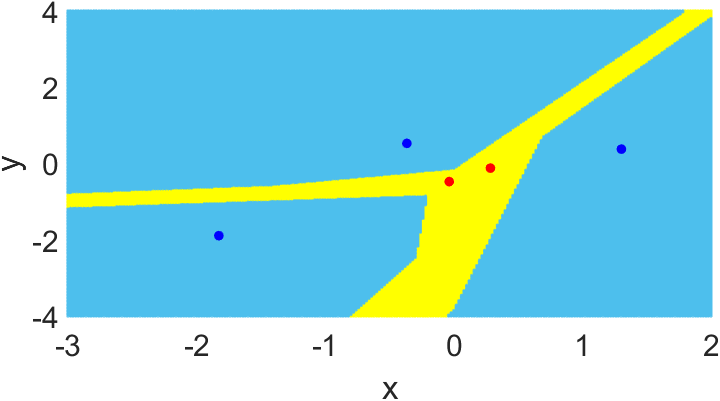}
\includegraphics[scale=0.39]{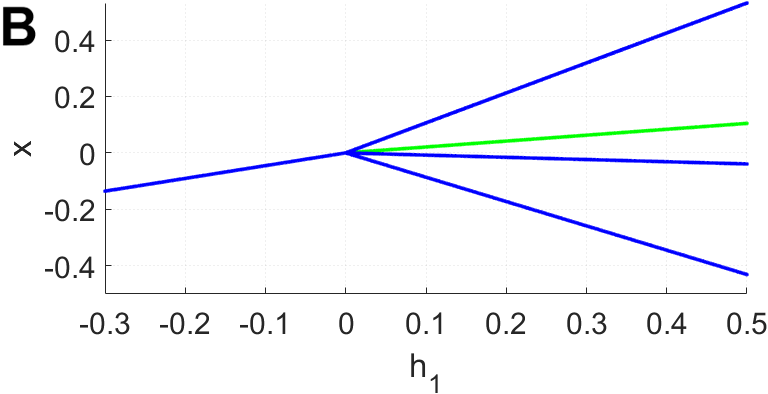}\label{fig2:(b)}
 \hspace{.9cm}
\includegraphics[scale=0.39]{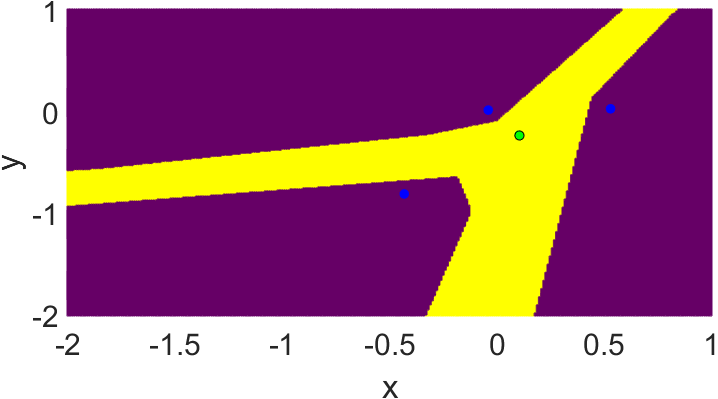}
    \caption{
     MAB at $\,c=0.9, \,d=0.3, \,b_l= -0.6, \,b_r=-1.54, \,h_2=0$. A) Left: Bifurcation diagram for $\,a_l=-0.44 \,$ and $\, a_r=-1.8 \,$ corresponding to the point $P_1$ 
     in Fig.~\ref{fig:MAB_2}. Right: Multistability of the fixed point $\mathcal{O}^{s}_{\mathcal{R}}$ and the 3-cycle $\mathcal{O}^{s}_{\mathcal{R} \mathcal{L}^2}$ after the bifurcation and their basins of attraction at $h_1=0.5$. B) Left: Bifurcation diagram for 
     $\,a_l=-0.35 \,$ and $\,a_r=-2.2 \,$ corresponding to the point $P_2$ in Fig.~\ref{fig:MAB_2}. Right: Multistability of the 2-cycle $\mathcal{O}^{s}_{\mathcal{R L}}$ and the 3-cycle $\mathcal{O}^{s}_{\mathcal{R} \mathcal{L}^2}$ after the bifurcation and their basins of attraction at $h_1=0.7$.} 
    \label{fig:overlap:MAB}
\end{figure}

\begin{figure}[!hbt]
    \centering
     
    \includegraphics[scale=0.44]{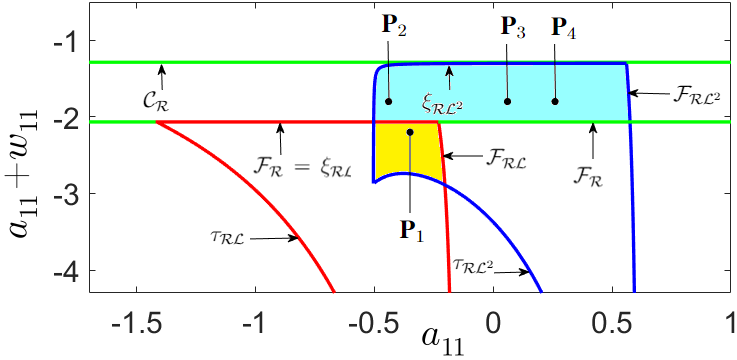}\label{overlap2}
   \hspace{.05cm}  \includegraphics[scale=0.44]{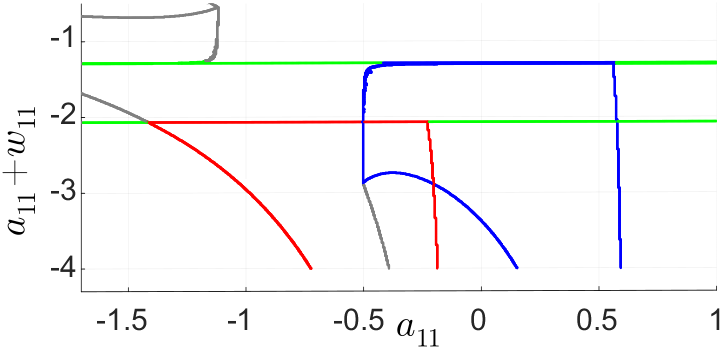}
    \vspace{-.4cm}
    \caption{Analytically calculated stability regions for a different parameter setting than used in Fig.~\ref{fig:MAB_1}. 
    Left: Analytically calculated stability regions $\mathcal{S}_{\mathcal{R}}$, $\mathcal{S}_{\mathcal{R L}}$ and $\mathcal{S}_{\mathcal{R} \mathcal{L}^2}$, shown in green, red and blue, respectively, in the $(a_{11}, \, a_{11}+w_{11})$-parameter plane for 
   $a_{22}=0.3, \,
    w_{21}=-1.54
    $. The overlapping regions $\mathcal{S}_{\mathcal{R L}}\cap\mathcal{S}_{\mathcal{R} \mathcal{L}^2}$ and $\mathcal{S}_{\mathcal{R} \mathcal{L}^2} \cap \mathcal{S}_{\mathcal{R}}$, representing multi-stable regimes, are given in yellow and sky blue. Right: Bifurcation curves for the same parameter settings as determined by SCYFI. Note that SCYFI identifies additional structure (regions demarcated by gray curves) not included in our analytical derivations. 
    }
    \label{fig:MAB_2}
\end{figure}

\begin{figure}[hbt]
\centering
\includegraphics[scale=0.4]{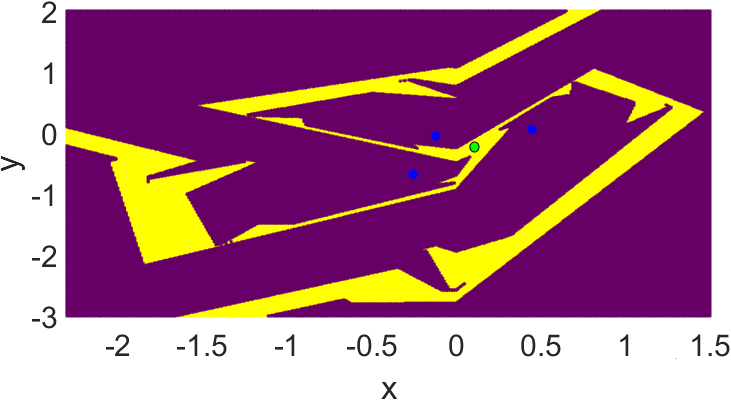}\hspace{.9cm}
\includegraphics[scale=0.4]{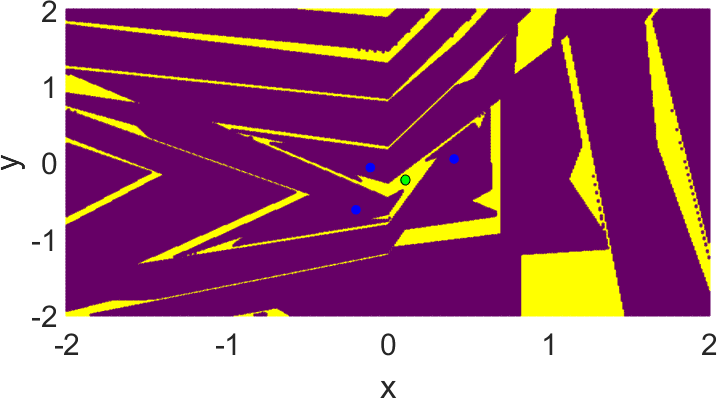}
\caption{Multistability of the fixed point $\mathcal{O}^{s}_{\mathcal{R}}$ and the 3-cycle $\mathcal{O}^{s}_{\mathcal{R} \mathcal{L}^2}$ at $\,c=0.9, \,d=0.3, \,b_l= -0.6, \,b_r=-1.54, \, a_r=-1.8, \, h_2=0$ after the bifurcation and their basins of attraction at $h_1 = 0.5$. Left: $a_l=0.06$ (point $P_3$ in Fig.~\ref{fig:MAB_2}); right: $a_l=0.26$ (point $P_4$ in Fig.~\ref{fig:MAB_2}).}\label{f4:jadid}
\end{figure}
\subsection{Proofs of theorems}\label{app-proofs}
\subsubsection{Proof of theorem \ref{thm-1}}\label{app-proof-thm1}
\begin{proof}
 Let $\mathcal{L}(\boldsymbol\theta)$ be some loss function employed for PLRNN training that decomposes in time as $\mathcal{L}\,= \,\sum_{t=1}^T \mathcal{L}_t$. Then
\begin{align}\nonumber
\frac{\partial \mathcal{L}}{\partial \theta} &\,=\, \sum_{t=1}^T \frac{\partial \mathcal{L}_t}{\partial \theta},
\\[1ex]\label{eq-1}
\frac{\partial \mathcal{L}_t}{\partial \theta} & \, = \, \frac{\partial \mathcal{L}_t}{\partial \vz_t} \, \frac{\partial \vz_t}{\partial \theta}.
\end{align}
Denoting the Jacobian of system \eqref{eq-cx-plrnn2} at time $t$ by 
\begin{align}\label{eq:jacobian}
\mJ_t \, := \, \frac{\partial F_{\boldsymbol\theta}(\vz_{t-1})}{\partial \vz_{t-1}} = \,  \frac{\partial \vz_t}{\partial \vz_{t-1}},
\end{align}
we have 
\begin{align}\label{eq-partial1}
\frac{\partial \vz_t}{\partial \theta}  \, = \, \frac{\partial \vz_t}{\partial \vz_{t-1}}\, \frac{\partial \vz_{t-1}}{\partial \theta}\, + \,  \frac{\partial^{+} \vz_t}{\partial \theta}\, = \, \mJ_t\, \frac{\partial \vz_{t-1}}{\partial \theta}\, + \,  \frac{\partial^{+} \vz_t}{\partial \theta},
\end{align}
where $\partial^{+}$ denotes the immediate partial derivative (see \citep{razvan_2014} for more details). Assume that $\Gamma_k$ is a $k$-cycle $(k\geq 1)$ of \eqref{eq-cx-plrnn2}. Thus, $\Gamma_k$ is a set of temporally successive periodic points
\begin{align}\label{eq-per}
P_k:=\{\vz_{t^{*k}},\vz_{t^{*k}-1}, \cdots, \vz_{t^{*k}-(k-1)}\}=\{\vz_{t^{*k}}, F(\vz_{t^{*k}}), \dots, F_{\boldsymbol\theta}^{k-1}(\vz_{t^{*k}})\},
\end{align}
such that all of them are fixed points of 
\begin{align}\label{}
\vz_{t+k} \, = \, F_{\boldsymbol\theta}^{k}(\vz_{t}) \, = \,    
F_{\boldsymbol\theta}(F_{\boldsymbol\theta}(F_{\boldsymbol\theta}(...F_{\boldsymbol\theta}(\vz_t)...))),
\end{align}
and $k$ is the smallest such positive integer (for $k=1$, $\Gamma_1$ is a fixed point of $F_{\boldsymbol\theta}$). Similar to \eqref{eq-partial1}, the tangent vector $\frac{\partial \vz_{t+k}}{\partial \theta}$ can be computed as 
\begin{align}\label{eq-partialk}
\frac{\partial \vz_{t+k}}{\partial \theta}  \, = \, \frac{\partial \vz_{t+k}}{\partial \vz_{t}}\, \frac{\partial \vz_{t}}{\partial \theta}\, + \,  \frac{\partial^{+} \vz_{t+k}}{\partial \theta}\, = \, \prod_{r=0}^{k-1} \mJ_{t+k-r} \, \, \frac{\partial \vz_{t}}{\partial \theta}\, + \,  \frac{\partial^{+} \vz_{t+k}}{\partial \theta}.
\end{align}
Thus, for $\vz_{t^{*k}} \, = \, F_{\boldsymbol\theta}^{k}(\vz_{t^{*k}})$ we have
\begin{align}
\frac{\partial \vz_{t^{*k}}}{\partial \theta}  
\, = \, \prod_{r=0}^{k-1} \mJ_{t^{*k}-r} \, \, \frac{\partial \vz_{t^{*k}}}{\partial \theta}\, + \,  \frac{\partial^{+} \vz_{t^{*k}}}{\partial \theta}.  
\end{align}
Accordingly
\begin{align}\label{eq-derivative}
\frac{\partial \vz_{t^{*k}}}{\partial \theta}  
\, = \, \bigg(\mI-\prod_{r=0}^{k-1} \mJ_{t^{*k}-r} \bigg)^{-1}\, \, \frac{\partial^{+} \vz_{t^{*k}}}{\partial \theta}\, = \, \frac{adj \bigg(\mI-\prod_{r=0}^{k-1} \mJ_{t^{*k}-r} \bigg)}{P_{\prod_{r=0}^{k-1} \mJ_{t^{*k}-r}}(1)}\,  \frac{\partial^{+} \vz_{t^{*k}}}{\partial \theta},  
\end{align}
where $P_{\prod_{r=0}^{k-1} \mJ_{t^{*k}-r}}(1)\, = \, \det\bigg(\mI-\prod_{r=0}^{k-1} \mJ_{t^{*k}-r} \bigg)$. Moreover, from \eqref{eq-1} and \eqref{eq-derivative} we have  
\begin{align}\label{eq71}
\norm{\frac{\partial \mathcal{L}_t}{\partial \theta}} & \, = \, \frac{1}{P_{\prod_{r=0}^{k-1} \mJ_{t^{*k}-r}}(1)}\norm{\frac{\partial \mathcal{L}_t}{\partial \vz_{t^{*k}}} \, \, adj \bigg(\mI-\prod_{r=0}^{k-1} \mJ_{t^{*k}-r} \bigg)\, \,  \frac{\partial^{+} \vz_{t^{*k}}}{\partial \theta}}.    
\end{align}
~\\
Now, suppose that $\Gamma_k$ undergoes a DTB, such that the fixed or cyclic points given by \eqref{eq-per} tend to infinity and one of their eigenvalues tends to $1$ for some parameter value $\theta=\theta_0$. This implies $\,P_{\prod_{r=0}^{k-1} \mJ_{t^{*k}-r}}(1)\,$ becomes zero at $\theta=\theta_0$ and so, due to \eqref{eq-derivative}, $\norm{\frac{\partial \vz_{t^{*k}}}{\partial \theta}}$ goes to infinity. Therefore the norm of the loss gradient, $\norm{\frac{\partial \mathcal{L}_t}{\partial \theta}}$, tends to infinity at $\theta=\theta_0$ which results in a abrupt jump in the loss function.\\

Let $\{\vz_{t_1}, \vz_{t_2}, \vz_{t_3} , \ldots \}$ be an orbit which converges to $\Gamma_{k}$, i.e.
\begin{align}\label{eq-d}
\displaystyle{\lim_{n \to \infty} d(\vz_{t_n}, \Gamma_k}) \, = \, 0.    
\end{align}
Then there exists a neighborhood $U$ of $\Gamma_k$ and $k$ sub-sequences
$\{\vz_{t_{km}}\}_{m=1}^{\infty}, \{\vz_{t_{km+1}}\}_{m=1}^{\infty}$, $\cdots$, $\{\vz_{t_{km+(k-1)}}\}_{m=1}^{\infty}$ of the sequence $\{\vz_{t_n} \}_{n=1}^{\infty}$ such that all these sub-sequences belong to $U$ and 
\begin{itemize}
    \item[a)] $\vz_{t_{km+s}}\, = \, F^{k} (\vz_{t_{k(m-1)+s}}), \, \, s=0, 1, 2, \cdots, k-1$,\\
    \item[b)]$\displaystyle{\lim_{m \to \infty} \vz_{t_{km+s}} = \vz_{t^{*k}-s} }, \, \, s=0, 1, 2, \cdots, k-1$, \\
    \item[c)]for every $\vz_{t_n}\in U$ there is some $s \in \{0, 1, 2, \cdots, k-1\}$ such that $\vz_{t_n} \in \{\vz_{t_{km+s}}\}_{m=1}^{\infty}$.
\end{itemize}
~\\
This implies for every  $\vz_{t_n}\in U$ with $\vz_{t_n} \in \{\vz_{t_{km+s}}\}_{m=1}^{\infty}\,$, there exists some $\tilde{n} \in \mathbb{N}\,$ such that $\, \vz_{t_n}\, = \, \vz_{t_{k\tilde{n}+s}}\,$ and $\,\displaystyle{\lim_{\tilde{n} \to \infty} \vz_{t_{k\tilde{n}+s}} = \vz_{t^{*k}-s}}\,$. 
Consequently, there exists some $\,\tilde{N}\in \mathbb{N}\,$ such that for every $\,\tilde{n} \geq \tilde{N}\,$ both $\vz_{t_{k\tilde{n}+s}}$ and $\vz_{t^{*k}-s}$ belong to the same sub-region and so the matrices $\,\mW_{\Omega({t_{k\tilde{n}+s}})}\,
 $ and $\,\mW_{\Omega(t^{*k}-s)}\,$ 
 ($s \in \{0, 1, 2, \cdots, k-1\}$) are identical. Without loss of generality, let $s=0$. Since $\vz_{t_{k(\tilde{n}+1)}}\, = \, F^{k} (\vz_{t_{k\tilde{n}}})$, so 
\begin{align}\label{}
\frac{\partial \vz_{t_{k(\tilde{n}+1)}}}{\partial \theta}  \, = \,\prod_{r=0}^{k-1} \mJ_{t^{*k}-r} \, \, \frac{\partial \vz_{t_{k\tilde{n}}}}{\partial \theta}\, + \,  \frac{\partial^{+} \vz_{t_{k(\tilde{n}+1)}}}{\partial \theta}.
\end{align}
On the other hand, $\, \lim_{\bar{n} \to \infty}\frac{\partial \vz_{t_{k(\tilde{n}+1)}}}{\partial \theta} \, =\, \lim_{\bar{n} \to \infty}\frac{\partial \vz_{t_{k\tilde{n}}}}{\partial \theta} \,$, which results in
\begin{align}\nonumber
lim_{\bar{n} \to \infty}\frac{\partial \vz_{t_{k(\tilde{n}+1)}}}{\partial \theta}  
&\, = \, \bigg(\mI-\prod_{r=0}^{k-1} \mJ_{t^{*k}-r} \bigg)^{-1} \, lim_{\bar{n} \to \infty}\frac{\partial^{+} \vz_{t_{k(\tilde{n}+1)}}}{\partial \theta}
\\[1ex]\label{}
&\, = \, \frac{adj \bigg(\mI-\prod_{r=0}^{k-1} \mJ_{t^{*k}-r} \bigg)}{P_{\prod_{r=0}^{k-1} \mJ_{t^{*k}-r}}(1)}\,  lim_{\bar{n} \to \infty}\frac{\partial^{+} \vz_{t_{k(\tilde{n}+1)}}}{\partial \theta}.  \end{align}
This means as $\, \bar{n} \to \infty$, for any orbit converging to $\Gamma_k$ the norm of the loss gradient tends to infinity at $\theta=\theta_0$ which completes the proof.   \end{proof}
\subsubsection{Proof of theorem \ref{thm-2}}\label{app-proof-thm2}
\begin{proof}
Let $\Gamma_k$ be a $k$-cycle $(k\geq 1)$ of \eqref{eq-cx-plrnn2} defined by periodic points \eqref{eq-per}. Suppose further that $\Gamma_k$ undergoes a BCB for some parameter value $\theta=\theta_0$. Hence, one of its periodic points, e.g. $\,\vz_{t^{*k}} $, collides with one border. Therefore, $ z_{m \, t^{*k}} = 0$ for some $1 \leq m \leq M$ by the definition of discontinuity boundaries in \citep{mon1,mon2}. Similar to the proof of Theorem \ref{thm-1}, for $\boldsymbol{\theta }= \mA, \mW$ we have  
\begin{align}\label{eq-req}
\frac{\partial \vz_{t^{*k}-1}}{\partial \theta}  
\, = \, \frac{adj \bigg(\mI-\prod_{r=0}^{k-1} \mJ_{t^{*k}-1-r} \bigg)}{P_{\prod_{r=0}^{k-1} \mJ_{t^{*k}-1-r}}(1)}\,  \frac{\partial^{+} \vz_{t^{*k}-1}}{\partial \theta},  
\end{align}
in which 
\begin{align}\nonumber
 \frac{\partial^{+} \vz_{t^{*k}-1}}{\partial w_{n m}} & \, = \,  \boldsymbol{1}_{(n,m)}\, \mD_{\Omega(t^{*k})} \, \vz_{t^{*k}},
 \\[1ex]\label{eq-posha}
 \frac{\partial^{+} \vz_{t^{*k}-1}}{\partial a_{m m}} & \, = \, \boldsymbol{1}_{(m,m)} \, \vz_{t^{*k}},
\end{align}
where $\boldsymbol{1}_{(n,m)}$ is an $M \times M$ indicator matrix with a $1$ for the $(n, m)$'th entry and $0$ everywhere else. Since $z_{m \,t^{*k}} = 0$ at $\theta=\theta_0$, due to \eqref{eq-posha} $\frac{\partial^{+} \vz_{t^{*k}-1}}{\partial \theta}$ becomes the zero vector at $\theta=\theta_0$. Consequently, $\norm{\frac{\partial \vz_{t^{*k}-1}}{\partial \theta}}$ and so $\norm{\frac{\partial \mathcal{L}_t}{\partial \theta}}$  
 vanishes at $\theta=\theta_0$. Now 
it can be shown that at $\theta=\theta_0$ the loss gradient goes to zero for every $\vz_1 \in \mathcal{B}_{\Gamma_k}$ (the proof is similar to the last part of the proof of Theorem \ref{thm-1}).
\end{proof}
\subsubsection{Proof of corollary \ref{cor-1}}\label{app-proof-cor1}
\begin{proof}
For $M=2$, let $h_1 \neq 0$. Then the DFB curves of the fixed point $ \Gamma_1$ coincide with the BCB curves of the 2-cycle $\mathcal{O}_{\mathcal{R L}}$ of the form
\begin{align}    
\mathcal{F}_{1} = \xi^1_{\mathcal{R L}} =
\{ (h_1, h_2, a_{11}, a_{22})  |  1 + a_{11}+a_{22}+ a_{11} a_{22}  = 0 \}, 
\end{align}
or 
\begin{align}
\mathcal{F}_{2} = \xi^2_{\mathcal{R L}}= \{ (h_1, h_2, a_{11}, w_{11}, w_{21},  a_{22}) \big|   1+ a_{11} + w_{11} +a_{22}+ ( a_{11}+w_{11} )a_{22} = 0 \}.
\end{align}
For $M>2$, assume that $\Gamma_1 = \{ \vz_{1}^* \} $ is a fixed point of the system, i.e.
\begin{align}
\vz_{1}^* \, = \, (\mI - \mW_{\Omega(t_{1}^*)}  )^{-1} \, \vh = \frac{adj(\mI - \mW_{\Omega(t_{1}^*)})}{P_{\mI - \mW_{\Omega(t_{1}^*)}}(1)} \vh,
\end{align}
where $\, P_{\mI- \mW_{\Omega(t_{1}^*)}}(1) \,$ is the characteristic polynomial of $\, \mI - \mW_{\Omega(t_{1}^*)} \,$ at $1$. Let us denote the first row of  the adjoint matrix of $\mI - \mW_{\Omega(t_{1}^*)}$ by $\, adj(\mI - \mW_{\Omega(t_{1}^*)})_1$. If $adj(\mI - \mW_{\Omega(t_{1}^*)})_1 \, \vh  \neq 0$, then we can analogously demonstrate that the DFB curves of the fixed point align with the BCB curves of the $2$-cycles. This implies that, in accordance with Theorem \ref{thm-2}, DFBs of fixed points will also lead to vanishing gradients in the loss function.
\end{proof}
\subsubsection{Convergence of SCYFI}\label{SCYFI-convergence-proof}
To ensure that SCYFI almost surely converges, we can simply choose the number of random initializations (i.e., $N_{out}$ in algorithm \ref{alg: cycle finder}) large enough such that every linear subregion will have been sampled with probability almost $1$.
More precisely, drawing uniformly from the $2^M$ different $\mD_{\Omega}$-matrices (linear subregions) for initialization, the probability that a particular subregion has not been drawn after $r$ repetitions is $p=(1-\frac{1}{2^M})^r$. Hence, in order to ensure that all $2^M$ subregions have been visited with probability $1-\epsilon$ we need $r \geq \lceil \frac{\ln(\epsilon)}{\ln(1-\frac{1}{2^M})}\rceil$ iterations. Choosing $N_{out} = r$, we can thus ensure that SCYFI was initialized in each subregion with probability almost $1$, and thus, in the limit, will have probed all subregions for dynamical objects. 
This argument extends to $k$-cycles by replacing $2^M$ by $2^{kM}$ above (strictly, 
a more precise bound for $k \geq 2$ is given by $2^{M(k-1)} \times (2^{M} -1 ) =2^{Mk}-2^{M(k-1)}$, due to the fact that the PLRNN \eqref{eq-cx-plrnn2} is a linear map in each subregion and, hence, cannot have any $k$-cycles with all periodic points in only one subregion).

\subsubsection{Proof of theorem \ref{thm-scyfi}}\label{app-proof-thm3}

\begin{proof}
We examine the convergence and scaling behavior of SCYFI for fixed points. A similar argument applies to $k$-cycles where $k > 1$.

Let $\vz_{1}^*$ be a fixed point of the system, i.e. 
\begin{align}
\vz_{1}^* \, = \, \big(\mI - \mW_{\Omega(t_{1}^*)}  \big)^{-1} \, \vh. 
\end{align}
$\vz_{1}^*$ is a true fixed point iff 
\begin{align}
(d_m(t_{1}^*) - a) \cdot z_{m, t_{1}^*} \, > \, 0, \hspace{1cm} \forall \, m \in \{1, 2, \cdots, M \},
\end{align}
where $\mD_{\Omega(t_{1}^*)} = diag(d_1(t_{1}^*), d_2(t_{1}^*), \cdots, d_M(t_{1}^*))$ and $0<a<1$ is a positive real constant. 

For examining SCYFI's efficiency, here we focus on two scenarios that impose specific constraints on parameters $\boldsymbol{\theta}$; other cases remain to be investigated. \\[1ex]
\textbf{Case (I)} : Let $\mR$ be a randomly generated matrix with uniformly distributed entries in the interval $[0,1)$, and $\vh \neq 0 \,$ be a random vector with all its components being non-negative. For an arbitrary $\,\epsilon >0$, we set 
\begin{align}\nonumber
\mA& \, = \, \frac{1}{2+\norm{\mR}+\epsilon} \, \text{diag}(\mR),
\\[1ex]\label{}
\mW & \, = \, \frac{1}{2+\norm{\mR}+\epsilon} \, \Big(\mR-\text{diag}(\mR) \Big).
\end{align}
Then
\begin{align}\nonumber
\norm{\mA} & \, = \, \frac{\norm{\text{diag}(\mR)} }{2+\norm{\mR}+\epsilon}  \, < \, \frac{1}{2+\norm{\mR}+\epsilon},
\\[1ex]\label{}
\norm{\mW} &\, = \, \frac{\norm{\mR-\text{diag}(\mR)}}{2+\norm{\mR}+\epsilon}  \, \leq \, \frac{\norm{\mR}+ \norm{\text{diag}(\mR)} }{2+\norm{\mR}+\epsilon} \, < \, \frac{1+\norm{\mR}}{2+\norm{\mR}+\epsilon}, 
\end{align}
and so $\norm{\mA} +\norm{\mW }  < 1$. Therefore
 \begin{align}\label{A+W}
 \forall t \, \, \, \norm{\mW_{\Omega(t)}}= \norm{\mA +\mW \mD_{\Omega(t)}} \leq \norm{A}+\norm{W}\norm{\mD_{\Omega(t)}} \leq \norm{A}+\norm{W} < 1,   %
 \end{align}
and so
 \begin{align}\label{eq-leq}
 \forall t \, \, \, \, \,  \rho(\mW_{\Omega(t)}) \leq \norm{\mW_{\Omega(t)}} < 1.   %
 \end{align}
In this case, for any $n \in \mathbb{N}$, we also have
\begin{align}\label{}
 \norm{\prod_{i=1}^{n} \mW_{\Omega(t_{i})}} \, \leq \, \prod_{i=1}^{n} \norm{\mW_{\Omega(t_{i})}} \, \leq \, \Big(\norm{A}+\norm{W}\Big)^n  \, < \, 1.
\end{align}
This 
ensures the stability of all fixed points and $k$-cycles of the system. \\

According to \eqref{eq-leq}, we have 
\begin{align}\label{eq-makus}
 \big(I - \mW_{\Omega(t_{1}^*)} \big)^{-1} = \sum_{n=0}^{\infty} \mW_{\Omega(t_{1}^*)}^n \, = \, \mI + \mW_{\Omega(t_{1}^*)} + \mW_{\Omega(t_{1}^*)}^2 + \cdots.  
\end{align}
Hence, all the elements of $\big(I - \mW_{\Omega(t_{1}^*)} \big)^{-1}$ are positive, and so  $\, z_{m, t_{1}^*} > 0 \,$ for every $t_{1}^*$. This implies that all true and virtual fixed points exist within a singular sub-region. Additionally, only one fixed point is true, while all the other fixed points are virtual. 
\\[1ex]
\textbf{Case (II)} : Let $\vh  = (h_1, h_2, \cdots, h_M)\tran$ be a random vector with all $h_m \,$ uniformly distributed in $(0, 1]$ and 
\begin{align}\nonumber
 \beta_{min} &\, = \, \min \big\{ h_{m} \, : \, h_{m} \in \vh, \, 1 \leq m \leq M  \big\} > 0, \hspace{1cm}  0 < \beta_{min}  \leq 1,
 \\[1ex]
 \beta_{max} &\, = \, \max \big\{ h_{m} \, : \, h_{m} \in \vh, \, 1 \leq m \leq M  \big\} > 0, \hspace{1cm}  0 < \beta_{max}  \leq 1.
\end{align}
Assume further that $\mR_1$ is a randomly generated matrix with uniformly distributed entries in the interval $(-1, 0]$, and for $M \geq 2$
\begin{align}\label{}
\mW \, = \, \frac{\beta_{min}}{M+\norm{R_1}+\epsilon} \, \Big(\mR_1-\text{diag}(\mR_1) \Big).
\end{align}
Consider
\begin{align}\label{}
 \alpha_{max} \, = \, \max \big\{ |w_{ij}| \, : \, w_{ij} \in \mW  \big\}, \hspace{1cm}  0 \leq  \alpha_{max} < \frac{\beta_{min}}{M+\norm{R}+\epsilon}
\end{align}
and $S \subset \{1, 2, \cdots, M \} = I \, $ such that $K = 2^{M-card(S)} \ll 2^M$. Suppose that $\mR_2 = diag (r_1, \cdots, r_M)$ is a randomly chosen diagonal matrix with $r_m$ uniformly distributed in $(-1,1)$ for $m \in I \setminus S$, and the other elements ($m \in S$) uniformly distributed in $(r^*-1, 0)$ where $r^* = \frac{(M-1)\alpha_{max}\, \beta_{max}}{ \beta_{min}}$. Since
\begin{align}\label{}
0 \leq \frac{(M-1)\alpha_{max}\, \beta_{max}}{ \beta_{min}} \, < \, \frac{(M-1)\, \beta_{max}}{ M+\norm{R}+\epsilon} \, \leq \, \frac{(M-1)}{ M+\norm{R}+\epsilon} \, < \, \frac{(M-1)}{ M} < 1,
\end{align}
so $ \, -1 \leq r^*-1 < 0$. \\

If 
\begin{align}\label{}
\mA& \, = \, \frac{1}{2+\norm{R_1}+\epsilon} \, \mR_2,
\end{align}
then
\begin{align}\nonumber
\norm{\mA} & \, = \, \frac{\norm{\mR_2} }{2+\norm{\mR_1}+\epsilon}  \, < \, \frac{1}{2+\norm{\mR_1}+\epsilon},
\\[1ex]\label{}
\norm{\mW} &\, = \, \frac{\beta_{min}\norm{\mR_1-\text{diag}(\mR_1)}}{M+\norm{\mR_1}+\epsilon}  \, \leq \, \frac{\norm{\mR_1}+ \norm{\text{diag}(\mR_1)} }{M+\norm{\mR_1}+\epsilon} \, < \, \frac{1+\norm{\mR_1}}{2 +\norm{\mR_1}+\epsilon}, 
\end{align}
which implies $\norm{\mA} + \norm{\mW} < 1$. 
We set $\,\epsilon >0$ large enough to satisfy the condition
\begin{align}\label{}
 \big(I - \mW_{\Omega(t_{1}^*)} \big)^{-1} = \sum_{n=0}^{\infty} \mW_{\Omega(t_{1}^*)}^n  \, \approx \, \mI + \mW_{\Omega(t_{1}^*)} \hspace{1cm} \forall \, t_{1}^*.  
\end{align}
On the other hand, for any $t$ we have
\begin{align}
 \mW_{\Omega(t)}\, = \, \mA + \mW \mD_{\Omega(t)} = \begin{pmatrix}
a_{11} & w_{12}d_2 (t) & w_{13}d_3(t) & \cdots & w_{1M}d_M(t) \\
w_{21}d_1(t)  & a_{22} & w_{23}d_3(t)  & \cdots & w_{2M}d_M(t)  \\
w_{31}d_1(t)  & w_{32}d_2(t)  & a_{33} & \cdots & w_{3M}d_M(t)  \\
\vdots & \vdots & \vdots & \ddots & \vdots \\
w_{M1}d_1(t)  & w_{M2}d_2(t)  & w_{M3}d_3(t)  & \cdots & a_{MM} \\
\end{pmatrix}.    
\end{align}
Hence 
\begin{align}
z_{m, t_{1}^*} \, = \, (1 + a_{mm}) \, h_m + \underset{j \neq m}{\sum_{j=1}^{M}} w_{mj} \, d_j(t_{1}^*) h_j \, = \, (1 + a_{mm}) \, h_m - \underset{j \neq m}{\sum_{j=1}^{M}} | w_{mj} | \, d_j(t_{1}^*) h_j.
\end{align}
Since for every $t_{1}^*$ 
\begin{align}
 \underset{j \neq m}{\sum_{j=1}^{M}} | w_{mj} | \, d_j(t_{1}^*) h_j \, \leq \, \underset{j \neq m}{\sum_{j=1}^{M}} | w_{mj} | \,  h_j,
\end{align}
so 
\begin{align}\label{eq-boz}
z_{m, t_{1}^*} \, \geq \, (1 + a_{mm} )\, h_m - \underset{j \neq m}{\sum_{j=1}^{M}} | w_{mj} | \,  h_j  \hspace{1cm} \forall \, m \in I.
\end{align}
Moreover $\, a_{ss} \in (r^*-1, 0)$, for every $s \in S$, and thus 
\begin{align}\label{eq-alak}
a_{ss}+ 1 \, > \, \frac{(M-1) \alpha_{max} \beta_{max}}{\beta_{min}}\, = \, \frac{\underset{j \neq s}{\sum_{j=1}^{M}} \alpha_{max} \beta_{max}}{\beta_{min}} \, \geq \, \frac{\underset{j \neq s}{\sum_{j=1}^{M}} | w_{sj} | \,  h_j}{h_s}.
\end{align}
Therefore, due to \eqref{eq-boz} and \eqref{eq-alak}, $\, z_{s, t_{1}^*} >0\,$ for every $t_{1}^*$ and $s \in S$. This means that all true and virtual fixed points only exist within a relatively small number of sub-regions, denoted
as $\, K = 2^{M-card(S)} \ll 2^M$. Given our specific initalization of $\boldsymbol{\theta}$, in both cases $\textbf{(I)}$ and $\textbf{(II)}$ there is a set of $K$ different sub-regions, each associated with a unique $\mD_{\Omega(t)}$ matrix. We refer to the entire set of these matrices as  
\begin{align}
\mathcal{\mD}_K \, = \, \{\mD_1,...,\mD_K\}.  
\end{align}
SCYFI, by its definition, only moves within the sub-regions that have virtual and true fixed points, continuing until it discovers a true fixed point (or gets stuck in a virtual cycle). Thus, it can iterate between $J \leq K$ sub-regions 
\begin{align}
\mathcal{\mD}_J=\{\mD_1,...,\mD_J\} \subseteq \mathcal{\mD}_{K},
\end{align}
or within the set of virtual fixed points
\begin{align}
 \mathcal{Z}_L=\{\vz_1,..., \vz_L \}.  
\end{align}
In case $\textbf{(I)}$, there is only one true fixed point. Since all virtual fixed points are located within the same single sub-region, SCYFI's initialization will naturally position it within the correct linear region, requiring no more than $1$ iteration. Hence, it needs at most $2$ iterations to find the true fixed point. Consequently, SCYFI's scaling is constant.
\\[2ex]
For case $\textbf{(II)}$, if we suppose that SCYFI follows the virtual/true fixed point structure of the underlying system in these $K$ sub-regions, the necessity for the probability of discovering the fixed point to be close to 1, specifically $1- \epsilon$, is to have
\begin{align}
  N \geq \lceil \frac{\ln(\epsilon)}{\ln(1-\frac{1}{2^{M-card(S)}})} \rceil \, = \,  \lceil\frac{\ln(\epsilon)}{\ln(1-\frac{1}{K})} \rceil,  
\end{align}
iterations. Since $ 1 \leq  card(S) \leq M-1$, so $K \geq 2$ and $ \, \ln(1-\frac{1}{K})\approx \frac{-1}{K} \,$. For $\epsilon^* \geq \epsilon$, let $N = \lceil\frac{\ln(\epsilon^*)}{\ln(1-\frac{1}{K})} \rceil \geq \lceil\frac{\ln(\epsilon)}{\ln(1-\frac{1}{K})} \rceil$, then 
\begin{align}
N \, = \,\lceil\frac{\ln(\epsilon^*)}{\ln(1-\frac{1}{K})} \rceil  \, \leq \,  \frac{\ln(\epsilon^*)}{\ln(1-\frac{1}{K})} \approx \ln(\frac{1}{\epsilon^*}) K \, := \, c \, K,
\end{align}
which implies the number of iterations is bounded from above. If, for every $M$, we choose $K$ small enough, then the upper bound will stay within a linear growth.
\end{proof}
\subsubsection{Proof of theorem \ref{prop-GTF}}\label{app-proof-propGTF}
In GTF \citep{hess}, during training RNN latent states are replaced by a weighted sum of forward propagated states $\vz_t=F_{\boldsymbol\theta}(\vz_{t-1})$ and data-inferred states $\bar{\vz}_t=G_{\boldsymbol\phi}^{-1}(\vx_{t})$ (obtained by inversion of the decoder model $G_{\boldsymbol\phi}$): 
\begin{align}\label{e_GTF}
    \tilde{\vz}_t := (1-\alpha)\vz_t + \alpha \bar{\vz}_t, 
\end{align}
where $0 \leq \alpha \leq 1$ is the GTF parameter (usually adaptively regulated in training, see \citep{hess}). This leads to the following factorization of Jacobians in PLRNN \eqref{eq-cx-plrnn2} training: 
\begin{align}\label{eq:GTF_jacobian}
    \bm{J}_t^{GTF} \, = \, \frac{\partial{\bm{z}_t}}{\partial{\bm{z}_{t-1}}} = \frac{\partial{\bm{z}_t}}{\partial{\tilde{\vz}_{t-1}}}\frac{\partial{\tilde{\vz}_{t-1}}}{\partial{\bm{z}_{t-1}}}
    = \frac{\partial{\mF_\vtheta(\tilde{\vz}_{t-1})}}{\partial{\tilde{\vz}_{t-1}}} \frac{\partial{\tilde{\vz}}_{t-1}}{\partial{\bm{z}_{t-1}}}
    \, = \, (1-\alpha) \tilde{\mJ}_t \, = \,  (1-\alpha) \mW_{\Omega(t)}.
\end{align}
\begin{proof}
$(i)\,$ Since $\norm{\mA} + \norm{\mW} \, \leq \, 1$, we have
 \begin{align}\label{A+W}
 \forall t \, \, \, \norm{\mW_{\Omega(t)}}= \norm{\mA +\mW \mD_{\Omega(t)}} \leq \norm{A}+\norm{W}\norm{\mD_{\Omega(t)}} \leq \norm{A}+\norm{W} \leq 1,   %
 \end{align}
and so
 \begin{align}\label{}
 \forall t \, \, \, \rho(\mW_{\Omega(t)}) \leq \norm{\mW_{\Omega(t)}} \leq 1,   %
 \end{align}
where $\rho$ denotes the spectral radius of a matrix. In this case, for any $n \in \mathbb{N}$, we also have
\begin{align}\label{}
 \rho(\prod_{i=1}^{n} \mW_{\Omega(t_{i})}) \, \leq \, \norm{\prod_{i=1}^{n} \mW_{\Omega(t_{i})}} \, \leq \, \prod_{i=1}^{n} \norm{\mW_{\Omega(t_{i})}} \, \leq \, \Big(\norm{A}+\norm{W}\Big)^n \, \leq  \,  1.
\end{align}
Now, for any $0 < \alpha < 1$, the product of Jacobians under GTF is
\begin{align}\label{}
    \prod_{i=1}^{n}\bm{J}^{GTF}_{t_i} \, = \, (1-\alpha)^{n}\prod_{i=1}^{n} \tilde{\mJ}_{t_i} \, = \, (1-\alpha)^{n}\prod_{i=1}^{n}\mW_{\Omega(t_i)},
\end{align}
and
\begin{align}\nonumber
   \rho\Big( \prod_{i=1}^{n}\bm{J}^{GTF}_{t_i}\Big)  & \, = \,  \rho\Big((1-\alpha)^{n}\prod_{i=1}^{n}\mW_{\Omega(t_i)} \Big) \, = \, (1-\alpha)^{n} \rho\Big(\prod_{i=1}^{n}\mW_{\Omega(t_i)}\Big)
  \\[1ex]\label{}
   &\, \leq \, (1-\alpha)^{n} \, < \, 1.
\end{align}
Hence $ \rho\Big( \prod_{i=1}^{n}\bm{J}^{GTF}_{t_i}\Big) < 1 \,$ which implies for any $n \in \mathbb{N}$ and $0 < \alpha < 1$, the product $\, \prod_{i=1}^{n}\bm{J}^{GTF}_{t_i} \,$ has no eigenvalue equal to $1$ and so no DTB can occur (see definition of DTB in sect. \ref{sec:bif}). \\[3ex]
$(ii)\,$ Let $\norm{\mA} + \norm{\mW} \, = \, r \, > \, 1$, then for any $n \in \mathbb{N}$ we have 
\begin{align}
  \rho\Big( \prod_{i=1}^{n}\tilde{\mJ}_{t_i}\Big) \leq r^n,   
\end{align}
and thus
\begin{align}\label{eq-abo}
   \rho\Big( \prod_{i=1}^{n}\bm{J}^{GTF}_{t_i}\Big)  \, = \, (1-\alpha)^{n} \rho\Big(\prod_{i=1}^{n}\mW_{\Omega(t_i)}\Big)
\, \leq \, [(1-\alpha) \, r]^n.
\end{align}
Since $\, 0 < 1 - \frac{1}{r} < 1$, inserting $ \, 1 - \frac{1}{r}\, < \, \alpha = \alpha^{*} < 1 $ into the r.h.s. of \eqref{eq-abo} again gives $ \rho\Big( \prod_{i=1}^{n}\bm{J}^{GTF}_{t_i}\Big) < 1 \,$ for any $n \in \mathbb{N}$, implying that no DTB can occur.
\end{proof}
\newpage
\subsection{Additional results}\label{app: BN}
  \begin{figure}[!h]
    \centering
    \includegraphics[scale=0.35]{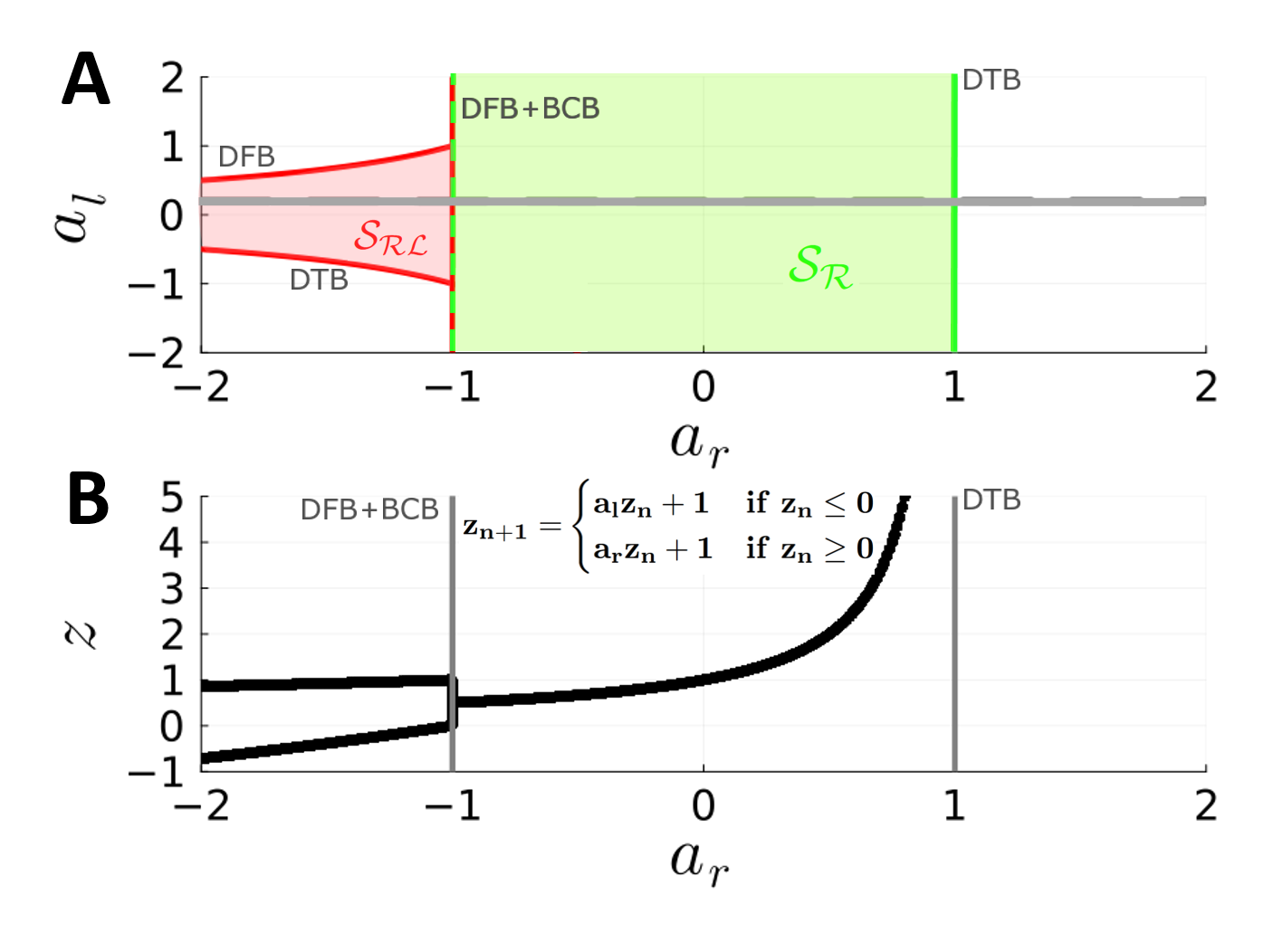}
    
     \vspace{-.2cm}
     \caption{A)  Analytically calculated stability regions for a $2$-cycle $\mathcal{S}_{\mathcal{R L}}$ (red), and a fixed point $\mathcal{S}_{\mathcal{R}}$ (green) for the $1d$ skew tent map, as defined in the figure,
    in the parameter plane given by $(a_{r},a_{l})$. 
    B) Bifurcation diagram along the cross section indicated by the gray line in A, showing a BCB and DFB occurring simultaneously at $a_r=-1$ and a DTB occurring at $a_r=1$.  
     }\label{fig: simple}
     
 \end{figure}

\begin{figure*}[!h]
\centering    
\includegraphics[scale=0.25]{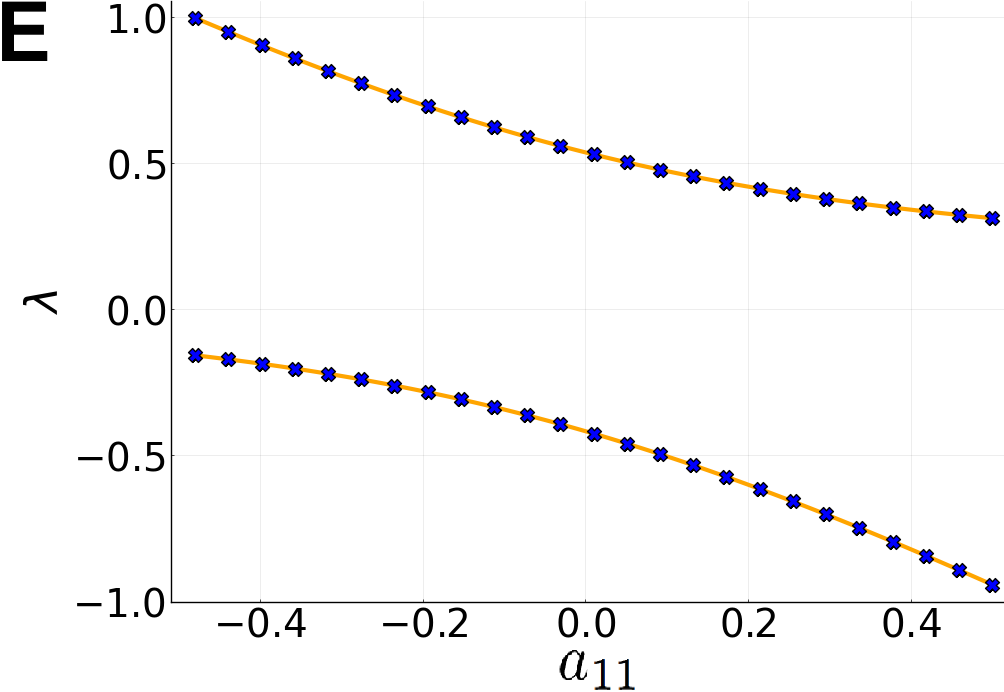}\label{eigvals}\includegraphics[scale=0.25]{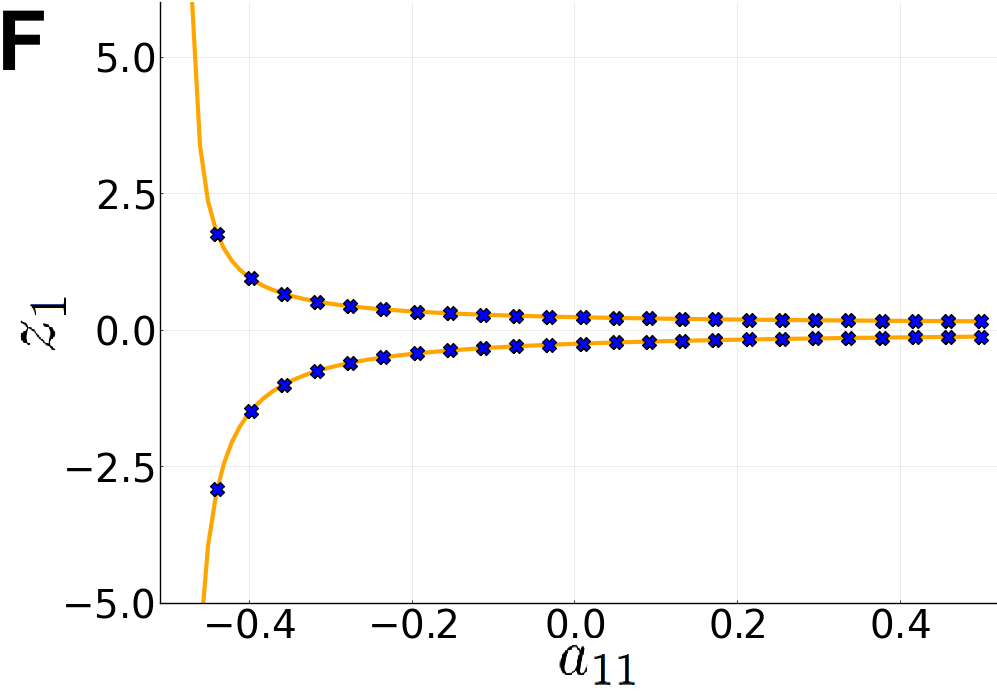}\label{states} 
    \caption{
    Results from SCYFI (blue) versus analytical results (orange) for the $2$-cycle in Fig.~\ref{fig:MAB_1} ($a_{11}+ w_{11}=-2$). Eigenvalues (left) and location in state space (right) for one of the cyclic points. This confirms that fixed point locations and eigenvalues computed in closed-form and via SCYFI exactly agree, as they should. 
    }\label{fig: exact_agreements}
\end{figure*}

\subsubsection{Scaling analysis}\label{app-scaling} 
Although the results presented in Fig.~\ref{fig: 3} suggest that SCYFI's scaling behavior is much better than theoretically expected, the fact that it is hard to obtain ground truth comparisons for high-dimensional systems (because of the combinatorial explosion) generally makes an extensive empirical analysis difficult. For Fig.~\ref{fig: 3} we therefore focused on scenarios for which we can also provide analytical curves for an exhaustive search strategy (eq. \eqref{eq-8}) and where we then either examined scaling with cycle-order $k$ for rather low-dimensional systems, or where we explicitly embedded fixed points to search for which allowed us to move to very high dimensionality $M$. In general we observed that the scaling behavior also depended on the PLRNN's matrix norms and the eigenspectrum of the embedded fixed points, so we constructed different scenarios where we varied these factors as well.

To construct a fairly well behaved case with low matrix norms, we randomly generated matrices $\mR$ with uniformly distributed entries in the interval $[-1,1]$ and then normalized by its maximal eigenvalue: We set PLRNN parameters $\mA=\frac{1}{\lambda_{max}}\text{diag}(\mR)$ and $\mW=\frac{1}{\lambda_{max}}(\mR-\text{diag}(\mR))$, and chose $\vh$ uniformly in the interval $[-50,50]$. 
For each of $10$ different such systems, we fixed the number of outer loops and inner loops ($N_{out}$, $N_{in}$ in Algorithm \ref{alg: cycle finder}) such that a fixed point would be detected in at least $50/75$ independent runs of the algorithm, and then determined the total number $n$ of linear regions (i.e., across all $N_{out}$ different initializations) the algorithm needed to cycle through to detect a stable fixed point. 
We also ensured that across all different runs this stable fixed point would be the same, in accordance with our assumptions. 
The resulting scaling behavior was well fitted by a doubly-logarithmic curve of the form $c_{1}\ln(\ln(M))+c_{2}$ ($R^2 \approx 0.913, p<10^{-4}$). This low-matrix norm scenario with a stable fixed point may be seen as a kind of lower bound on the scaling. 

To embed a specific fixed point $\vz^{*}$, we again start with a matrix $\mR$ as described above and take $\mA=\text{diag}(\mR)$ and $\mW=(\mR-\text{diag}(\mR))$. We then minimize 

\begin{equation}\label{eq_min}
\min_{\mA,\mW,\vh}\mid \vz^{*}-((\mA+\mW\cdot \mD_{\Omega(t^*)})\cdot \vz^{*}+\vh)\mid ,
\end{equation}
 
subject to $\mA$ staying diagonal and $\mW$ off-diagonal (we observed that adding a small Gaussian noise term to the right appearance of $\vz^{*}$ in eq. \ref{eq_min} which decayed proportionally to the learning rate improved numerical stability in the optimization process). The such constructed PLRNNs generally have several fixed points, but to compute $n$ we only search for the inserted fixed point $\vz^*$ (making eq. \eqref{eq-8} directly applicable). 
This way we produced $5-10$ systems, initializing $\mR$ with values in $[-0.2,0.2]$ (orange curve in Fig.~\ref{fig: 3}B) or $[-1.0,1.0]$ (blue curve in Fig.~\ref{fig: 3}B), thus effectively restricting the eigenspectrum of the fixed point as well as the matrix norms of the PLRNN to a certain range. 
However, since matrix norms may change during optimization, eq. \eqref{eq_min}, our procedure is not strictly guaranteed to produce eigenspectra and matrix norms within a desired range, which is crucial especially for the first scenario where we wanted to keep norms within a `typical range' (see below). So here, to ensure consistency among drawn systems and with our assumptions, 
the mean absolute eigenvalue of the embedded fixed points was kept close to $0.31 \pm 0.05$ and the mean maximum absolute eigenvalue close to $1.25 \pm 0.13$. For $>75\%$ of the resulting systems spectral matrix norms were within the range $[1.0,3.0]$. While this produced matrix spectra typical for trained PLRNNs ($>95\%$ out of $361$ PLRNNs trained on various benchmarks and data had spectral matrix norms within $[1.0,3.0]$), the second initialization range resulted in unnaturally large matrix norms and hence may be seen as providing a kind of upper bound on SCYFI's scaling behavior. 
Fig.~\ref{fig: scaling linear} shows the best case (left; purple curve in Fig.~\ref{fig: 3}B) and typical (right; orange curve in Fig.~\ref{fig: 3}B) scaling scenarios on linear scale to better expose the scaling behavior and function fits.

\begin{figure}[ht]
\centering 
\includegraphics[scale=0.16]{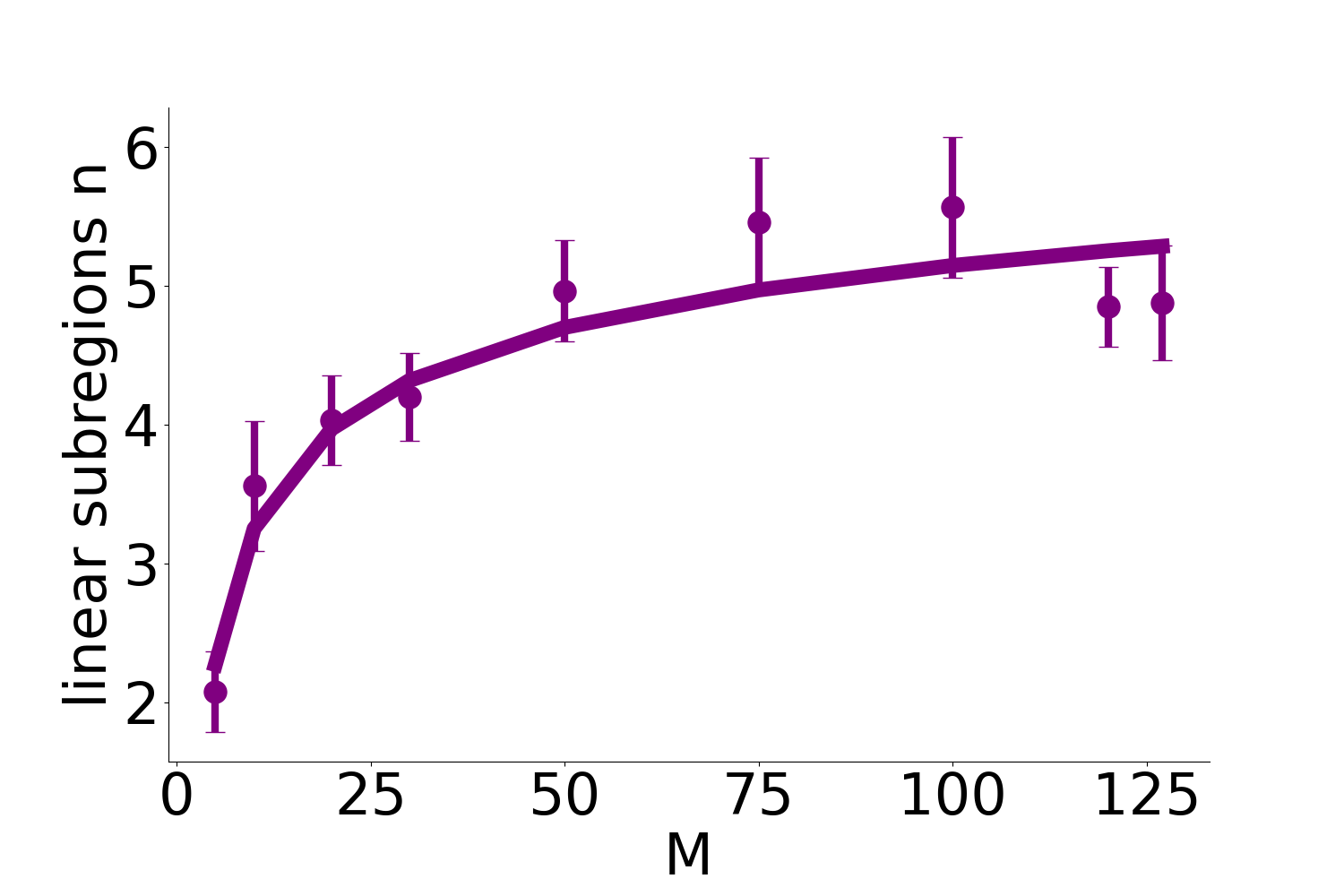}\includegraphics[scale=0.16]{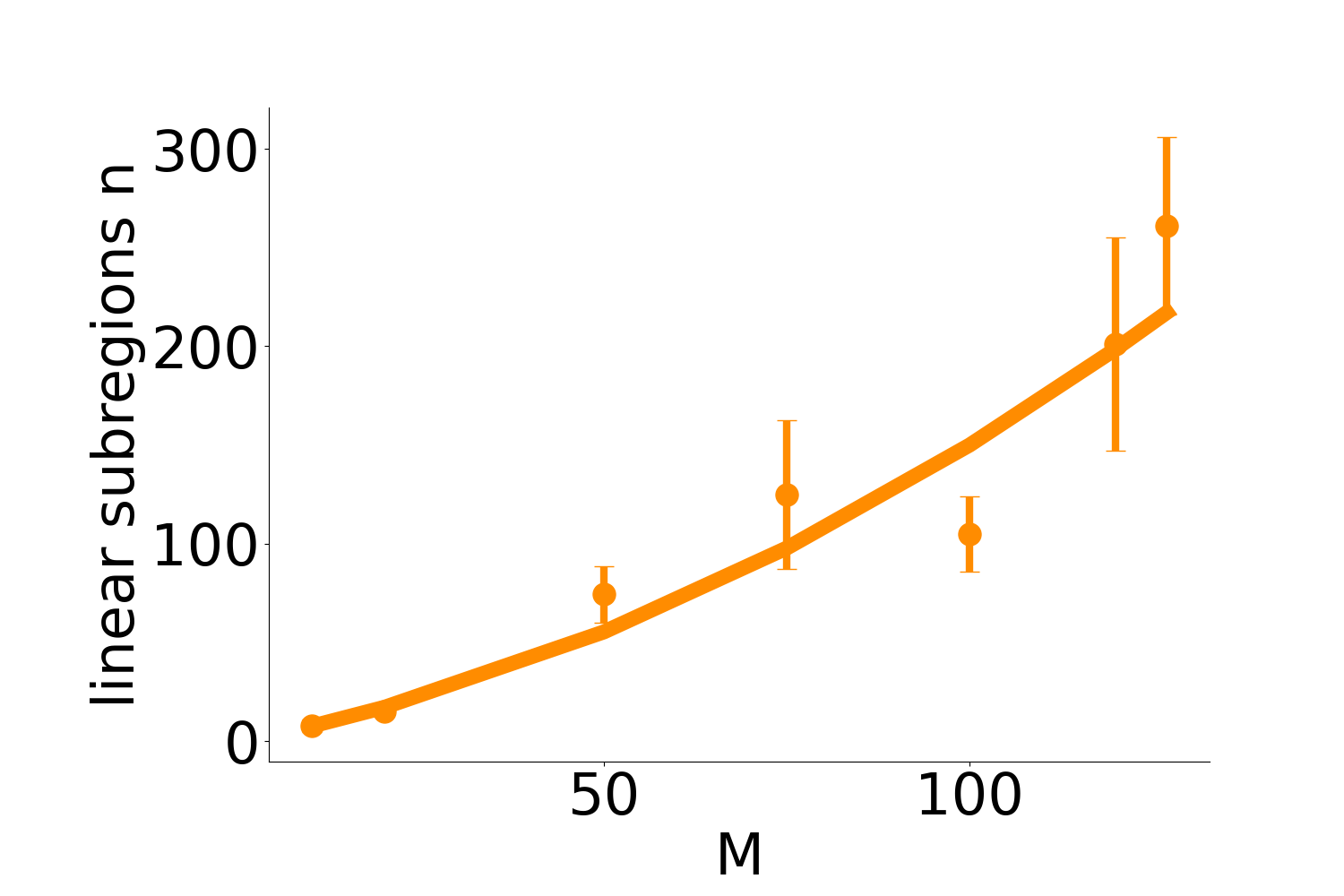}
\vspace{-.2cm}
 \caption{Zoom-ins on linear axes of the scenarios with doubly-logarithmic (left; $R^2 \approx 0.913, p<10^{-4}$) and quadratic (right; $R^2 \approx 0.925, p<10^{-5}$) scaling behaviors from Fig. \ref{fig: 3}B. 
  }\label{fig: scaling linear}
 \end{figure}

\newpage

\begin{figure}[!ht]
\centering
    \includegraphics[scale=0.35]{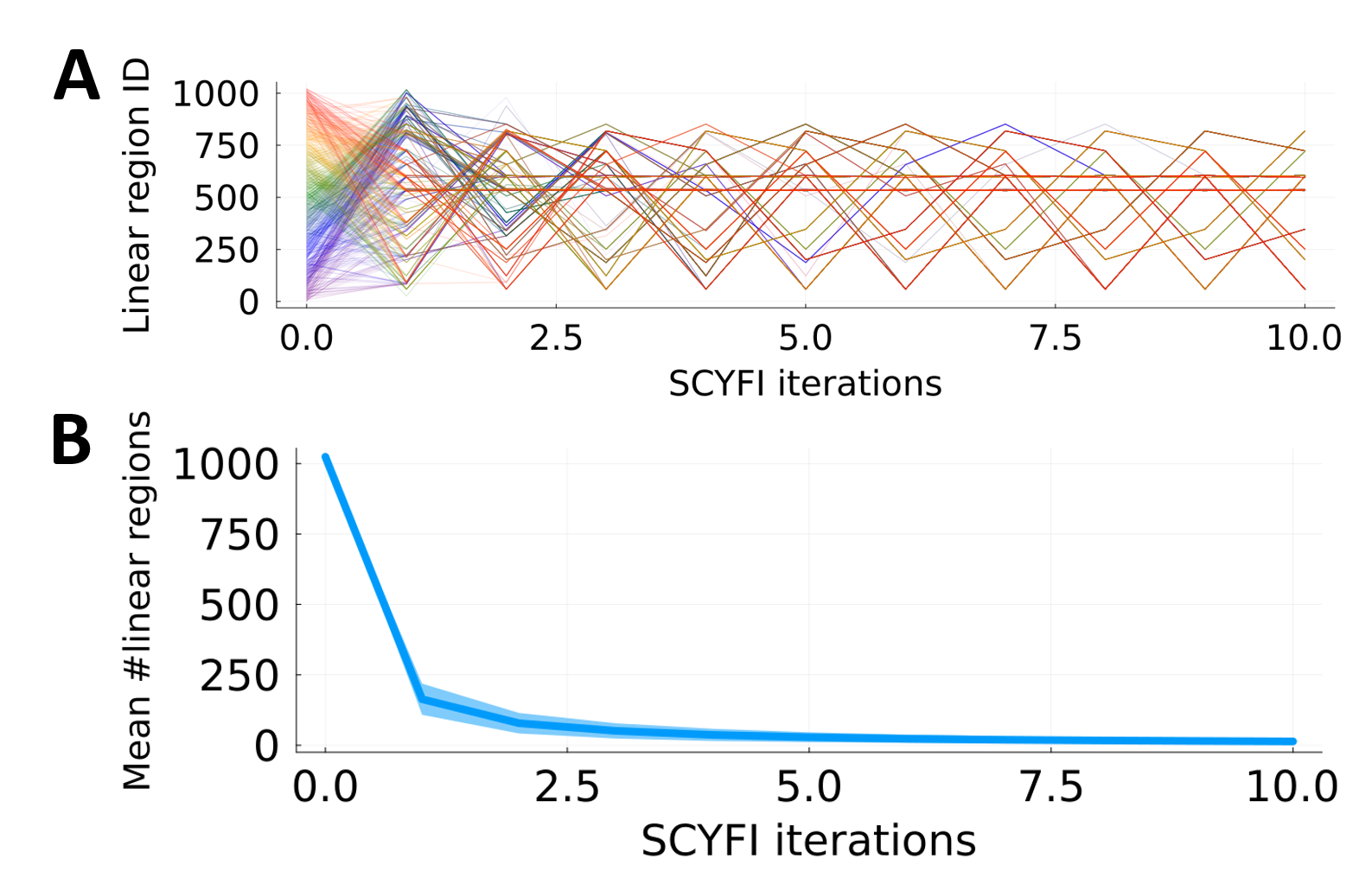}
     \vspace{-.2cm}
     \caption{A)  Initializing SCYFI in a wide array of different subregions (different colors), it quickly converges -- within just a few iterations -- to the same set of linear subregions which contain the dynamical objects of interest (fixed points in this case). 
     B) The number of different subregions explored by SCYFI when started from different initializations shrinks exponentially fast with the number of iterations. 
     Shown are means ($\pm$ stdv) from $10$ different systems with $M=10$. 
     }\label{fig: rebuttal1}
\end{figure}

\subsubsection{Loss jumps \& bifurcations in PLRNN training on biophysical model simulations}\label{app-BNlossjumps} 
Here we provide an additional illustration of how SCYFI can be used to dissect bifurcations in model training. For this, we produced time series of membrane voltage and a gating variable from a biophysical neuron model \citep{durs09}, on which we trained a dendPLRNN \citep{brenner_tractable_2022} using BPTT \citep{Rumelhart_1986_learning} with sparse teacher forcing (STF) \citep{mikhaeil2021difficulty}. 
The dendPLRNN used ($M=9$ latent states, $B=2$ bases) has $2^{18}$ different linear subregions and $|\boldsymbol\theta|=124$ parameters. Fig.~\ref{dendPLRNNplot}A gives the MSE loss as a function of training epoch (i.e., single SGD updates). The loss curve exhibits several steep jumps. Zooming into these points and examining the transitions in parameter space using SCYFI, we find they are indeed produced by bifurcations, with an example given in Fig.~\ref{dendPLRNNplot}B. As we had done for Fig.~\ref{shPLRNNplot} in the main text, since the state and parameter spaces are very high dimensional, for the bifurcation diagram in Fig.~\ref{dendPLRNNplot}B all extracted $k$-cycles ($k \geq 1$), including fixed points, were projected onto a line given by the PCA-derived maximum eigenvalue component, and plotted as a function of training epoch.
For the example in Fig.~\ref{dendPLRNNplot}B, we found that a BCB (Theorem \ref{thm-2}) underlies the transition in the qualitative dynamics of the PLRNN as training progresses. Fig.~\ref{dendPLRNNplot}C illustrates the dendPLRNN dynamics just before (left) and right after (right) the bifurcation point highlighted in Fig.~\ref{dendPLRNNplot}B, together with time series from the true system. 

More generally, whether a bifurcation associated with \textit{vanishing} gradients produces a loss jump depends on the system's dynamics before and after the bifurcation point. 
In the case of BCBs, one possible scenario involves a change in stability, as illustrated in Fig .~\ref{dendPLRNNplot}. During a BCB, a stable fixed point (or cycle) can 
loose stability as it passes through the bifurcation point. The maximum Lyapunov exponent of an unstable fixed point (or cycle) is positive, resulting in exploding gradients right after the bifurcation point \citep{mikhaeil2021difficulty}, and consequently to a very steep slope in the loss function near the bifurcation point as in Fig .~\ref{dendPLRNNplot}.

\begin{figure}[htb]
\centering
\includegraphics[scale=0.2]{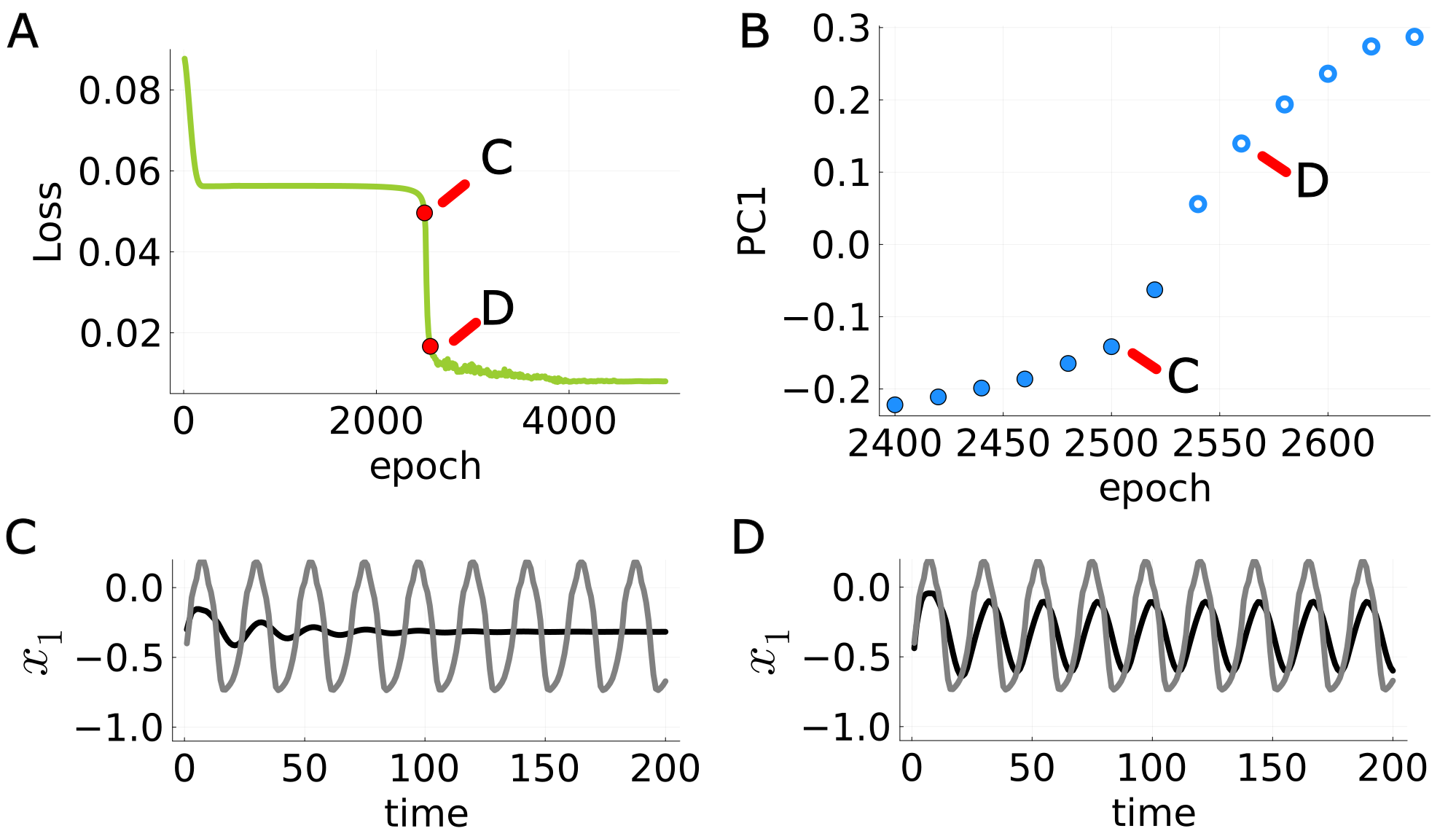}
\caption{A) Loss across training epochs for a dendPLRNN ($M=9$ states, $B=2$ bases) trained on a biophysical neuron model in a limit cycle (spiking) regime. Red dots indicate training epochs just before and after a loss jump for which time graphs are given in C and D. 
B) Bifurcation diagram of the dendPLRNN as a function of training epoch, with all state space locations of stable (filled circles) and unstable (open circles) objects projected onto the first principle component. The loss jump in A is accompanied by a bifurcation from fixed point to cyclic behavior. 
C) Time series of the voltage variable ($x_1$) of the biophysical model (gray) and that predicted by the dendPLRNN (black) before the bifurcation event indicated in B. 
D) Same directly after the bifurcation event. 
}\label{dendPLRNNplot}
\end{figure} 
 
  \begin{figure}[!ht]
    \centering
    \includegraphics[scale=0.35]{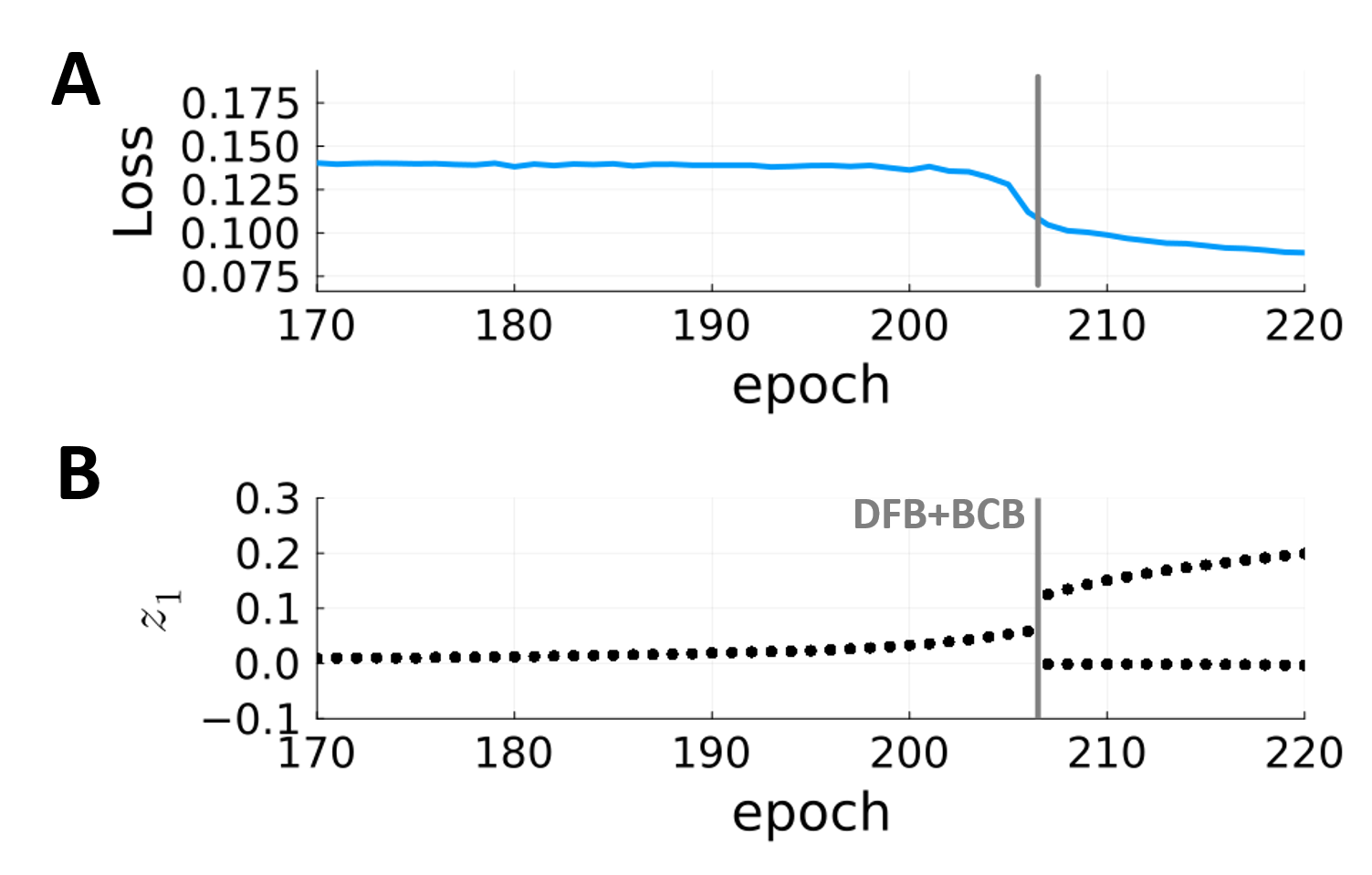}
    
     \vspace{-.2cm}
     \caption{Loss jump induced by a degenerate flip bifurcation (DFB). A) Loss during a training run of a PLRNN ($M=5$) on a 
     $2$-cycle. The gray line indicates a loss jump corresponding to a DFB and a simultaneously occurring border collision bifurcation (BCB). B) Bifurcation diagram of the PLRNN, with the DFB and BCB leading to the destruction of the fixed point and the emergence of a $2$-cycle as indicated by the gray line.
     }\label{fig: DFB}
     
 \end{figure}
\newpage
\subsubsection{Dealing with bifurcations in RNN training}\label{app-look_ahead}
Here are some additional thoughts on how RNN training algorithms could possibly be modified to deal with bifurcations. If the algorithm finds itself during training in a parameter regime which does not exhibit the right topological structure, it does not make sense to further dwell within that regime, or possibly anywhere within the vicinity of the current parameter estimate. Unlike standard SGD, the algorithm should therefore perhaps take large leaps in parameter space as soon as it gets stuck in a non-suitable dynamical regime. One possibility to implement this is through a `look-ahead’ mechanism that probes for topological properties of regions not visited so far. While fully fleshing out this idea is beyond this paper, a proof of concept that this may speed up convergence is provided in Fig. \ref{fig: ex}. Along similar lines, if we knew the model’s full bifurcation structure in parameter space ahead of time, we could simply pick a parameter set which corresponds to the right dynamics describing patterns in the data best. While of course it will in general not be feasible to chart the whole bifurcation structure before training (this is in a sense the whole point of a training algorithm), it may be possible to design smart initialization procedures based on this insight, e.g. probing topological regimes at randomly selected points in parameter space before starting training and initializing with parameters that produce a desired type of dynamics (e.g., cyclic behavior) to begin with.


  \begin{figure}[!ht]
    \centering
    \includegraphics[scale=0.5]{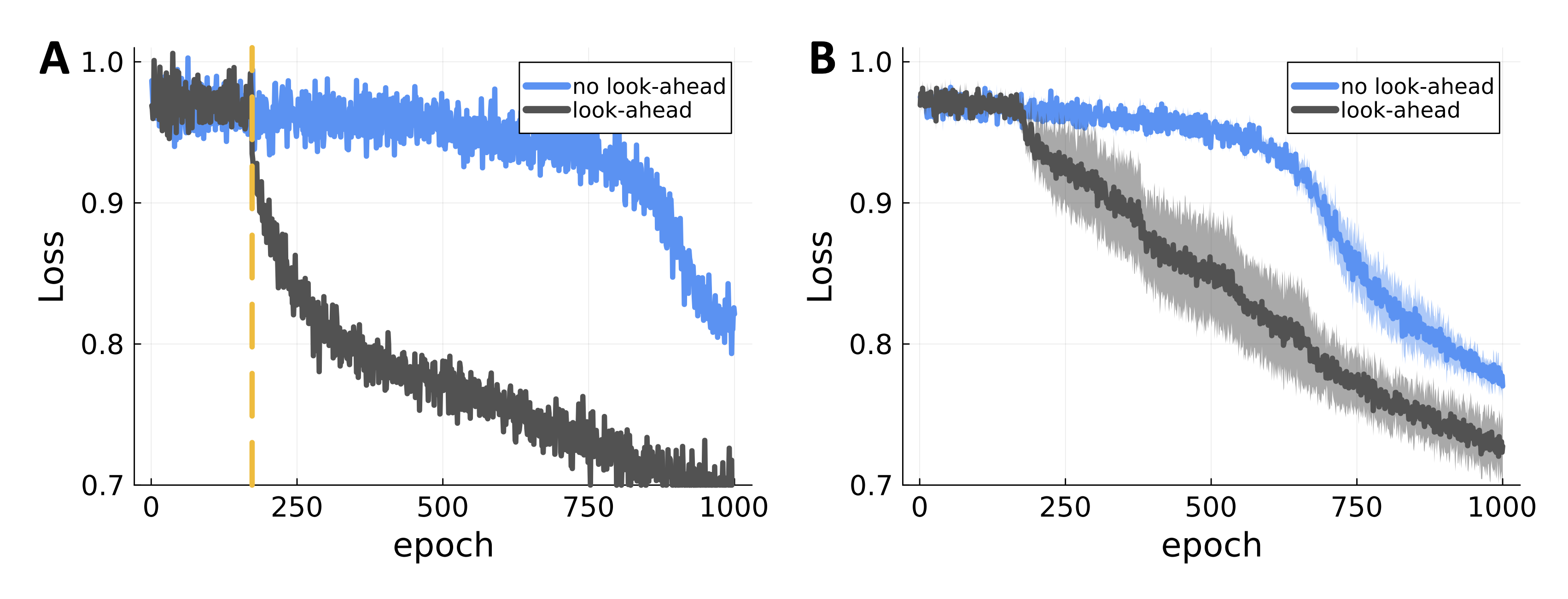}
     \caption{A) Example loss curves for RNNs trained on electrophysiological recordings by BPTT without (blue) vs. with (black) 'look-ahead' (the look-ahead function checks whether there would be a bifurcation away from a stable fixed point when taking $10 \times$ the current gradient step). Dashed yellow line indicates the epoch at which the look-ahead step was executed. 
     B) Average across 6 loss curves of RNNs trained without (blue) vs. with (black) look-ahead. Error bands = SEM.}\label{fig: ex}
     \end{figure}
\end{document}